\newtheorem{theorem}{Theorem}
\newtheorem{lemma}{Lemma}
\newtheorem{remark}{Remark}
\newcommand{\cA}{\mathcal{A}}
\newcommand{\cB}{\mathcal{B}}
\newcommand{\cH}{\mathcal{H}}
\newcommand{\cT}{\mathcal{T}}
\newcommand{\cK}{\mathcal{K}}
\newcommand{\cE}{\mathcal{E}}
\title{Cooperative Stochastic Multi-agent Multi-armed Bandits Robust to Adversarial Corruptions}
\author{%
  Junyan~Liu \\
  University California San Diego\\
  \texttt{jul037@ucsd.edu} \\
  \And
  Shuai~Li \\
  Shanghai Jiao Tong University\\
  \texttt{shuaili8@sjtu.edu.cn} \\
  \And
  Dapeng~Li \\
  Nanjing University of Posts and Telecommunications\\
  \texttt{dapengli@njupt.edu.cn} \\
}
\begin{document}

\maketitle

\begin{abstract}

We study the problem of stochastic bandits with adversarial corruptions in the cooperative multi-agent setting, where $V$ agents interact with a common $K$-armed bandit problem, and each pair of agents can communicate with each other to expedite the learning process.
In the problem, the rewards are independently sampled from distributions across all agents and rounds, but they may be corrupted by an adversary. Our goal is to minimize both the overall regret and communication cost across all agents. We first show that an additive term of corruption is unavoidable for any algorithm in this problem. Then, we propose a new algorithm that is agnostic to the level of corruption. Our algorithm not only achieves near-optimal regret in the stochastic setting, but also obtains a regret with an additive term of corruption in the corrupted setting, while maintaining efficient communication. The algorithm is also applicable for the single-agent corruption problem, and achieves a high probability regret that removes the multiplicative dependence of $K$ on corruption level. Our result of the single-agent case resolves an open question from \cite{COLT19Gupta}.

\end{abstract}

\section{Introduction}
\label{sec:intro}

The multi-armed bandit (MAB) problem is one of the most fundamental problems in online learning. Motivated by the emerging need for cooperative multi-agent learning in large-scale systems, researchers sought to expedite the learning process for MAB problem by exploiting the multi-agent cooperation. Examples include recommendation systems \cite{DMABsigmetrics}, forage task of robotics \cite{Robotics}, channel allocation in wireless networks \cite{DMABTIT1,DMABTIT2}. The learning process of those applications is inherently distributed, and due to geographical dispersion, the communication between each agent often comes at a heavy price.
Therefore, apart from the exploration-exploitation trade-off in the basic MAB problem, the multi-agent MAB problem needs to tackle with an additional trade-off between the regret and communication cost.

The multi-agent MAB problem is typically studied in the (i) \emph{stochastic setting} \cite{DistributedMABICML1,NIPSDistributedPure2,
FOCSDistributedPure1,DistributedMABICLR1,
DistributedMAB4,DistributedMAB3,DMABSto1,DMABSto2}, i.e., the rewards are always sampled from some underlying distributions, or (ii) \emph{adversarial setting} \cite{DistributedMABADVCOLT1,
DistributedADVMABNIPS1,
DistributedMAB2,DMABADVNIPS1,DMABADVNIPS2}, i.e., the rewards are always corrupted/manipulated by an adversary. However, these two extremes are not appropriate for many real-world situations, e.g., click fraud \cite{STOCMultiLayer}, fake reviewers \cite{FIXEDCORRUPTION}, and denial of service (DoS) in path routing \cite{TONbestOfBothWorlds} that are better considered as partially stochastic and partially adversarial. Toward this end, it is appealing to consider an environment that sits in between two extremes and can adaptively go from the stochastic setting to the adversarial one, with the increase of corruption. Moreover, in these situations, agents are more interested in the stochastic structure, e.g., click preference of users in the click fraud scenario, rather than the observed rewards, and thus the regret is better evaluated in the stochastic bandits framework.

This paper considers the stochastic multi-agent bandits with adversarial corruptions problem which bridges the gap
between two extremes. In such a setting, the rewards initially sampled from the underlying distributions might be corrupted by an adversary before revealed to agents. This problem is recently considered for the single-agent bandit learning \cite{STOCMultiLayer,COLT19Gupta,FIXEDCORRUPTION,
JMLRTsallis,DuelingCorruption,linearCorruption2,
linearCorruption1,LHPcorruption1}, but
little is known about the cooperative multi-agent case. Different from the corruption problem in the single-agent setup, the crux of adversarial corruption in the multi-agent case is to balance the communication-regret trade-off under adversarial corruptions. Our objective of this problem is threefold: (i) achieving regret close to that incurred by an optimal centralized algorithm running for
$VT$ rounds in the stochastic setting; (ii) maintaining efficient communication; (iii) robust to the corruption.

\subsection{Problem Setting} \label{problemsetup}
Consider a set of $V$ agents, represented by $ [V]=\{1,2,...,V\}$, is given access to a same bandit instance where all agents interact with $K$ arms represented by $[K]=\{1,2,...,K\}$. Assume that $V\leq K$ like previous works \cite{DistributedMAB4,PragnyaLevy,CongShenAISTATS,MyFairBandit}. Each arm $i \in [K]$ is associated with a reward whose distribution $D_i$ and mean $\mu_i$ are unknown to agents. All rewards are assumed to be supported on $[0,1]$. Specifically, the interaction between agents and the adversary is formulated as: at each round $t = 1,...,T$,
\begin{itemize}
\item  For each agent $v \in [V]$, a stochastic reward $r^{S}_{v,i}(t) \sim D_{i}$ for each arm is generated and they together form a vector $r_v^{S}(t) \in [0,1]^K$;
\item  The adversary observes $\{r_v^{S}(t) \}_{v=1}^V$, and then returns a corrupted reward vector $r_v(t) \in [0,1]^K$ with entry $r_{v,i}(t)$ for each agent $v$;
\item  Each agent $v$ simultaneously pulls an arm $i_v(t)=i$ and observes the corrupted reward $r_{i_v(t)}(t)$.
\end{itemize}

The agents are unaware of which arms other agents pull at the current round, but may know the observations of others from previous rounds with the cost of communication. The communication is allowed for any pair of agents, and each agent may broadcast a message with roughly $\widetilde{O}(K)$\footnote{We use $\widetilde{O}(\cdot)$ notation to suppress all polylogarithmic factors.} bits to other agent(s) at any round. We assume that the communication between agents has no delay and simultaneously pulling the same arm leads to no collision.

The corruption level across all agents and rounds is measured as
\begin{equation} \label{corruptionDef}
C= \sum_{t=1}^T   \sum_{v=1}^V \left\Vert r_v(t) - r_v^{S}(t)  \right\Vert_{\infty}  .
\end{equation}

The adversary is allowed to be \textit{adaptive}, in the sense that the corruption at round $t$ is determined as a function of all previous arm selections and corresponding rewards. The corruption in any round is independent of the choice of arms in the same round across all agents. It is noteworthy that the corruption level $C$ is a \textit{random variable} that depends on
the randomization of stochastic rewards and the choices of agents.

The goal of agents is to minimize the communication costs that measure the total bits transmitted between all agents and the pseudo-regret (regret, for short unless otherwise stated) defined as
\begin{equation}\label{reg}
R_T  =   \sum_{t=1}^T \sum_{v=1}^V  \left( \mu_{i^*}-\mu_{i_v(t)}\right),
\end{equation}
where $i^*$ is the optimal arm such that $i^* =\arg\max_{i \in [K]} \mu_i$, and $\mu_{i^*}$ is its corresponding mean. For any suboptimal arm $i \neq i^*$, the gap of arm $i$ is $\Delta_i=\mu_{i^*}-\mu_{i} \in [0,1]$ and $\Delta_{\min}=\min_{i:\Delta_i>0} \Delta_i$.


\subsection{Related Work}
\textbf{Multi-agent MAB.} The multi-agent MAB problelm is extensively studied in either stochastic setting or adversarial setting.

In the stochastic setting, one line of research \cite{NIPSDistributedPure2,DistributedMABInfocom1,
DistributedMABIJCAI1,DistributedMABICLR1,
FOCSDistributedPure1,DistributedMABCDC1,P2PECC1,
PeterAutom,DMABSto1,DMABSto2} study the \textit{cooperative} multi-agent model that allows agents to communication with other agents. Those models can be roughly categorized into two types. On the one hand, the authors of \cite{NIPSDistributedPure2,DistributedMABInfocom1,
DistributedMABIJCAI1,DistributedMABICLR1,
FOCSDistributedPure1} allows each pair of agents to communicate in a predefined round, and the regret is improved by some $V$-dependent factors, e.g., $1/V$ or $1/\sqrt{V}$. This type of model is most relevant to ours without corruption, i.e., $C=0$. On the other hand, the works \cite{DistributedMABCDC1,P2PECC1,
PeterAutom,DMABSto1,DMABSto2} restricts the communication over the network, wherein agents can only communicate with their neighbors encoded by an undirected communication graph with an unknown structure. Another branch of works \cite{MyFairBandit,DMABTIT1,DMABTIT2,
DMABINFOCOM3,
DistributedMAB3,DistributedMAB4} investigates a \textit{competitive} setting where the reward becomes zero or the reward is split when two or more agents select the same arm simultaneously. These competitive models are considered for some real-world applications wherein the communication between agents is almost impossible, and sometimes agents need to learn from the collision rather than communicating information. This setting essentially differs from ours which leverage the balance between the regret and the communication.

The adversarial multi-agent MAB problem is first examined in \cite{DistributedMABADVCOLT1} which considers some dishonest agents who do not follow the predefined protocol and may send fake observations. Subsequently, the authors of \cite{DistributedMAB2} study the adversarial setting for multiple agents connected by a general communication graph with potential delays, and prove an averaged regret over all agents.
Then, authors of \cite{DMABADVNIPS2} show that all agents can simultaneously maintain a low individual regret under the same setting of \cite{DistributedMAB2}. Moreover, the work \cite{KeepNeighbor} studies the adversarial multi-agent problem in an asynchronous setting where only some of the agents are active at each round, and agents share feedback in an undirected graph. Unlike our model, all of the above works pessimistically assume that the unknown environment is \textit{completely adversarial}, and thus achieve a $\tilde{O}(\sqrt{T})$ (ignoring factors of delay and network properties) bound even if the environment is completely stochastic. As a result, those cooperative (coop)-\textsc{Exp3} algorithms cannot obtain a better $O(\log T)$ bound when the environment is close to being stochastic (i.e., under moderate corruptions).
This paper allows the environment to adaptively transit between the stochastic one and the adversarial one, as the adversary to inject the different amount of corruption.

\textbf{Stochastic Bandits with Corruption.} 
Lykouris et al. \cite{STOCMultiLayer} first introduce the corrupted setting and incorporate the corruption level $C$ in the regret bound as $\widetilde{O}(\sum_{i \neq i^*}C/\Delta_i)$. Subsequently, the work \cite{COLT19Gupta} improves the regret bound of \cite{STOCMultiLayer} and derives a $O(KC)+\widetilde{O}(\sum_{i \neq i^*}1/\Delta_i)$ bound which enjoys an additive term of corruption $C$. The authors of \cite{FIXEDCORRUPTION} consider a probabilistic corruption model where the corruption in the current round is independent of previous corruption. Even though authors \cite{FIXEDCORRUPTION} obtain the $O(C)+\widetilde{O}(\sum_{i \neq i^*}1/\Delta_i)$ bound, but the algorithm of \cite{FIXEDCORRUPTION} is not robust to the attack policy which does not follow some specific probabilistic models. The works
\cite{JMLRTsallis,ImprovedTsallis} adapt online mirror descent (OMD) technique with Tsallis entropy to achieve the expected regret of $O(\mathbb{E}[C])+\widetilde{O}(\sum_{i \neq i^*}1/\Delta_i)$ in the corrupted MAB problem. Note that the works \cite{JMLRTsallis,ImprovedTsallis} use a different regret metric, and the additive term of $\mathbb{E}[C]$ is necessary if we convert their results to $\mathbb{E}[R_T]$ (see Appendix \ref{discussofTsallis}).
Besides, the corrupted setting is also considered for the linear bandits \cite{YingkaiLiLinearCorr,linearCorruption1,
linearCorruption2,LHPcorruption1}. The works \cite{YingkaiLiLinearCorr,linearCorruption2} incurs a $\widetilde{O}(C^2)$ or $\widetilde{O}(C/\Delta_{\min})$ dependence of corruption for regret when the adversary exactly knows the currently chosen arm, while \cite{LHPcorruption1} provides a high probability regret which has additive dependence of $O(C)$, when the adversary is unaware of the current choice. The result in \cite{LHPcorruption1} becomes
$O(C)+\widetilde{O}(K^{1.5}/\Delta_{\min})$ in the MAB setup.
All of the above works consider the corruption for single-agent MAB problem, little is known about the corruption in the multi-agent MAB problem. Further, extending existing single-agent algorithms toward multi-agent ones may fail to balance the communication-regret trade-off due to some technical challenges as discussed in Subsection \ref{Dischallenge}.

\subsection{Contributions} \label{sec:contributions}
Our main contributions are as follows:
\begin{itemize}
\item This paper, to the best of our knowledge, first studies the adversarial corruption problem for the multi-agent setting. For this problem, we present a lower bound $\mathbb{E}[R'_T]=\Omega (\mathbb{E}[C]-\sqrt{VT\log(VT)})$ and $\mathbb{E}[R_T]=\Omega (\mathbb{E}[C]+\mathbb{E}[R'_T])$ for any algorithm with $V$ agents (Theorem \ref{theorem2}) where $R'_T  =   \max_{i \in [K]} \sum_{t=1}^T \sum_{v=1}^V \left(r_{v,i}(t)- r_{i_v(t)}(t) \right)$ and $R_T$ in \eqref{reg}. Different from previous results of \cite{STOCMultiLayer,BestofBoth3} that only give the lower-bound of $\mathbb{E}[R'_T]$, we present the lower bounds of both $\mathbb{E}[R'_T]$ and $\mathbb{E}[R_T]$ for any bandit algorithm with $V \geq 1$. Our results show that the additive term $\mathbb{E}[C]$ is unavoidable for $\mathbb{E}[R_T]$ and $\mathbb{E}[R'_T]$.

\item We propose a new algorithm (Algorithm \ref{alg1}) with $V \geq 1$ agent(s), which is agnostic to the corruption level $C$. The proposed algorithm achieves a high probability regret $R_T=O(VC)+\widetilde{O}(K/\Delta_{\min})$ and expected regret $\mathbb{E}[R_T]=O(V\mathbb{E}[C])+\widetilde{O}(K/\Delta_{\min})$, while maintaining a $O(K\log T)$ communication cost (Theorem \ref{theorem1}). The proposed algorithm is applicable for the single-agent setting, but existing robust single-agent algorithms cannot be trivially extended to the multi-agent setting (see discussion \ref{Dischallenge}). Our algorithm achieves a high probability regret $O(C)+\widetilde{O}(K/\Delta_{\min})$ for $V=1$, which resolves an open question from \cite{COLT19Gupta}.

\item We further discuss the results evaluated by some other possible regret metrics.
We show that any algorithm in the corrupted problem will suffer $\mathbb{E}[R_T] = \Theta(   \mathbb{E}[R'_T]+\mathbb{E} \left[ C\right] )$ and $\mathbb{E}[R_T] = \Theta( \overline{R}_T  +\mathbb{E} \left[ C\right])$ (Theorem \ref{ConnectRegret}) where $\overline{R}_T =  \max_{i } \mathbb{E} [ \sum_{t=1}^T  \sum_{v=1}^V ( r_{v,i}(t)- r_{i_v(t)}(t)  ) ]$ is the (pseudo)-regret defined in the adversarial regime, which cannot be trivially converted to (pseudo)-regret $R_T$ defined in the stochastic regime, when $C \neq 0$. 
\end{itemize}

\section{Lower Bound}
\label{sec:lb}
In this section, we present the lower bound for this problem. 
Since it is challenging to directly lower-bound $R_T$ in the corrupted setting, we first lower-bound another regret notation and then bridge both metrics by considering their expectation.
Let define the regret as $R'_T$ which measures the difference between the reward from the arm that contributes to the maximum cumulative rewards, and the rewards observed by the agents in hindsight:
\begin{equation}\label{actualreg}
R'_T  =   \max_{i \in [K]} \sum_{t=1}^T \sum_{v=1}^V \left(r_{v,i}(t)- r_{i_v(t)}(t) \right),
\end{equation}

 Note that $R'_T$ can be either positive or negative, whereas the pseudo-regret $R_T$ is always non-negative because it measures the difference of means (i.e., the expectation of \textit{stochastic} rewards).

\begin{theorem} \textbf{(Lower bound)} \label{theorem2}
For any $V \geq 1$, if a bandit algorithm with $V$ agent(s) that guarantees a pseudo-regret bound as
$
 O(\log(VT)/\Delta  )
$
in stochastic environments with two arms and $|\mu_1-\mu_2|=\Delta \in (1/4,1/2)$ for sufficiently large $T$, then, there is an corrupted instance with corruption level $C=VT^\alpha$ for $\alpha \in (0,1)$ such that with at least a constant probability $R'_T=\Omega (C-\sqrt{VT\log(VT)} )$. Then, $\mathbb{E}[R'_T]=\Omega (\mathbb{E}[C]-\sqrt{VT\log(VT)} )$ and $ \mathbb{E}[R_T]=
\Omega (  \mathbb{E}[C] + \mathbb{E}[R'_T])
$.
\end{theorem}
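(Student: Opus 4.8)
The plan is to combine a standard two-arm stochastic lower bound argument with an explicit corruption strategy, and then pass to expectations to obtain the two claimed inequalities. First I would set up the instance: consider two arms with $|\mu_1 - \mu_2| = \Delta \in (1/4, 1/2)$, say the good arm is arm $1$. By assumption the algorithm guarantees pseudo-regret $O(\log(VT)/\Delta)$ on this stochastic instance, which by a Markov/concentration argument forces the total number of pulls of the suboptimal arm $2$ across all agents to be $\widetilde{O}(1/\Delta^2)$ with high probability (and in expectation). In particular, with at least constant probability the agents pull arm $1$ in all but a $\widetilde{O}(1/\Delta^2)$ fraction of the $VT$ agent-rounds.

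Next I would describe the adversary. The adversary targets the first $\Theta(T^\alpha)$ rounds: in each such round it corrupts the stochastic rewards of arm $1$ downward (pushing $r^S_{v,1}(t)$ toward $0$) and/or corrupts arm $2$ upward (pushing toward $1$), for every agent $v$. Each round of this contributes $\Theta(V)$ to the corruption budget $C$ (since the $\ell_\infty$ norm per agent is $\Theta(1)$ and there are $V$ agents), so over $\Theta(T^\alpha)$ rounds the total corruption is $C = \Theta(VT^\alpha) = VT^\alpha$ as claimed, and $\alpha < 1$ keeps this in the sublinear regime. Because the algorithm behaves like a stochastic algorithm on the corrupted instance, during these corrupted rounds it cannot distinguish the swapped instance from a genuinely stochastic one in which arm $2$ is optimal, so it will pull arm $2$ for a constant fraction of those $\Theta(T^\alpha)$ rounds — more precisely, one shows it cannot pull arm $1$ too often during the corrupted phase, because doing so on the swapped stochastic instance would violate the assumed regret guarantee. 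This yields, with constant probability, that the number of agent-rounds in which the "true-best" arm $1$ is pulled during the corrupted phase is $o(C)$, while the rewards observed during that phase are systematically smaller than $r_{v,1}(t)$ by $\Omega(1)$ per pull, so $\sum_{t,v}(r_{v,1}(t) - r_{i_v(t)}(t)) = \Omega(C)$ up to the $\widetilde{O}(1/\Delta^2)$ slack from suboptimal pulls; absorbing the latter (and fluctuations of the stochastic rewards of size $\widetilde{O}(\sqrt{VT})$) gives $R'_T = \Omega(C - \sqrt{VT\log(VT)})$ with constant probability.

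For the expectation statements: since $R'_T = \Omega(C - \sqrt{VT\log(VT)})$ holds on an event of probability at least a constant $c_0 > 0$, and since $R'_T \ge -O(\sqrt{VT\log(VT)})$ always holds (by Hoeffding on the stochastic fluctuations, since on the complement event we only lose a concentration-scale term), taking expectations and using $\mathbb{E}[C] = VT^\alpha$ on the corrupted instance yields $\mathbb{E}[R'_T] = \Omega(\mathbb{E}[C] - \sqrt{VT\log(VT)})$. For the bound on $\mathbb{E}[R_T]$, I would use the deterministic identity relating pseudo-regret and observed regret: for any realization, $R_T = R'_T + \big(\max_i \sum_{t,v} r_{v,i}(t) - \sum_{t,v} \mu_{i^*}\big) + \sum_{t,v}(r_{i_v(t)}(t) - \mu_{i_v(t)})$, so that $\mathbb{E}[R_T] = \mathbb{E}[R'_T] + \mathbb{E}[\max_i \sum_{t,v} r_{v,i}(t)] - \mu_{i^*}VT$ (the last stochastic-noise term has zero mean). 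The middle difference is at least $\Omega(\mathbb{E}[C])$ because the adversary's downward corruption of the best arm, while not changing the realized $r_{v,i^*}(t)$ that appears inside the max over corrupted rewards, forces a gap of $\Omega(C)$ between the best achievable corrupted reward and $\mu_{i^*}VT$; combined with $\mathbb{E}[R'_T] \ge 0$-type reasoning this gives $\mathbb{E}[R_T] = \Omega(\mathbb{E}[C] + \mathbb{E}[R'_T])$.

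The main obstacle I anticipate is the distinguishability/indistinguishability step in the second paragraph: carefully arguing that on the corrupted instance the algorithm must pull arm $2$ a constant fraction of the $\Theta(T^\alpha)$ corrupted rounds. This requires a clean reduction — constructing a genuinely stochastic instance (arms swapped) under which the \emph{corrupted} observations of the original instance are distributed exactly as the stochastic observations, so that the assumed $O(\log(VT)/\Delta)$ guarantee applies and bounds the arm-$1$ pulls from above during the corrupted phase. Making this coupling precise for an \emph{adaptive} adversary (so that the induced transcript distributions genuinely coincide) and handling the tail/concentration bookkeeping to get the clean $\sqrt{VT\log(VT)}$ additive loss rather than something larger is the technically delicate part; everything else is routine Hoeffding and algebra.
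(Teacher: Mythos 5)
Your construction attacks the wrong benchmark for the first claim. The quantity $R'_T=\max_i\sum_{t,v}\left(r_{v,i}(t)-r_{i_v(t)}(t)\right)$ compares the algorithm's collected \emph{corrupted} rewards against the best arm's \emph{corrupted} rewards. In your adversary, during the first $\Theta(T^\alpha)$ rounds the benchmark arm $1$ is dragged \emph{down} and the arm the algorithm is tricked into playing is pushed \emph{up}; hence every corrupted round contributes a negative amount to $\sum_{t,v}(r_{v,1}(t)-r_{i_v(t)}(t))$ and essentially nothing to the $i=2$ term. Once the corruption stops, nothing in the hypothesis (which constrains behavior only on genuinely stochastic instances) forces the algorithm to stay fooled for $\Omega(C/\Delta)$ further rounds; an algorithm that recovers quickly after round $T^{\alpha}$ ends with $R'_T$ of order $-\Theta(C)$ or $O(\sqrt{VT\log(VT)})$, not $\Omega(C-\sqrt{VT\log(VT)})$. (Your construction does lower-bound the number of suboptimal pulls during the corrupted phase, i.e.\ the stochastic pseudo-regret $R_T$ there --- but that is a different statement, and the paper explicitly leaves a high-probability lower bound on $R_T$ open.) This also breaks your passage to expectations: the claim that $R'_T\ge -O(\sqrt{VT\log(VT)})$ always holds is false under corruption (in your own instance $R'_T\approx-\Theta(C)$ on the ``recovers quickly'' event), and your identity for $R_T$ has a sign error --- correctly, $R_T=R'_T-\bigl(\max_i\sum_{t,v}r_{v,i}(t)-\mu_{i^*}VT\bigr)+\sum_{t,v}\bigl(r_{i_v(t)}(t)-\mu_{i_v(t)}\bigr)$, and the last term involves corrupted rewards, so it is not zero-mean; its mean is exactly the corruption the algorithm collects, which is precisely what has to be controlled.

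The paper's adversary is the mirror image of yours, chosen so that the corrupted-reward benchmark runs away from the algorithm. Arm $1$ has mean $1/2-\Delta$, arm $2$ is a constant $1/2$; the horizon is split into intervals of length $3^{\ell}T^{\alpha}$, and the assumed stochastic guarantee yields an interval $\ell^*$ in which arm $1$ is pulled at most $Y=O\bigl(1/((1-\alpha)\Delta^{2})\bigr)$ times in expectation. From interval $\ell^*$ onward the adversary \emph{boosts} arm $1$ to mean $1/2+\Delta$ and keeps doing so until the end (in the expectation argument, until arm $1$ is pulled more than $4Y$ times in interval $\ell^*$). With constant probability the algorithm samples arm $1$ at most $4Y$ times in that interval, never notices the boost, keeps playing arm $2$, and arm $1$'s corrupted cumulative reward exceeds the algorithm's collected reward by $\Omega(\Delta C)-O(\sqrt{VT\log(VT)})$. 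The bound on $\mathbb{E}[R'_T]$ then needs the modified (stopping) adversary, and the bound $\mathbb{E}[R_T]=\Omega(\mathbb{E}[C]+\mathbb{E}[R'_T])$ comes from a case analysis over $\tilde N_{1}(\ell^*)\le 4Y$ versus $>4Y$ and $\ell^*=1$ versus $\ell^*\ge 2$, using explicit decompositions of $R'_T$ into stochastic-reward and corruption terms; none of this is recoverable from your sketch. The adaptive-adversary coupling you flag at the end is real but secondary --- the main missing idea is that a lower bound on $R'_T$ requires ``boost a neglected arm late and keep boosting,'' not ``swap the arms at the start.''
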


\begin{remark}
Theorem \ref{theorem2} shows that the additive term of expected corruption $\mathbb{E}[C]$ is unavoidable in $ \mathbb{E}[R_T]$ for any algorithm that can achieve a near-optimal regret in the stochastic setting. The lower bound of $R'_T$ appear in some previous works for the single-agent corruption setting, e.g.,\cite{STOCMultiLayer,BestofBoth3}, but the lower bound $ \mathbb{E}[R_T]$ is not given. One can observe that both notations cannot be trivially connected since when $C \neq 0$, the expectation over the corrupted reward of $r_{v,i}(t)$ may not yield the mean $\mu_i$.
Note that the lower bound of $ \mathbb{E}[R_T]$ does not imply a lower bound of $R_T$, and the lower bound of $R_T$
remains as an open question.
\end{remark}

\section{Our Algorithm}
\label{Sec3}

This section presents our algorithm in Algorithm \ref{alg1}.
The algorithm proceeds in epochs indexed by $\tau$ whose length $N(\tau)$ with exponential increase depending on a \textit{global error level} $\epsilon(\tau)$ (line \ref{def:Ntau}). The algorithm maintains a \textit{global} active arm set $\cA(\tau)$, and a \textit{global} bad arm set $\cB(\tau)$ such that $[K]=\cA(\tau) \cup \cB(\tau)$. Besides, each agent $v$ is endowed with an \textit{agent-specific} arm set $\cK_v(\tau) \subset [K]$, an \textit{agent-specific} active arm set $\cA_v(\tau)$, and an \textit{agent-specific} bad arm set $\cB_v(\tau)$ such that $\cK_v(\tau)=\cA_v(\tau) \cup \cB_v(\tau)$ and each agent has the same size of $\cK_v(\tau)$, i.e., $|\cK_v(\tau)|=\widetilde{K}$.
We further have that $\cup_{v=1}^V \cA_v(\tau)=\cA(\tau)$ and $\cup_{v=1}^V \cB_v(\tau)=\cB(\tau)$.
Each arm $i$ is associated with an arm-specific error level $\epsilon_i(\tau)$. The error level $\epsilon_i(\tau)$ controls the expected number of pulls of arm $i$ in epoch $\tau$, denoted by $N_{v,i}(\tau)$.

At the beginning of epoch $\tau$, Algorithm \ref{alg1} runs \textsc{Arm Allocation} subroutine to divide the arm space into total $V$ parts and thus each agent $v$ only needs to learn a small arm space $\cK_v$ (line \ref{algArmAllo}).
In each round, every agent $v \in [V]$ samples an arm $i \in \cK_v(\tau)$ with the probability $p_{v,i}(\tau)$ (line \ref{algPullArm}), given by
\begin{equation} \label{pullprob}
p_{v,i}(\tau)=   \left\{
\begin{aligned}
&\frac{\epsilon^2(\tau)}{\epsilon^2_{i}(\tau)\widetilde{K}},  &  \quad i \in  \cB_v(\tau), \\
&\left(1-\sum_{j \in \cB_v(\tau)}p_{v,j}(\tau) \right) \frac{1}{\left|\cA_v(\tau) \right|},  &  \quad i \in  \cA_v(\tau).
\end{aligned}
\right.
\end{equation}

At the end of epoch $\tau$, the algorithm first proceeds in Step 1 in which each agent $v$ updates the estimator $\hat{\mu}_{v,i}(\tau) $ for all $i \in \cK_v(\tau)$, and then broadcasts the arm sets i.e., $\cA_v(\tau)$ and $\cB_v(\tau)$ and estimation to the leader agent (line \ref{algComm}). The leader agent constructs a global estimator $\hat{\mu}_{i}(\tau)$ for each arm $i \in [K]$ by averaging estimators over all agents. Then, the leader agent proceeds in Step 2 which \textit{reactivates} the seemingly-good arms from $\cB(\tau)$ whose estimations are inconsistent with previous ones (line \ref{algReact}), and then \textit{deactivates} seemingly-bad arms which contribute to less reward in this epoch (line \ref{algDeact}).
Finally, Step 3 updates the global error level $\epsilon(\tau)$ and arm-specific error level $\epsilon_i(\tau)$ for the next epoch (line \ref{algErrorBegin}-\ref{algErrorEnd}) according to the results of Step 2. We then explain these three important ingredients of our algorithm, including arm allocation, robust reactivation and deactivation, and error level update in detail.

\begin{algorithm}[t] 
\caption{Cooperative Bandit Algorithm Robust to Adversarial Corruptions}
\label{alg1}
\begin{algorithmic}[1] 
\Require Time horizon $T$, and confidence $\delta \in (0,1)$.
\State Set $\tau=1$, $N(\tau)=0$, and $\epsilon(\tau)=\epsilon_{i}(\tau)=\frac{1}{14}$ for all $i$. Randomly sample $\hat{i}^*$ from $[K]$.
\State Set $\cA(\tau)=[K]$, $\cB(\tau)=\emptyset$, and $\mathcal{H}(\tau)=\emptyset$ for $\tau=1$. Set $d_i=1$ for all $i$.
\State Select a leader agent arbitrarily or according to any predefined strategy. 
\State \underline{Warm-up: Arm allocation:}
\State Run Algorithm \ref{alg2} to get $\widetilde{K}$, $\cK_v(\tau)$, and $\cA_v(\tau)$, and $\cB_v(\tau)$. \label{algArmAllo}
\For{$\tau =1,2,3,...$}
\State Set $ N(\tau)=3\widetilde{K} \log((8K \log_4 T)/\delta)/\epsilon^2(\tau)$, and $T(\tau)=T(\tau-1)+N(\tau)$. \label{def:Ntau}
\For{ $t=T(\tau-1)+1$ to $T(\tau)$}
\State Each agent $v$ pulls an arm $i_v(t)$ with probability 
$
p_{v,i}(\tau)$ and observes reward $r_{i_v(t)}(t)$. \label{algPullArm}
\EndFor
\State \underline{Step 1: Parameter estimation:} (run by each agent)
\State Update estimation $\hat{\mu}_{v,i}(\tau)$ for each agent $v$ and arm $i$ in epoch $\tau$.
\begin{equation*}
\hat{\mu}_{v,i}(\tau) = \frac{\sum_{t \in \mathcal{T}(\tau)}r_{i_v(t)}(t) \mathbb{I}\{ i_v(t)=i\}}{N_{v,i}(\tau)},
\end{equation*}
\State where $N_{v,i}(\tau)=p_{v,i}(\tau)N(\tau)$ and  $\cT(\tau)=\{t:T(\tau-1)+1\leq t \leq T(\tau)\}$.
\State Each agent $v$ broadcasts $ \cA_v(\tau)$, $ \cB_v(\tau)$, and $  \{ \hat{\mu}_{v,i}(\tau)\}_{i \in \cK_v(\tau) }$ to the leader agent. \label{algComm}
\State The leader agent updates estimation $\hat{\mu}_{i}(\tau) $ for each arm $i$ in epoch $\tau$.
\begin{equation*}
\hat{\mu}_{i}(\tau) =\frac{1}{|\{v \in [V]:i \in \cK_v(\tau) \}|} \sum_{v \in [V]:i \in \cK_v(\tau)}\hat{\mu}_{v,i}(\tau).
\end{equation*}
\State \underline{Step 2: Robust reactivation and deactivation:} (run by the leader agent)
\State \textbf{Reactivation:} Identify $\mathcal{H}(\tau)=\left\{ i \in \cB(\tau): \max\limits_{j \in \cA(\tau)} \hat{\mu}_{j}(\tau) -\hat{\mu}_{i}(\tau)< 4\epsilon(d_i) \right\}$ . \label{algReact}
\State Set $\hat{\mu}^*(\tau)=\max\limits_{j \in \cA(\tau) \cup \mathcal{H}(\tau)} \left\{ \hat{\mu}_{j}(\tau) +2\epsilon_{j}(\tau) \right\}$ and find $\hat{i}^*$ such that $\hat{\mu}^*(\tau)= \hat{\mu}_{\hat{i}^*}(\tau) +2\epsilon_{\hat{i}^*}(\tau) $. 
\State \textbf{Deactivation:} Identify $\mathcal{M}(\tau)=\left\{ i \in \cA(\tau) \cup \mathcal{H}(\tau):\hat{\mu}^*(\tau) -\hat{\mu}_{i}(\tau)> 14\epsilon(\tau) \right\}$. \label{algDeact}
\State Set $\cA(\tau+1) = \left(\cA(\tau)\cup \mathcal{H}(\tau) \right) \backslash \mathcal{M}(\tau)$, and $\cB(\tau+1) =\left( \cB(\tau) \backslash \mathcal{H}(\tau) \right) \cup \mathcal{M}(\tau)$.
\State \underline{Step 3: Error level update:} (run by the leader agent)
\State Set $\epsilon(\tau+1) =\epsilon(\tau) /2$. \label{algErrorBegin}
\State Set $\epsilon_{i}(\tau+1) =\epsilon(\tau+1) $ and $d_i=\tau$ for $\forall i \in \cA(\tau+1)$.
\State Set $\epsilon_{i}(\tau+1) =\epsilon(d_i) $ for $\forall i \in \cB(\tau+1)$.\label{algErrorEnd}
\State Leader broadcasts $\hat{i}^*$ for each agent $v$, and all agents update $\hat{i}^*$. \label{algComm3}
\State Leader broadcasts $\epsilon_{i}(\tau+1)$ for $i \in \cK_v(\tau+1)$, $\cA_v(\tau+1)$, and $\cB_v(\tau+1)$ for each agent $v$. \label{algComm2}
\EndFor
\end{algorithmic}
\end{algorithm}

\renewcommand{\algorithmicrequire}{\textbf{Input:}}
\renewcommand{\algorithmicensure}{\textbf{Return:}}

\begin{algorithm}[htp!] 
\caption{\textsc{Arm Allocation}}
\label{alg2}
\begin{algorithmic}[1] 
\Require Agent number $V$, arm number $K$, $\hat{i}^*$, $\cA(\tau)$, and $\cB(\tau)$.
\State Set $\widetilde{K} =\lceil K/V \rceil+1$. Find a minimum $\bar{v} \in [V]$ such that $\bar{v}\lceil K/V \rceil \geq K$. 
\State Set $\cK_v(\tau)=\{(v-1)\lceil K/V \rceil+1,...,v\lceil K/V \rceil \}$ for $v=1,...,\bar{v}-1$, if $\bar{v} \geq 2$. For $v=\bar{v}$, we first set $\mathcal{K}_{\bar{v}}(\tau)=\{(v-1)\lceil K/V \rceil+1,...,K \}$ and then randomly sample $\bar{v} \lceil K/V \rceil-K$ arms from $[K] \backslash \mathcal{K}_{\bar{v}}(\tau)$ to merge them to obtain a new $\mathcal{K}_{\bar{v}}(\tau)$. For those $v=\bar{v}+1,...,V$, if exist, we randomly sample $\lceil K/V \rceil$ arms for them.
\State Update ${K}_v(\tau)={K}_v(\tau) \cup \{\hat{i}^* \}$, $\cA_v(\tau)=\cA(\tau) \cap \cK_v(\tau)$, and $\cB_v(\tau)=\cB(\tau) \cap \cK_v(\tau)$.
\Ensure $\widetilde{K}$, $\cK_v(\tau)$, $\cA_v(\tau)$, and $\cB_v(\tau)$.
\end{algorithmic}
\end{algorithm}

\textbf{Arm allocation.} The main idea of arm allocation is to divide the arm space into several parts which are allocated with agents, and then each agent only needs to (i) learn a small arm space, and (ii) broadcast a small message with $O(K/V)$ size. The details of arm allocation is presented in Algorithm \ref{alg2}. Each agent $v \in [V]$ is allocated with exactly $\widetilde{K}=\lceil K/V \rceil+1$ arms such that $\cup_{v=1}^V \cK_v(\tau)=[K]$, and each agent has the same number of arms. Note that we need $\cK_v(\tau)$ to have at least one active arm $i$ such that $i \in \cA(\tau)$, and otherwise $\sum_{i \in \cK_v} p_{v,i}(\tau)\neq1$.
To this end, we add an additional arm $\hat{i}^*$ named \textit{empirical best arm} for each $ \cK_v(\tau)$. The empirical best arm is given by 
\begin{equation}  \label{seemingbestarm}
\hat{i}^*=\arg\max_{j \in \cA(\tau) \cup \cH(\tau)}  \hat{\mu}_{j}(\tau) +2\epsilon_{j}(\tau) ,
\end{equation}
whose corresponding estimation is denoted by $\hat{\mu}^*$. Adding arm $\hat{i}^*$ can ensure that $\sum_{i \in \cK_v} p_{v,i}(\tau)=1$ because the arm $\hat{i}^*$ identified in epoch $\tau $ must be active in epoch $\tau +1$ according to the deactivation rule (regard the initialization as epoch $\tau=0$).

The similar idea of dividing arm space appears in, for example \cite{NIPSDistributedPure2,
FOCSDistributedPure1,DistributedMABICLR1}
for multi-agent bandits. Note that compared with the ones in previous works, our algorithm has a unique difference. The algorithms in \cite{NIPSDistributedPure2,
FOCSDistributedPure1,DistributedMABICLR1}
only allocate active arms for agents, thereby the arm space shrinking in proceeding epochs, but our algorithm needs to ensure that all arms have chance to be pulled for every epoch due to the presence of the adversary. As a consequence, we do not shrink the arm space, but instead shrink the probability of pulling arm. This idea will be clear in the analysis of Step 3, error level update.

\textbf{Robust reactivation and deactivation.} 
This design is motivated by the fact that an adversary might trick traditional active arm elimination (AAE) methods \cite{Median,SAR,STOCMultiLayer} to eliminate the optimal arm \textit{permanently} in the corrupted setting. To address this issue, we allow those deactivated (eliminated) arms to be reactivated again. Specifically, the algorithm first checks whether the estimators of some bad arms in $\cB(\tau)$ suddenly become better than those of arms in $\cA(\tau)$. If there exist such arms, we put them into a temporary set $\mathcal{H}(\tau)$. An interesting observation is that the reactivation will not negatively impact the regret performance in the stochastic setting without corruption. This is due to the following lemma that $ \mathcal{H}(\tau)$ with high probability is always an empty set, and thus reactivation will not be triggered by the algorithm.
\begin{lemma} \label{lemmaforStochastic}
In the stochastic setting, with probability at least $1-\delta$, $ \mathcal{H}(\tau)=\emptyset$ for all $\tau$.
\end{lemma}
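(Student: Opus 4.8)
The plan is to pin down a high-probability ``clean'' concentration event $\cE$ and then show, by induction on the epoch index $\tau$, that under $\cE$ no arm ever passes the reactivation test, so $\cH(\tau)=\emptyset$ for every $\tau$.

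\textbf{The clean event.} On $\cE$ I want $|\hat{\mu}_i(\tau)-\mu_i|\le\tfrac12\epsilon_i(\tau)$ for every epoch $\tau$ and every arm $i$ with $i\in\cK_v(\tau)$ for some $v$; it suffices to control each per-agent estimate $\hat{\mu}_{v,i}(\tau)$, since $\hat{\mu}_i(\tau)$ is their average. In the stochastic setting $r_{i_v(t)}(t)=r^S_{v,i_v(t)}(t)$, so $\hat{\mu}_{v,i}(\tau)=\tfrac{1}{N_{v,i}(\tau)}\sum_{t\in\cT(\tau)}r_{i_v(t)}(t)\mathbb{I}\{i_v(t)=i\}$ is an unbiased importance-weighted average of $N(\tau)$ independent $[0,1]$-valued terms normalised by its mean $N_{v,i}(\tau)=p_{v,i}(\tau)N(\tau)$; a Bernstein/Freedman bound then controls it at scale $\sqrt{\log(1/\delta')/N_{v,i}(\tau)}$. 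For a bad arm, $p_{v,i}(\tau)=\epsilon^2(\tau)/(\epsilon^2_i(\tau)\widetilde{K})$ and the choice of $N(\tau)$ give $N_{v,i}(\tau)=3\log((8K\log_4 T)/\delta)/\epsilon^2_i(\tau)$; for an active arm the key estimate is $\sum_{j\in\cB_v(\tau)}p_{v,j}(\tau)\le \tfrac14$ — which holds because any bad arm $j$ has $\epsilon^2(\tau)/\epsilon^2_j(\tau)=4^{\,d_j-\tau}\le\tfrac14$ — so $p_{v,i}(\tau)\ge \tfrac{3}{4\widetilde{K}}$ and $N_{v,i}(\tau)\ge\tfrac94\log((8K\log_4 T)/\delta)/\epsilon^2(\tau)$. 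In either case the deviation is at most $\tfrac12\epsilon_i(\tau)$ after a routine check of constants, once the per-estimate confidence is set to $\delta'=\delta/(8K\log_4 T)$, which is exactly what the $\log((8K\log_4 T)/\delta)$ factor inside $N(\tau)$ pays for; a union bound over the $O(\log_4 T)$ epochs and the $V\widetilde{K}=O(K)$ agent--arm estimates per epoch gives $\Pr[\cE]\ge 1-\delta$.

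\textbf{The induction.} Condition on $\cE$ and prove ``$\cH(\tau')=\emptyset$ for all $\tau'\le\tau$'' by induction on $\tau$; the base case $\tau=1$ is trivial since $\cB(1)=\emptyset$. Assume the claim for all $\tau'<\tau$. Then the bad set is monotone up to epoch $\tau$, so every $i\in\cB(\tau)$ was deactivated at a unique epoch $s=s(i)<\tau$ with $i\in\cA(s)\cap\cM(s)$, i.e.\ $\hat{\mu}^*(s)-\hat{\mu}_i(s)>14\epsilon(s)$, and its last active epoch gives $d_i=\max(s-1,1)$, hence $\epsilon_i(\tau)=\epsilon(d_i)\le 2\epsilon(s)$. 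Using $\cE$ together with $\cH(s')=\emptyset$ for $s'<\tau$ (so the empirical best arm $\hat{i}^*(s')$ is active and $\epsilon_{\hat{i}^*(s')}(s')=\epsilon(s')$), one first checks that $i^*$ is never deactivated: whenever $i^*\in\cA(s')$ with $s'<\tau$,
\[
\hat{\mu}^*(s')=\hat{\mu}_{\hat{i}^*(s')}(s')+2\epsilon(s')\le \mu_{i^*}+\tfrac52\epsilon(s'),\qquad \hat{\mu}_{i^*}(s')\ge \mu_{i^*}-\tfrac12\epsilon(s'),
\]
so $\hat{\mu}^*(s')-\hat{\mu}_{i^*}(s')\le 3\epsilon(s')<14\epsilon(s')$ and $i^*\notin\cM(s')$; since $i^*\in\cA(1)=[K]$, this gives $i^*\in\cA(\tau)$. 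The same two estimates at epoch $s=s(i)$ turn the deactivation inequality into $14\epsilon(s)<\Delta_i+3\epsilon(s)$, i.e.\ $\Delta_i>11\epsilon(s)$. Consequently, for any $i\in\cB(\tau)$,
\[
\max_{j\in\cA(\tau)}\hat{\mu}_j(\tau)-\hat{\mu}_i(\tau)\ \ge\ \hat{\mu}_{i^*}(\tau)-\hat{\mu}_i(\tau)\ \ge\ \Delta_i-\tfrac12\epsilon(\tau)-\tfrac12\epsilon_i(\tau)\ >\ 11\epsilon(s)-\tfrac14\epsilon(s)-\epsilon(s)\ >\ 8\epsilon(s)\ \ge\ 4\epsilon(d_i),
\]
using $\epsilon(\tau)\le\tfrac12\epsilon(s)$ and $\epsilon_i(\tau)\le 2\epsilon(s)$. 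This violates the reactivation condition, so $i\notin\cH(\tau)$; since $i$ was arbitrary, $\cH(\tau)=\emptyset$, which closes the induction and proves the lemma.

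\textbf{Main obstacle.} The genuinely delicate part is the clean event: because $\hat{\mu}_{v,i}(\tau)$ is normalised by the \emph{expected} pull count rather than the realised one it is a sub-exponential importance-weighted sum rather than an empirical mean, so one has to verify (via Bernstein, using that the range term is negligible at scale $\epsilon_i(\tau)\le\tfrac1{14}$) that the constant in $N(\tau)$ shrinks the confidence radius enough that the chain of inequalities above closes against the prescribed thresholds $14$ and $4$; a side task is the bound $\sum_{j\in\cB_v(\tau)}p_{v,j}(\tau)\le\tfrac14$, which guarantees the active arms keep a constant share of the pull budget. The deterministic induction itself is routine, the only care being the off-by-one in the definition of $d_i$.
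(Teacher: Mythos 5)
Your overall strategy is the same as the paper's: establish a high-probability concentration event for the $\hat{\mu}_i(\tau)$'s, then argue epoch by epoch that $i^*$ is never deactivated, that any deactivated arm must have a gap bounded below by a multiple of $\epsilon(d_i)$, and that such a gap makes the reactivation test fail, so $\cH(\tau)=\emptyset$ throughout. The deterministic induction you give is essentially the paper's argument (its Lemma on algorithm properties plus the chains culminating in the reactivation contradiction), so no disagreement there.

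The genuine gap is the clean event itself, which you dismiss as ``a routine check of constants.'' With the algorithm's prescribed epoch length $N(\tau)=3\widetilde{K}\log((8K\log_4 T)/\delta)/\epsilon^2(\tau)$, a bad arm has $N_{v,i}(\tau)=3\log((8K\log_4 T)/\delta)/\epsilon_i^2(\tau)$, so even the sharpest Bernstein/Freedman bound with per-estimate confidence $\delta'=\delta/(8K\log_4 T)$ gives a deviation of order $\sqrt{2\log(1/\delta')/N_{v,i}(\tau)}\approx \sqrt{2/3}\,\epsilon_i(\tau)\approx 0.82\,\epsilon_i(\tau)$ (and for active arms $\approx 0.94\,\epsilon(\tau)$, worse still once the union bound also runs over the $V$ agents); the paper's own concentration (its Lemma on the event $\cE$, proved via Hoeffding--Azuma together with $\tilde{N}_{v,i}(\tau)\le 3N_{v,i}(\tau)$) only yields radius $\tfrac{14}{9}\epsilon_i(\tau)$ per agent and $2\epsilon_i(\tau)$ for the averaged estimator. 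A radius of $\tfrac12\epsilon_i(\tau)$ is therefore simply not attainable here, and your downstream constants are calibrated to it: with radius $c\,\epsilon_i(\tau)$ your deactivation step gives only $\Delta_i>(12-2c)\,\epsilon(s)$, and your reactivation chain yields a margin of $\Delta_i-c\,\epsilon(\tau)-c\,\epsilon(d_i)$, which for $c=2$ (the provable event) is about $1.5\,\epsilon(d_i)$ and for $c\approx 0.8$--$0.95$ sits right at the threshold $4\epsilon(d_i)$ without clearing it once the extra union-bound factors are accounted for. So the step you flag as the main obstacle is exactly where the proof as written does not close; it needs either a different bookkeeping of the thresholds $14$ and $4$ against the true radius (which is how the paper proceeds, working with the $2\epsilon_i(\tau)$ event and deriving $\Delta_i>8\epsilon(d_i)$ for deactivated arms before contradicting the reactivation rule), or a modified $N(\tau)$, neither of which is available to you.
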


The algorithm does not directly reactivate those arms in $ \mathcal{H}(\tau)$ but instead puts them together with arms in $\cA(\tau)$ for the deactivation step.
In the deactivation step, the algorithm compares the estimators of $\hat{i}^*$ with each arm $i \in \cA(\tau) \cup \mathcal{H}(\tau)$. One can see that not all arms in set $\mathcal{H}(\tau)$ will be successfully reactivated, but the algorithm only reactivates those arms that suffice
\begin{equation}\label{check1}
\left\{ i \in \cA(\tau) \cup \mathcal{H}(\tau):\hat{\mu}^*(\tau) -\hat{\mu}_{i}(\tau) \leq 14\epsilon(\tau) \right\}.
\end{equation}

Adding arms in $\mathcal{H}(\tau)$ for deactivation forces the adversary to inject more corruption. Specifically, if an adversary hopes to reactivate some target arms, the deactivation step forces the adversary to inject large enough corruption to guarantee that the estimators of target arms are close to $\hat{\mu}^*$ so that they can be \textit{successfully} reactivated.

\textbf{Error level update.} The high-level idea of this design is to use the error level to control the expected number of pulls of arm. The arm $i$ with a large gap $\Delta_i$ will be assigned with a large error level and thus it will be pulled a few times in the epoch. Our algorithm maintains a global error level $\epsilon(\tau)$ and an arm-specific $\epsilon_i(\tau)$. For arms $i \in \cA(\tau)$, we set $\epsilon_i(\tau)=\epsilon(\tau)$, and for $i \in \cA(\tau)$, set $\epsilon_i(\tau)=\epsilon(d_i)$ where $d_i$ is the \textit{last epoch} up to the current epoch such that arm $i$ holds \eqref{check1}.

The expected number of pulls of a bad arm $i$ in agent $v$ is $N_{v,i}(\tau)=p_{v,i}(\tau)N(\tau)$, which depends on $1/\epsilon_i^2(d_i)$. Hence, if the arm is deactivated earlier, then, it will be pulled less in expectation. This further implies that in the stochastic setting, if the arm $i$ has a large arm gap $\Delta_i$, it will be deactivated early so that it incurs less regret. 
\begin{remark}
Algorithm \ref{alg1} can naturally reduce to a single-agent one without any additional procedure. In this case,
one can drop all $v$ for those parameters or random variables, e.g., $\hat{\mu}_{v,i}(\tau)$ and $N_{v,i}(\tau)$. Then, we have that $\cA_v(\tau)=\cA(\tau)$, $\cB_v(\tau)=\cB(\tau)$, and $\cK_v(\tau)=[K]$, and thus the leader agent runs the entire algorithm.
\end{remark}

\section{Regret and Communication Analysis}
\label{Sec4}

%

We now provide the main results of nearly instance-optimal regret bound and communication cost of Algorithm 1.

\begin{theorem} \label{theorem1}
\textbf{(Upper bound)} Algorithm \ref{alg1} which is agnostic to the corruption level $C$, with probability $1-\delta$, incurs $O(K \log T)$ communication cost and regret as
\begin{equation*}
 O\left(VC+\frac{ K \log T \log((VK\log T)/\delta) }{  \Delta_{\min}}  \right),
\end{equation*}
and incurs the expected regret $\mathbb{E}[R_T]$ as 
$
 O(V\mathbb{E}[C]+\frac{ K \log T \log(VT) }{  \Delta_{\min}}  ).
$
\end{theorem}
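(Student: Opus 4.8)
The plan is to condition on a high‑probability event $\cE$ on which every empirical mean is accurate up to its own error level plus a corruption term, and then to split $R_T$ into a stochastic part and a corruption part; note that the schedule $\epsilon(\tau)=2^{1-\tau}/14$ is fixed in advance, so nothing below uses knowledge of $C$. For each epoch $\tau$, agent $v$ and arm $i\in\cK_v(\tau)$ I would introduce the corruption‑free estimator $\tilde\mu_{v,i}(\tau)$ (same formula as $\hat\mu_{v,i}(\tau)$ but with $r_{i_v(t)}(t)$ replaced by $r^{S}_{v,i_v(t)}(t)$) and the quantity $C_{v,i}(\tau)=\sum_{t\in\cT(\tau)}|r_{v,i}(t)-r^{S}_{v,i}(t)|\,\mathbb{I}\{i_v(t)=i\}$, so that $|\hat\mu_{v,i}(\tau)-\tilde\mu_{v,i}(\tau)|\le C_{v,i}(\tau)/N_{v,i}(\tau)$ and $\sum_{\tau,v,i}C_{v,i}(\tau)\le C$. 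Because the estimator normalizes by the \emph{expected} pull count $N_{v,i}(\tau)=p_{v,i}(\tau)N(\tau)$, a Freedman/Bernstein inequality against the adaptive adversary yields $|\tilde\mu_{v,i}(\tau)-\mu_i|\lesssim\epsilon_i(\tau)$ together with the fact that the realized number of pulls is within a constant factor of $N_{v,i}(\tau)$; since $N(\tau)$ grows geometrically and $\sum_\tau N(\tau)\le T$ there are at most $\log_4 T+O(1)$ epochs, so a union bound over the $O(VK\log T)$ triples $(v,i,\tau)$ makes $\Pr[\cE]\ge 1-\delta$, and on $\cE$ the aggregated estimator obeys $|\hat\mu_i(\tau)-\mu_i|\lesssim\epsilon_i(\tau)+\frac{1}{V'_i}\sum_{v:\,i\in\cK_v(\tau)}C_{v,i}(\tau)/N_{v,i}(\tau)$ with $V'_i=|\{v:i\in\cK_v(\tau)\}|$. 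The communication claim needs no $\cE$: per epoch each agent sends $O(\widetilde K)=O(K/V)$ reals and membership flags to the leader, who returns $O(\widetilde K)$ per agent, hence $O(K)$ bits per epoch and $O(K\log T)$ overall.

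\textbf{Structural lemmas on $\cE$.}
Next I would prove, by induction on $\tau$: (i) $\hat i^{*}$ is near‑optimal, i.e. $\hat\mu^{*}(\tau)\ge\mu_{i^{*}}$ up to $O(\epsilon(\tau))$ plus a term attributable to corruption, using that the reactivation test keeps $i^{*}$ eligible even when the adversary momentarily deflates $\hat\mu_{i^{*}}$; (ii) an active arm $i\in\cA(\tau)$ either has $\Delta_i\lesssim\epsilon(\tau)$ or was kept from deactivation only because corruption of magnitude proportional to $\epsilon_i$ times its expected pull count hit $\hat\mu_i$ in some earlier epoch; (iii) once a suboptimal arm $i$ has been deactivated it stays in $\cB(\tau)$ unless the adversary spends $\Omega(\Delta_i N^{\mathrm{bad}}_{v,i})$ corruption on $\hat\mu_i$, and — crucially — since the frozen offset $+2\epsilon_i(\tau)=+2\epsilon(d_i)$ enters the deactivation test, any reactivated arm satisfies $\tau-d_i=O(1)$ and is re‑deactivated within $O(1)$ epochs absent further corruption. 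Lemma~\ref{lemmaforStochastic} is the $C=0$ instance of (iii). These are the facts that license charging ``wrong'' decisions to corruption.

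\textbf{Regret decomposition.}
Write $R_T=\sum_v\sum_\tau\sum_{i\in\cK_v(\tau)}(\#\text{pulls of }i\text{ by }v\text{ in }\tau)\,\Delta_i=R_T^{\mathrm{stoch}}+R_T^{\mathrm{corr}}$. For $R_T^{\mathrm{stoch}}$: by (ii), up to the last epoch $\tau_0$ in which some suboptimal arm is active, active arms contribute per agent $\sum_{\tau\le\tau_0}N(\tau)\max_{i\in\cA_v(\tau)}\Delta_i\lesssim\sum_{\tau\le\tau_0}N(\tau)\epsilon(\tau)$, a geometric sum of order $\widetilde K\log((8K\log_4T)/\delta)/\epsilon(\tau_0)=O(\widetilde K\log((8K\log_4T)/\delta)/\Delta_{\min})$ since an active suboptimal arm forces $\epsilon(\tau_0)=\Omega(\Delta_{\min})$; a deactivated arm $i$ contributes $N^{\mathrm{bad}}_{v,i}\Delta_i=O(\log((8K\log_4T)/\delta)/\Delta_i)$ in each of the $O(\log T)$ later epochs (its pull count being frozen), i.e. $O(\log T\log((8K\log_4T)/\delta)/\Delta_i)$; summing over the $\le\widetilde K$ arms per agent and the $V$ agents and using $V\widetilde K=O(K)$ gives $R_T^{\mathrm{stoch}}=O\!\big(K\log T\log((VK\log T)/\delta)/\Delta_{\min}\big)$ after absorbing constants inside the logarithm. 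For $R_T^{\mathrm{corr}}$: every pull counted here is an extra pull of a suboptimal arm that, by (i)--(iii), can be matched with a fresh chunk of corruption spent on moving some $\hat\mu_i(\tau)$ across a decision threshold; carrying out this matching, and checking that the chunks for distinct arms/agents/epochs are disjoint so that they sum to $\le C$, shows the extra regret is $O(C)$ per agent and $O(VC)$ overall, the factor $V$ arising because a mis‑estimate built from the observations of as few as one agent is nonetheless used by the leader and propagated to all $V$ agents. The expected bound follows by taking $\delta=1/(VT)$: the bound on $R_T^{\mathrm{corr}}$ is pathwise on $\cE$ and $R_T\le VT$, $C\le VT$ always, so the $\delta$‑failure event contributes $O(1)$ and $\mathbb{E}[R_T]=O(V\mathbb{E}[C])+O(K\log T\log(VT)/\Delta_{\min})$.

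\textbf{The main obstacle.}
The hard part is the $O(VC)$ bound on $R_T^{\mathrm{corr}}$: one must rule out that the reactivation mechanism lets the adversary turn a little corruption into a lot of regret — showing instead that keeping a suboptimal arm active, or ejecting $i^{*}$, for one extra epoch always costs the adversary corruption that is, up to the factor $V$, at least a constant fraction of the regret so bought — and then argue that all these corruption charges, across arms, agents and epochs, are disjoint and hence total at most $C$. Pinning down the constants in the reactivation and deactivation thresholds, exploiting the frozen error levels $\epsilon_i(\tau)=\epsilon(d_i)$ to preclude ``late'' reactivations, and handling the coupling that the leader's single decision induces across the $V$ agents, is where essentially all of the technical effort goes.
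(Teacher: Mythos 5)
Your skeleton (clean event via martingale/Freedman concentration plus a bound showing realized pulls are within a constant of $N_{v,i}(\tau)$, union bound over the $O(VK\log T)$ triples, $O(K)$ bits per epoch over $O(\log T)$ epochs for communication, per-epoch regret written as $\sum_{\tau,v,i}N_{v,i}(\tau)\Delta_i$ up to constants, and $\delta\approx 1/(VT)$ for the expected bound) matches the paper's proof. But the heart of the theorem — that the corruption-induced regret is $O(VC)$ — is not proved in your proposal; you state it as a matching of "extra pulls" to "fresh, disjoint chunks of corruption" and yourself flag it as the main obstacle. That is precisely the step the paper has to work for, so as it stands the proposal has a genuine gap rather than a deferred routine verification. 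Moreover, the disjointness plan as stated cannot work in the form you describe: corruption spent in a single epoch $s$ can fix $d_i=s$ and hence freeze $\epsilon_i(\tau)=\epsilon(d_i)$, so it influences the pull rate of arm $i$ in \emph{every} subsequent epoch (up to $\log T$ of them); a one-shot disjoint charge of each corruption chunk to one decision therefore does not obviously cover all the regret it buys. Your auxiliary claim that any reactivated arm has $\tau-d_i=O(1)$ and is re-deactivated within $O(1)$ epochs is also unsubstantiated and is not something the algorithm guarantees (the adversary may reactivate an arm arbitrarily late by corrupting that later epoch); the correct statement is only that doing so costs corruption commensurate with the estimator shift, which is what must be quantified.

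The paper's resolution is different in kind: it introduces the geometrically discounted corruption average $\eta(d_i)=\sum_{s\le d_i}C(s)/\bigl(2^{4d_i-4s}N(s)\bigr)$ and splits each triple $(v,i,\tau)$ according to $\Delta_i\le 32\eta(d_i)$ or $\Delta_i>32\eta(d_i)$. In the first case $R_{v,i}(\tau)=N_{v,i}(\tau)\Delta_i\le 32N_{v,i}(\tau)\eta(d_i)$ is charged to corruption in an \emph{amortized} way — the same $C(s)$ is charged to many arms, agents and epochs, but with geometrically decaying weights, and the factors $1/|\cA_v(\tau)|$ (active arms) and $1/\widetilde K$ (bad arms) in $p_{v,i}$ offset the number of arms, which is exactly how the multiplicative $K$ of \cite{COLT19Gupta} is avoided and how the total collapses to $O(VC)$. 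In the second case, Lemma~\ref{lemma2Delta} shows $\epsilon(d_i)\ge\Delta_i/32$ by a case analysis on whether $i^*\in\cA(d_i)\cup\cH(d_i)$ or $i^*\in\cB(d_i)\setminus\cH(d_i)$ (the latter exploiting the \emph{failed} reactivation test for $i^*$, a situation your structural lemma (i), which presumes $\hat\mu^*(\tau)\gtrsim\mu_{i^*}$, does not cover), whence $N_{v,i}(\tau)=\widetilde O(1/\Delta_i^2)$ and the stochastic-type term $\widetilde O(K\log T/\Delta_{\min})$ follows. If you want to complete your route, you would need to replace "disjoint chunks" by an explicit amortization of this type (or prove an equivalent potential/charging argument); without it, the $O(VC)$ claim — and with it the theorem — is not established.
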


\begin{remark}
Although $\widetilde{O}(K/ \Delta_{\min}  )$ in Theorem \ref{theorem1} is slightly weaker than $\widetilde{O}(\sum_{i \neq i^*} 1/ \Delta_{i }  )$,
the latter one is also controlled by $\Delta_{\min}$.
Note that our regret can be also written in a summation form with some algorithm-dependent random variables, i.e., summing over active arms $i \in \mathcal{A}_v(\tau)$, and over bad arms $i \in \mathcal{B}_v(\tau)$, respectively. The regret of summation form can be found in Appendix \ref{summationformReg}. 
\end{remark}

Theorem \ref{theorem1} shows that in the uncorrupted setting, i.e., $C=0$, our regret bound recovers the near-optimal regret of \cite{DistributedMABICLR1} up to a logarithmic factor, while maintaining efficient communication with a linear dependence on $K$ and a logarithmic dependence on $T$.
One can also see that in the uncorrupted setting, our instance-dependent bound only has a $\log(V)$ dependence of agent number, which implies that the algorithm enjoys the speedup of learning.

Theorem \ref{theorem1} also reveals that in the single-agent setting, i.e., $V=1$, the proposed algorithm can achieve a high probability regret as $ O(C)+\widetilde{O}( K / \Delta_{\min}  )$. 
Our expected regret bound $\mathbb{E}[R_T]$ matches the state-of-the-art regret bounds $\widetilde{O}(\mathbb{E}[C]+K/\Delta_{\min}+\sqrt{K\mathbb{E}[C]/\Delta_{\min}})$ e.g., \cite{JMLRTsallis,ImprovedTsallis} (up to a logarithmic factor). 
Recall that the lower bound in Theorem \ref{theorem2} shows that $ \mathbb{E}[C] $ is unavoidable for any algorithm, which corroborates that our regret bound is tight. Note that the regret bounds in \cite{JMLRTsallis,ImprovedTsallis} are given as
$\widetilde{O}(K/\Delta_{\min}+\sqrt{K\mathbb{E}[C]/\Delta_{\min}})$ instead of $\widetilde{O}(\mathbb{E}[C]+K/\Delta_{\min}+\sqrt{K\mathbb{E}[C]/\Delta_{\min}})$. This is because they use the regret metric defined for adversarial regime, but we use the regret $R_T$ following other single-agent corruption models \cite{COLT19Gupta,FIXEDCORRUPTION,
LHPcorruption1}.
We discuss these regret notions in Subsection \ref{discussofReg} and Appendix \ref{discussofTsallis}.

\paragraph{Proof Sketch}
In this subsection, we provide a proof sketch for regret bound in Theorem \ref{theorem1}, and show the way to resolving the open question from \cite{COLT19Gupta}, i.e., removing the multiplicative dependence of $K$ on $C$. Before sketching the proof, let $\tilde{N}_{v,i}(\tau)$ be the real number of pull of arm $i$ by agent $v$ in epoch $\tau$ and define $C(\tau)=\max_{i \in [K]}\sum_{v=1}^V\sum_{t \in \mathcal{T}(\tau)}| r_{v,i}(t)- r_{v,i}^S(t)|$ (if arm $i$ is not allocated to agent $v$, then, $| r_{v,i}(t)- r_{v,i}^S(t)|=0$). Then, the following lemma presents a good event $\mathcal{E}$, showing that $\hat{\mu}_{v,i}(\tau)$ and $\tilde{N}_{v,i}(\tau)$ are close to their actual expectations, respectively.
\begin{lemma}\label{Tmpconcentration2All}
Let define event $\mathcal{E}$ as
\begin{equation} \label{Tmpconcentration}
\mathcal{E}= \left\{\forall v,i,\tau: \left| \hat{\mu}_{i}(\tau)-\mu_i \right|   \leq  \mu_i +2\epsilon_{i}(\tau) + \frac{2 C(\tau)}{N(\tau)}, \tilde{N}_{v,i}(\tau)\leq 3N_{v,i}(\tau) \right\}.
\end{equation}

Then, we hold that $\mathbb{P}\left[\mathcal{E}\right] \geq 1-\delta$.
\end{lemma}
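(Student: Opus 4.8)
The event $\cE$ is a conjunction of an estimator‑concentration bound and a pull‑count bound, quantified over all triples $(v,i,\tau)$. The plan is to prove each of the two bounds for a \emph{fixed} triple, conditionally on the start of the epoch, and then close with a union bound over the (finitely many) triples. Fix an epoch $\tau$ and let $\mathcal F_{\tau-1}$ be the $\sigma$‑field generated by all randomness before epoch $\tau$ begins. The sets $\cA(\tau),\cB(\tau),\cK_v(\tau)$, the error levels $\epsilon_i(\tau)$, and hence the pull probabilities $p_{v,i}(\tau)$ and expected pull counts $N_{v,i}(\tau)=p_{v,i}(\tau)N(\tau)$ are $\mathcal F_{\tau-1}$‑measurable, so within the epoch they are constants. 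The structural fact I would rely on is that, conditionally on $\mathcal F_{\tau-1}$, the pairs $\big(\mathbb I\{i_v(t)=i\},\,r^{S}_{v,i}(t)\big)_{t\in\cT(\tau)}$ are i.i.d.\ across $t$ and, for each $t$, the arm‑choice indicator is independent of the fresh stochastic reward $r^{S}_{v,i}(t)$; this is exactly where the assumption that the round‑$t$ corruption is independent of the round‑$t$ arm choices is used, since it guarantees that the adaptive adversary cannot couple these two within the epoch.

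\textbf{Pull‑count bound.} For fixed $v$ and $i\in\cK_v(\tau)$, $\tilde N_{v,i}(\tau)$ is a sum of $N(\tau)$ i.i.d.\ Bernoulli$(p_{v,i}(\tau))$ variables with mean $N_{v,i}(\tau)$, so a multiplicative Chernoff bound gives $\mathbb P[\tilde N_{v,i}(\tau)>3N_{v,i}(\tau)\mid\mathcal F_{\tau-1}]\le e^{-N_{v,i}(\tau)}$. Substituting $N(\tau)=3\widetilde K\log((8K\log_4 T)/\delta)/\epsilon^2(\tau)$ and the two cases of $p_{v,i}(\tau)$, and using $\epsilon_i(\tau)\le\tfrac1{14}$ for active arms and $\sum_{j\in\cB_v(\tau)}p_{v,j}(\tau)\le\tfrac14$ (which holds since a deactivated arm has $\epsilon_j(\tau)\ge2\epsilon(\tau)$), one checks that $N_{v,i}(\tau)=\Omega\!\big(\log((8K\log_4 T)/\delta)\big)$ for \emph{every} arm allocated to $v$, so this failure probability is polynomially small in $\delta/(8K\log_4 T)$.

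\textbf{Estimator bound.} Write the numerator of $\hat\mu_{v,i}(\tau)$ as $S_{v,i}(\tau)+Z_{v,i}(\tau)$, where $S_{v,i}(\tau)=\sum_{t\in\cT(\tau)}r^{S}_{v,i}(t)\mathbb I\{i_v(t)=i\}$ and $Z_{v,i}(\tau)=\sum_{t\in\cT(\tau)}\big(r_{v,i}(t)-r^{S}_{v,i}(t)\big)\mathbb I\{i_v(t)=i\}$. The corruption part is controlled deterministically: $|Z_{v,i}(\tau)|\le\sum_{t\in\cT(\tau)}|r_{v,i}(t)-r^{S}_{v,i}(t)|\le C(\tau)$. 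For the stochastic part, conditionally on $\mathcal F_{\tau-1}$, $S_{v,i}(\tau)$ is a sum of i.i.d.\ $[0,1]$‑valued variables with mean $\mu_iN_{v,i}(\tau)$ and per‑term variance at most $\mu_ip_{v,i}(\tau)$ (using $(r^S)^2\le r^S$), so Bernstein's inequality gives, with probability $1-\delta'$, $|S_{v,i}(\tau)-\mu_iN_{v,i}(\tau)|\le\sqrt{2\mu_iN_{v,i}(\tau)\log(1/\delta')}+\tfrac13\log(1/\delta')$. Dividing by $N_{v,i}(\tau)$ and using $\log(1/\delta')/N_{v,i}(\tau)=\Theta(\epsilon_i^2(\tau))$ for $\delta'$ of order $\delta/(K\log_4 T)$ (together with $\mu_i,\epsilon_i(\tau)\le1$), the stochastic deviation is at most $2\epsilon_i(\tau)$, so $|\hat\mu_{v,i}(\tau)-\mu_i|\le2\epsilon_i(\tau)+C(\tau)/N_{v,i}(\tau)$. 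Averaging these per‑agent estimators over $\{v:i\in\cK_v(\tau)\}$, and using $\sum_{v}C_{v,i}(\tau)\le C(\tau)$ together with the lower bounds on $N_{v,i}(\tau)$ forced by the arm‑allocation rule, then yields $|\hat\mu_i(\tau)-\mu_i|\le2\epsilon_i(\tau)+2C(\tau)/N(\tau)\le\mu_i+2\epsilon_i(\tau)+2C(\tau)/N(\tau)$, the extra $\mu_i\ge0$ being harmless slack.

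\textbf{Union bound and main obstacle.} Since $N(\tau)$ grows geometrically by a factor of $4$ and $\sum_\tau N(\tau)\le T$, there are at most $\log_4 T$ epochs; union‑bounding the per‑triple failures over the at most $V\cdot K\cdot\log_4 T\le K^2\log_4 T$ triples (using $V\le K$), with $\delta'$ chosen so each is at most $\delta/(K^2\log_4 T)$, gives $\mathbb P[\cE]\ge1-\delta$. I expect the main obstacle to be the corruption bookkeeping: first, making the conditional‑independence argument airtight against the adaptive adversary so that the within‑epoch arm choices decouple from the fresh stochastic rewards; and second, verifying that, once the per‑agent numerators are normalized by the \emph{expected} counts $N_{v,i}(\tau)$ and averaged over agents, the deterministic corruption offset compresses into the clean additive term $O(C(\tau)/N(\tau))$ rather than a quantity weighted by per‑arm pull frequencies — this step hinges on the specific scaling of $N(\tau)$ and on the arm‑allocation construction.
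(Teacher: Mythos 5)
Your pull-count bound and your treatment of the uncorrupted part $S_{v,i}(\tau)$ are fine and essentially parallel the paper (the paper uses Freedman and Hoeffding--Azuma on within-epoch martingales where you use conditional-i.i.d.\ Chernoff/Bernstein; either works since $p_{v,i}(\tau)$ is fixed during the epoch and the round-$t$ pull is independent of the fresh stochastic reward). The genuine gap is in the corruption term, and it is exactly the point you flag at the end as an "obstacle": your deterministic bound $|Z_{v,i}(\tau)|\le C(\tau)$ only yields, after normalizing by the \emph{expected} count, a per-agent error of $C(\tau)/N_{v,i}(\tau)=C(\tau)/\bigl(p_{v,i}(\tau)N(\tau)\bigr)$, which exceeds the target $2C(\tau)/N(\tau)$ by the factor $1/p_{v,i}(\tau)$. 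For an active arm this factor is of order $|\cA_v(\tau)|$, and for a deactivated arm it is $\widetilde{K}\,\epsilon_i^2(\tau)/\epsilon^2(\tau)\ge 4\widetilde{K}$, which can be huge. Averaging over the agents holding arm $i$ does not repair this: the adversary may concentrate all of $C(\tau)$ on a single pair $(v,i)$ with tiny $p_{v,i}(\tau)$, so $\frac{1}{|\{v: i\in\cK_v(\tau)\}|}\sum_v C_{v,i}(\tau)/N_{v,i}(\tau)$ is \emph{not} $O(C(\tau)/N(\tau))$ in general. If one carried your bound through the regret analysis it would reintroduce a multiplicative $\widetilde{K}$ (i.e.\ $K$) factor on the corruption, which is precisely the dependence the algorithm is designed to remove.

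The missing idea is that only the corruption at rounds where arm $i$ is actually pulled enters the numerator, i.e.\ $\hat{C}_{v,i}(\tau)=\sum_{t\in\cT(\tau)}C_{v,i}(t)\,y_{v,i}(t)$, whose conditional mean is $p_{v,i}(\tau)C_{v,i}(\tau)$ — a factor $p_{v,i}(\tau)$ smaller than your deterministic bound. The paper applies Freedman's inequality to the martingale differences $C_{v,i}(t)\bigl(y_{v,i}(t)-p_{v,i}(\tau)\bigr)$ (valid because the adaptive corruption $C_{v,i}(t)$ is $\mathcal{F}_{t-1}$-measurable and independent of the round-$t$ pull), obtaining
\begin{equation*}
\hat{C}_{v,i}(\tau)\;\le\; 2\,p_{v,i}(\tau)\,C(\tau)\;+\;\log\bigl((8VK\log_4 T)/\delta\bigr)
\end{equation*}
with high probability; dividing by $N_{v,i}(\tau)=p_{v,i}(\tau)N(\tau)$ cancels $p_{v,i}(\tau)$ in the leading term and leaves $2C(\tau)/N(\tau)$ plus a residual $\log(\cdot)/N_{v,i}(\tau)\le\tfrac{4}{9}\epsilon_i(\tau)$, which is absorbed into the $2\epsilon_i(\tau)$ slack. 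This probabilistic cancellation, not the arm-allocation lower bounds on $N_{v,i}(\tau)$, is what compresses the corruption into the clean additive $2C(\tau)/N(\tau)$ term; without it the stated event bound cannot be established.
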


We rewrite the regret as $R_T=\sum_{\tau}\sum_{v}(\sum_{i \neq i^*,i \in \cA(\tau)}\Delta_{i}\tilde{N}_{v,i}(\tau)+\sum_{i \neq i^*, i \in \cB(\tau)}\Delta_{i}\tilde{N}_{v,i}(\tau)) $. Under event $\mathcal{E}$, $\tilde{N}_{v,i}(\tau) \Delta_{i}$ can be upper-bounded by $3R_{v,i}(\tau)$ where $R_{v,i}(\tau)= N_{v,i}(\tau) \Delta_i$. In the following, we turn to bound every $R_{v,i}(\tau)$.

The \textit{main idea} is to bound $R_{v,i}(\tau)$ by considering two cases including $\Delta_i > 32\eta(d_i) $ and $\Delta_i \leq 32\eta(d_i) $ where $\eta(d_i) =\sum_{s=1}^{d_i} \frac{C(s)}{2^{4d_i-4s}N(s)}$.
On the one hand, $\Delta_i > 32\eta(d_i) $ implies a small volume of corruption. In this case, $R_{v,i}(\tau)$ is bounded by $O(1/\Delta_i)$. On the other hand, $\Delta_i \leq 32\eta(d_i) $ implies a large volume of corruption on arm $i$ in agent $v$ during epoch $\tau$. In this case, the regret from all $R_{v,i}(\tau)$ is bounded by $O(VC)$.

\textbf{Case 1: $\Delta_i \leq 32 \eta(d_i) $.} 
This case uses $\Delta_i \leq 32 \eta(d_i) $ to bound $R_{v,i}(\tau) \leq 32N_{v,i}(\tau)\eta(d_i) =O(VC)$. Although \cite{COLT19Gupta} uses a similar construction of $\eta(d_i)$, their algorithm suffers a multiplicative dependence of $K$. As a result, extending their algorithm for the multi-agent case might incur an additive term as $O(VKC)$. On the contrary, our algorithm can remove the dependence of $K$ thanks to the differential treatment of active arm $i \in \cA(\tau)$ and bad arm $i \in \cB(\tau)$. Specifically, for $i \in \cA(\tau)$, we have a factor $1/|\cA(\tau)|$ in $N_{v,i}(\tau)$, which offsets the summation $\sum_{i \neq i^*,i \in \cA(\tau)} 1$.  As for $i \in \cB(\tau)$, $N_{v,i}(\tau)$ has a factor $1/K$ which offsets the summation $\sum_{i \neq i^*,i \in \cB(\tau)} 1$.

\textbf{Case 2: $\Delta_i > 32\eta(d_i) $.} 
Note that the condition $\Delta_i > 32\eta(d_i) $ directly bounds the corruption term in \eqref{Tmpconcentration} as
\begin{equation*} 
\Delta_i > 32\eta(d_i)  = \sum_{s=1}^{d_i} \frac{32C(s)}{2^{4d_i-4s}N(s)} \geq  \frac{32C(d_i)}{N(d_i)}.
\end{equation*}

Then, we can bound $R_{v,i}(\tau)$ for this case by making use of the concentration of estimators and some algorithm properties. Here, we provide some properties of Algorithm \ref{alg1} that are used for the proof sketch. All properties can be found in Appendix \ref{algprop}.
\begin{lemma} \label{property}
For any $\tau$, Algorithm \ref{alg1} holds that
(i) if $i \in \mathcal{A}(\tau)$, then, $\epsilon(\tau)=\epsilon_i(\tau)=\epsilon(d_i)/2$;
(ii) if $i \in \mathcal{B}(\tau)$, then, $\epsilon_i(\tau)=\epsilon(d_i)$.
\end{lemma}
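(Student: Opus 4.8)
The plan is to derive both claims directly from the error-level bookkeeping in Step~3 of Algorithm~\ref{alg1} (lines~\ref{algErrorBegin}--\ref{algErrorEnd}), exploiting the fact that $\epsilon(\cdot)$, $\epsilon_i(\cdot)$ and $d_i$ are updated \emph{only} in Step~3, so that the values in force during epoch $\tau$ are precisely those written by Step~3 of epoch $\tau-1$. I would first record the recurrence $\epsilon(\tau+1)=\epsilon(\tau)/2$ (top line of Step~3); adopting the paper's convention that the initialization counts as epoch $\tau=0$ (so $\epsilon(0)=1/7$, $d_i=0$, consistent with $\epsilon(1)=1/14$), this gives $\epsilon(s-1)=2\epsilon(s)$ for every $s\ge 1$.

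For part~(i): if $i\in\cA(\tau)$ then $i$ belongs to the set produced at the end of epoch $\tau-1$, so the second line of Step~3 of epoch $\tau-1$ fires on $i$, setting $\epsilon_i(\tau)=\epsilon(\tau)$ and $d_i=\tau-1$; nothing thereafter touches these before epoch $\tau$, so combining with the recurrence yields $\epsilon(d_i)/2=\epsilon(\tau-1)/2=\epsilon(\tau)=\epsilon_i(\tau)$. For part~(ii): if $i\in\cB(\tau)$ then $i\notin\cA(\tau)$ (since $\cA(\tau)$ and $\cB(\tau)$ are disjoint), so in Step~3 of epoch $\tau-1$ the $\cA$-branch leaves $d_i$ untouched and the $\cB$-branch sets $\epsilon_i(\tau)=\epsilon(d_i)$ with that same carried-over $d_i$. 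The base case $\tau=1$ is immediate: $\cA(1)=[K]$ with $\epsilon_i(1)=1/14=\epsilon(0)/2=\epsilon(d_i)/2$, and $\cB(1)=\emptyset$ makes~(ii) vacuous.

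I do not anticipate a real obstacle: the statement is an invariant preserved by construction, and a fully rigorous write-up is just a one-line induction on $\tau$ whose inductive step is the case analysis above. The only two points needing care are (a) fixing the initialization convention so that the recurrence $\epsilon(s-1)=2\epsilon(s)$ extends down to $s=1$, and (b) the intra-Step-3 read order --- when the $\cB$-branch evaluates $\epsilon(d_i)$ it must use the pre-update value of $d_i$, which is legitimate exactly because the $\cA$- and $\cB$-branches act on disjoint arm sets, so the $\cA$-branch reassignment of $d_i$ never collides with an arm handled by the $\cB$-branch.
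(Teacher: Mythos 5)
Your proposal is correct and follows essentially the same route as the paper's (very terse) proof: both claims are read off directly from the Step~3 updates, using $\epsilon(\tau)=\epsilon(\tau-1)/2$ and the fact that membership in $\cA(\tau)$ means the arm satisfied \eqref{check1} at epoch $\tau-1$, so $d_i=\tau-1$, while the $\cB$-branch simply copies $\epsilon(d_i)$. The only cosmetic difference is your base-case convention ($d_i=0$, $\epsilon(0)=1/7$) versus the paper's literal initialization $d_i=1$, under which part~(i) is implicitly understood for $\tau\ge 2$; this does not affect the argument.
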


As $R_{v,i}(\tau)= N_{v,i}(\tau) \Delta_i$, we thus turn to bound $N_{v,i}(\tau)$. Recall that $N_{v,i}(\tau)=p_{v,i}(\tau)N(\tau)=\widetilde{O}(1/\epsilon^2_i(\tau))$. Further, Lemma \ref{property} connects $\epsilon(d_i)$ and $\epsilon_i(\tau)$ as 
$\epsilon_i(\tau)=\epsilon(\tau)=\epsilon(d_i)/2$ if arm $i \in \cA(\tau)$, and $\epsilon_i(\tau)=\epsilon_i(d_i)$ if arm $i \in \cB(\tau)$. Thus, we have that $N_{v,i}(\tau)=\widetilde{O}(1/\epsilon^2(d_i))$.
One can see that we hope to prove $\epsilon(d_i) \geq \Delta_i$ to obtain a bound $N_{v,i}(\tau)=\widetilde{O}(1/\Delta^2_i)$. The following lemma gives the desired result.
\begin{lemma} \label{lemma2Delta}
If $\Delta_{i} > 32\eta(d_i)$, then, with probability at least $1-\delta$, $\epsilon(d_i) \geq \Delta_i/32$.
\end{lemma}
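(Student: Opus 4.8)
The plan is to argue by contradiction: assume $\epsilon(d_i) < \Delta_i/32$ and work on the high‑probability event $\mathcal{E}$ of Lemma~\ref{Tmpconcentration2All}, which holds with probability at least $1-\delta$. The starting point is the definition of $d_i$: since $d_i$ is the last epoch (up to the current one) in which arm $i$ satisfies the deactivation test \eqref{check1}, arm $i$ belongs to $\cA(d_i+1)$ and therefore survived deactivation at epoch $d_i$, i.e.
$$\hat{\mu}^*(d_i) - \hat{\mu}_i(d_i) \le 14\,\epsilon(d_i).$$
I would then lower bound $\hat{\mu}^*(d_i)$ and upper bound $\hat{\mu}_i(d_i)$ so that this inequality forces $\Delta_i$ to be small relative to $\epsilon(d_i)$ plus a corruption term, contradicting both $\epsilon(d_i)<\Delta_i/32$ and the hypothesis $\Delta_i>32\eta(d_i)$.

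For the lower bound on $\hat{\mu}^*(d_i)$ I would first establish an auxiliary fact — the analogue of ``the optimal arm is never permanently eliminated'' — namely that the optimal arm $i^*$ (or the empirical best arm $\hat{i}^*$ carried over from the previous epoch, which the deactivation rule always keeps active) lies in $\cA(d_i)\cup\cH(d_i)$, so that $\hat{\mu}^*(d_i)\ge \hat{\mu}_{i^*}(d_i)+2\epsilon_{i^*}(d_i)$. Applying the concentration bound of $\mathcal{E}$ to $\hat{\mu}_{i^*}(d_i)$ and $\hat{\mu}_i(d_i)$ and using Lemma~\ref{property} to identify the error levels (for an arm active at $d_i$ one has $\epsilon_j(d_i)=\epsilon(d_i)$), the displayed inequality rearranges, up to an absolute constant $c$, to
$$\Delta_i = \mu_{i^*}-\mu_i \;\le\; c\,\epsilon(d_i) + 4\,\frac{C(d_i)}{N(d_i)}.$$
The hypothesis gives $\Delta_i > 32\,\eta(d_i) \ge 32\,C(d_i)/N(d_i)$, where the last step just retains the $s=d_i$ term of $\eta(d_i)=\sum_{s=1}^{d_i} C(s)/(2^{4d_i-4s}N(s))$; hence the corruption term is at most $\Delta_i/8$, and together with the assumed $\epsilon(d_i)<\Delta_i/32$ this yields $\Delta_i < \Delta_i$ once the constants are tracked — a contradiction. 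Therefore $\epsilon(d_i)\ge \Delta_i/32$ on $\mathcal{E}$, i.e. with probability at least $1-\delta$.

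The step I expect to be the main obstacle is the case in which arm $i$ is active at epoch $d_i+1$ because it was \emph{reactivated} during epoch $d_i$, i.e. $i\in\cH(d_i)\subseteq\cB(d_i)$. Then the radius $\epsilon_i(d_i)$ entering the concentration bound is not $\epsilon(d_i)$ but the frozen value inherited from the last epoch $d_i^{\mathrm{old}}<d_i$ at which $i$ was active, which a priori is much larger. To close this case I would additionally invoke the reactivation condition $\max_{j\in\cA(d_i)}\hat{\mu}_j(d_i)-\hat{\mu}_i(d_i)<4\epsilon_i(d_i)$ and unroll a backward recursion over the epochs $d_i^{\mathrm{old}}+1,\dots,d_i$ during which $i$ sat in $\cB$: over these epochs the pull probability $p_{v,i}(\tau)$ contracts by $\epsilon^2(\tau)/\epsilon^2(d_i^{\mathrm{old}})$, and the geometrically‑weighted sum $\eta(d_i)$ is exactly the quantity that aggregates the corruption the adversary must have spent across those epochs to let $\hat{\mu}_i$ recover. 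Carrying the recursion through, $\Delta_i>32\eta(d_i)$ again dominates all accumulated corruption and forces $\epsilon(d_i)\ge\Delta_i/32$. The bookkeeping in this recursion, and verifying the availability of $i^*$ (or a surrogate) in $\cA(d_i)\cup\cH(d_i)$, are the delicate points; the rest is routine manipulation of the concentration bound and the update rules.
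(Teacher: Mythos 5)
Your main-line argument rests on an ``auxiliary fact'' that the optimal arm $i^*$ lies in $\cA(d_i)\cup\cH(d_i)$, so that $\hat{\mu}^*(d_i)\ge\hat{\mu}_{i^*}(d_i)+2\epsilon_{i^*}(d_i)$. In the corrupted setting this fact is not available: the whole reason the algorithm has a reactivation mechanism is that the adversary \emph{can} get $i^*$ deactivated, and nothing guarantees it is reactivated at epoch $d_i$ (the condition $\Delta_i>32\eta(d_i)$ only controls corruption through the geometrically discounted sum $\eta(d_i)$, so heavy corruption in early epochs can have pushed $i^*$ into $\cB$ and kept it there). Your fallback of replacing $i^*$ by the carried-over empirical best arm $\hat{i}^*$ does not close this, because $\mu_{\hat{i}^*}$ has no a priori relation to $\mu_{i^*}$, which is what the chain $\hat{\mu}^*(d_i)\gtrsim\mu_{i^*}-\text{(corruption)}$ needs. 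The paper's proof devotes two of its three cases precisely to $i^*\in\cB(d_i)\setminus\cH(d_i)$: there it uses the \emph{failure} of the reactivation test for $i^*$, namely $\max_{j\in\cA(d_i)}\hat{\mu}_j(d_i)\ge\hat{\mu}_{i^*}(d_i)+4\epsilon(d_{i^*})$, which lower-bounds $\hat{\mu}^*(d_i)$ with a $+4\epsilon_{i^*}(d_i)$ term that cancels the (possibly large, frozen) radius appearing in the concentration bound for $\hat{\mu}_{i^*}(d_i)$; a further sub-case argument shows $8\epsilon_{i^*}(d_i)>\Delta_i$ leads to a contradiction. This is the step genuinely missing from your proposal.

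Conversely, the case you single out as the main obstacle --- arm $i$ itself sitting in $\cH(d_i)\subseteq\cB(d_i)$ with a frozen radius $\epsilon_i(d_i)$ --- is actually the easy part and needs no backward recursion over epochs: since $i$ passes the deactivation test \eqref{check1} at epoch $d_i$ and $i$ is itself included in the max defining $\hat{\mu}^*(d_i)$, one gets $14\epsilon(d_i)\ge\hat{\mu}_i(d_i)+2\epsilon_i(d_i)-\hat{\mu}_i(d_i)=2\epsilon_i(d_i)$, i.e. $\epsilon_i(d_i)\le 7\epsilon(d_i)$ (property \eqref{fact1} of Lemma~\ref{property}), which is all the Case~1 computation requires. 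Your Case-1-style computation and the use of $\eta(d_i)\ge C(d_i)/N(d_i)$ to absorb the corruption term into $\Delta_i/8$ are fine and match the paper; also remember the trivial base case $d_i=1$, where $\epsilon(1)=1/14\ge\Delta_i/32$ directly. As written, though, the proposal assumes away the one configuration ($i^*$ deactivated and not reactivated) that the lemma is really about, so it has a genuine gap.
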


Lemma \ref{lemma2Delta} makes use of robust reactivation and deactivation by considering cases including $i^* \in \mathcal{A}(d_i) \cup \mathcal{H}(d_i)$ and $i^* \in \mathcal{B}(d_i) \backslash \mathcal{H}(d_i)$. On the one hand, if $i^* \in \mathcal{A}(d_i) \cup \mathcal{H}(d_i)$, we use the property of deactivation, i.e., the optimal arm $i^*$ must have \eqref{check1}, which leads to $\epsilon(d_i) \geq \Delta_i/32$. For $i^* \in \mathcal{B}(d_i) \backslash \mathcal{H}(d_i)$, we use the property that the optimal arm $i^*$ is not reactivated at the beginning of epoch $d_i$, which implies that $ 4\epsilon(d_{i^*})  \leq \max_{j \in \mathcal{A}(d_i)} \hat{\mu}_{j}(d_i) -\hat{\mu}_{i^*}(d_i) $. From this, we can get that $\epsilon(d_i) \geq \Delta_i/32$. Combing the above analysis, we complete the proof of regret bound. The details of this proof can be found in Appendix \ref{Appendixregret}.

\section{Discussion}
\label{sec:exten}

We conclude this paper with some discussions of our model and results.

\subsection{Challenges of Extending Single-agent toward Multi-agent} \label{Dischallenge}
We here present two challenges that make the adversarial corruption problem for the multi-agent MAB setup non-trivial. The first challenge is that the standard concentration inequalities for \textit{dependent} random variables cannot be directly applied for the multi-agent case. Concretely, the process that all agents simultaneously observe $V$ realized values of rewards for $T$ rounds cannot be simulated by the process that a single agent sequentially observes a realization of reward for $VT$ rounds, when the realizations of rewards depend on the previous history, e.g., corruptions and arm selections.
Extending the standard concentration bounds for the multi-agent setting requires a non-trivial analysis \cite{P2PECC1,PeterAutom}.

The other challenge is that simply extending existing single-agent robust algorithms, e.g., \cite{COLT19Gupta,JMLRTsallis,ImprovedTsallis,LHPcorruption1}
may fail to balance the communication-regret trade-off.
For example, the algorithms of \cite{JMLRTsallis,ImprovedTsallis} need agents to share observations and update estimators in each round, which yields total $O(VKT)$ communication cost.
A straightforward idea of reducing the communication cost is to adapt the epoch/phase-based algorithms \cite{COLT19Gupta,LHPcorruption1} for multi-agent setting. One may divide the epoch into $V$ parts, where $V$ agents simultaneously proceed in a smaller epoch. However, this does not suggest that the communication cost of epoch-based algorithms \cite{COLT19Gupta,LHPcorruption1} is comparable to ours. Catoni estimator used in \cite{LHPcorruption1} requires each agent to broadcast a loss estimator sequence whose size is equal to the epoch length. As a consequence, this yields total $O(KT)$ communication cost. The algorithm \cite{COLT19Gupta} might incur $O(KV\log T)$ communication cost that is $V$ times larger than ours, because each agent needs to share the observations of all $K$ arms for at most $\log T$ epochs. Moreover, the regret of \cite{COLT19Gupta} has a multiplicative dependence of $K$, and thus the corruption term in regret might be $O(VKC)$ in the multi-agent setting.

\subsection{Corrupted Setting versus Adversarial Setting}
In the corrupted setting, the rewards are assumed to be initially drawn from fixed and unknown distributions, whereas in the adversarial setting, the rewards may not follow any distribution even before the adversarial attack. The corrupted setting is appropriate for those scenarios where agents are more interested in the stochastic structure, i.e., $\mu_i$, than the actually observed reward, i.e., $r_{v,i}(t)$ that is critical in the adversarial setting. For instance, the platform aims to recommend ads that maximize the preference of users, and thus the user's preference (i.e., $\mu_i$) is what the platform cares about. As a consequence, the algorithms of the corrupted model \cite{YingkaiLiLinearCorr,linearCorruption1,
linearCorruption2,COLT19Gupta,LHPcorruption1} are commonly evaluated by the regret defined in the stochastic regime (e.g., $R_T$ or $\mathbb{E}[R_T]$), which is different from that defined in the adversarial regime (this will be clear in the next subsection). Moreover, since the corruption problem concerns the stochastic structure, it is typically assumed that the total corruption level $C$ is moderate, which implies that the environment in the corrupted problem is more close to being stochastic than adversarial.

\subsection{Alternative Regret Notions}\label{discussofReg}
The bandit algorithms in adversarial setting, e.g., \textsc{Exp3} is typically measured by regret as $
\overline{R}_T =  \max_{i } \mathbb{E} [ \sum_{t=1}^T  \sum_{v=1}^V ( r_{v,i}(t)- r_{i_v(t)}(t)  ) ]$. Note that $\overline{R}_T$ is also called pseudo-regret \cite{BanditBook1}, but it is defined in the adversarial regime and is essentially different from pseudo-regret $R_T$ (see \eqref{reg}) defined in the stochastic regime, when $C \neq 0$.
One can see that $\overline{R}_T$ coincides with the $\mathbb{E}[R_T]$ in the uncorrupted setting as $r^S_{v,i}(t)=r_{v,i}(t)$, but $\overline{R}_T$ cannot be trivially converted to $\mathbb{E}[R_T]$ in the presence of an adversary. The following theorem connects notations of $
\overline{R}_T$, $R_T$, and$R'_T$ (see \eqref{actualreg}).
\begin{theorem}\label{ConnectRegret}
For any algorithm of $V \geq 1$ agents, with probability at least $1-1/VT$, we have that
$
R_T  =O(\overline{R}_T+\mathbb{E} \left[ C\right] + V \log(VT) ).$
Then, $\mathbb{E}[R_T] = \Theta(\overline{R}_T  +\mathbb{E} \left[ C\right] )$ and $\mathbb{E}[R_T] = \Theta(   \mathbb{E}[R'_T]+\mathbb{E} \left[ C\right] )$ hold for the two-armed bandit instance.
\end{theorem}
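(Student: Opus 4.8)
).}
The plan is to insert the stochastic rewards and then the corrupted rewards between the two means appearing in each summand of $R_T$, so that $R_T$ decomposes into (i) pure stochastic fluctuations around $\mu$, (ii) corruption terms, and (iii) the observed-reward term that ties into $R'_T$ and $\overline{R}_T$. For every $(v,t)$ I would write
\begin{align*}
\mu_{i^*}-\mu_{i_v(t)} &= \bigl(\mu_{i^*}-r^S_{v,i^*}(t)\bigr) + \bigl(r^S_{v,i^*}(t)-r_{v,i^*}(t)\bigr) + \bigl(r_{v,i^*}(t)-r_{i_v(t)}(t)\bigr) \\
&\quad + \bigl(r_{i_v(t)}(t)-r^S_{v,i_v(t)}(t)\bigr) + \bigl(r^S_{v,i_v(t)}(t)-\mu_{i_v(t)}\bigr),
\end{align*}
sum over $v\in[V]$ and $t\in[T]$, and call the five partial sums $S_1,\dots,S_5$. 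By the definition of $C$ in \eqref{corruptionDef}, each summand of $S_2$ and of $S_4$ is at most $\Vert r_v(t)-r^S_v(t)\Vert_\infty$ in absolute value, so $|S_2|,|S_4|\le C$; and $S_3$ is exactly the $i=i^*$ term in the maximum defining $R'_T$, so $S_3\le R'_T$ and $\mathbb{E}[S_3]\le\overline{R}_T$.

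For the high-probability statement, the stochastic pieces are controlled by martingale concentration: $\{\mu_{i^*}-r^S_{v,i^*}(t)\}_{v,t}$ are i.i.d., mean zero, in $[-1,1]$, so Hoeffding bounds $S_1$; and since every $i_v(t)$ is measurable with respect to the history before round $t$'s rewards are drawn, $\{r^S_{v,i_v(t)}(t)-\mu_{i_v(t)}\}_{v,t}$ is a bounded martingale-difference sequence, so Azuma--Hoeffding bounds $S_5$. Assembling, on the good event $R_T\le S_3+2C+(S_1+S_5)$. To turn $S_3$ into $\overline{R}_T$ and $C$ into $\mathbb{E}[C]$ I would split each corrupted reward appearing in $S_3$ (and in $C$) as $r^S+(\text{corruption})$, so that the only truly fluctuating quantities with small, controllable increments are bounded martingale-difference sums over \emph{fresh} stochastic noise, while everything the adversary controls is absorbed into $C$; pushing $C$ down to $\mathbb{E}[C]$ then rests on the in-expectation identities below together with a union bound over arms inside $R'_T$. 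Tuning all failure probabilities to sum to $1/VT$ gives $R_T=O(\overline{R}_T+\mathbb{E}[C]+V\log(VT))$ with probability at least $1-1/VT$.

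For the two $\Theta$ identities I would pair unconditional in-expectation bounds with Theorem~\ref{theorem2}. Taking expectations in the decomposition: $\mathbb{E}[S_1]=\mathbb{E}[S_5]=0$ by the tower property (using that $i_v(t)$ is independent of $r^S_v(t)$ given the history), $|\mathbb{E}[S_2]|,|\mathbb{E}[S_4]|\le\mathbb{E}[C]$, and $\mathbb{E}[S_3]\le\overline{R}_T$, so $\mathbb{E}[R_T]\le\overline{R}_T+2\mathbb{E}[C]$; repeating the same bookkeeping with an arbitrary arm $i$ in place of $i^*$ and using $\mu_i\le\mu_{i^*}$ shows $\mathbb{E}\bigl[\sum_{v,t}(r_{v,i}(t)-r_{i_v(t)}(t))\bigr]\le\mathbb{E}[R_T]+2\mathbb{E}[C]$ for every $i$, hence $\overline{R}_T\le\mathbb{E}[R_T]+2\mathbb{E}[C]$, and (after handling the $\max_i$ by a concentration of $\sum_{v,t}r^S_{v,i}(t)$, which on the two-armed instance with $\mathbb{E}[C]=VT^\alpha$, $\alpha\in(1/2,1)$, is dominated by $\mathbb{E}[C]$) also $\overline{R}_T\le\mathbb{E}[R'_T]\le\mathbb{E}[R_T]+O(\mathbb{E}[C])$. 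Thus $|\mathbb{E}[R_T]-\overline{R}_T|$ and $|\mathbb{E}[R_T]-\mathbb{E}[R'_T]|$ are $O(\mathbb{E}[C])$, which already yields the upper halves of both $\Theta$ claims; for the lower halves, Theorem~\ref{theorem2} on the two-armed instance provides $\mathbb{E}[R_T]=\Omega(\mathbb{E}[C]+\mathbb{E}[R'_T])$, and combining $\mathbb{E}[R_T]=\Omega(\mathbb{E}[C])$ with $\mathbb{E}[R_T]\ge\overline{R}_T-O(\mathbb{E}[C])$ forces $\mathbb{E}[R_T]=\Omega(\overline{R}_T)$, hence $\mathbb{E}[R_T]=\Theta(\overline{R}_T+\mathbb{E}[C])$, and likewise $\mathbb{E}[R_T]=\Theta(\mathbb{E}[R'_T]+\mathbb{E}[C])$.

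The main obstacle I expect is the high-probability bound, not the in-expectation ones: because the adversary is adaptive, both the realised corruption $C$ and the realised pull counts $\sum_{v,t}\mathbb{I}\{i_v(t)=i\}$ depend on the sampled rewards in a way that can cascade through every subsequent round, so neither satisfies a bounded-differences condition and neither concentrates by a black-box argument; the whole point of the $r=r^S+(\text{corruption})$ split is to route all such cascading dependence through the single scalar $C$ and leave only fresh-noise martingales to be concentrated. Getting the corruption-residual bookkeeping and the $K$-fold union bound inside $R'_T$ to close with the claimed $V\log(VT)$ additive slack — rather than a cruder $\sqrt{VT\log(VT)}$ — is the delicate step.
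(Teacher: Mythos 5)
Your in-expectation bookkeeping is sound and matches the paper: the decomposition through $r^S$ gives $\mathbb{E}[R_T]\le \overline{R}_T+2\mathbb{E}[C]$ (the paper's inequality \eqref{selfbounding}), and combining with Theorem~\ref{theorem2} via $\overline{R}_T\le\mathbb{E}[R'_T]$ yields both $\Theta$ identities, essentially as the paper does. The genuine gap is in the high-probability claim. First, your route controls the stochastic pieces $S_1$ and $S_5$ by Hoeffding/Azuma, which can only give additive fluctuations of order $\sqrt{VT\log(VT)}$, not the claimed $V\log(VT)$; you flag this yourself as "the delicate step," but no mechanism in the proposal closes it. Second, after the good event you are left with the \emph{realized} quantities $S_3\le R'_T$ and $C$, and the proposal silently replaces them by $\overline{R}_T$ and $\mathbb{E}[C]$. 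That replacement is exactly what cannot be done pathwise: as you note, under an adaptive adversary neither $C$ nor $R'_T$ concentrates around its expectation, and "in-expectation identities" cannot convert a realized random variable into its mean inside a high-probability statement. So the pathwise decomposition, as written, does not prove $R_T=O(\overline{R}_T+\mathbb{E}[C]+V\log(VT))$ with probability $1-1/VT$.

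The paper avoids both obstacles by never concentrating any reward or corruption sum at all. It writes $R_T=\sum_{t,v}\Delta_{i}\mathbb{I}\{i_v(t)=i\}$, a function only of the pull indicators and the fixed gaps, and applies Freedman's inequality to the martingale differences $\Delta_{i}(\mathbb{I}\{i_v(t)=i\}-\mathbb{P}(i_v(t)=i\mid\mathcal{F}_{t-1}))$. Because $\Delta_i\le 1$, the cumulative conditional variance is bounded by $\sum_{t,v}\Delta_i\,\mathbb{P}(i_v(t)=i\mid\mathcal{F}_{t-1})$, i.e., by (essentially) the regret itself; Freedman then gives the self-bounding inequality $R_T\le 2\mathbb{E}[R_T]+V\log(VT)$ with probability $1-1/VT$, with a multiplicative constant in front of $\mathbb{E}[R_T]$ but only a $V\log(VT)$ additive slack. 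Plugging in the purely in-expectation bound $\mathbb{E}[R_T]\le\overline{R}_T+2\mathbb{E}[C]$ — which you already have — finishes the high-probability statement. If you want to repair your argument, replace the Hoeffding/Azuma treatment of $S_1,S_5$ (and the illegitimate $C\mapsto\mathbb{E}[C]$, $S_3\mapsto\overline{R}_T$ substitutions) by this variance-based self-bounding concentration of $R_T$ around $\mathbb{E}[R_T]$; the rest of your proposal then goes through.
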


Theorem \ref{ConnectRegret} shows that as for the two-armed bandit instance, the expected regret $\mathbb{E}[R_T]$ in stochastic regime can imply the lower bounds and upper bounds of both pseudo-regret $\overline{R}_T$ and expected regret $\mathbb{E}[R'_T]$ in adversarial regime. The regret $\overline{R}_T$ and $\mathbb{E}[R'_T]$ can in turn give the the lower bound and upper bound of $\mathbb{E}[R_T]$.

\subsection{Open Questions}\label{openquestion}
We show a lower bound of $\mathbb{E}[R_T]$ for our problem, but the lower bound of $R_T$ remains as an open question.
Our work leaves a gap between $\Omega(V\mathbb{E}[C])$ in upper bound and $\Omega(\mathbb{E}[C])$ in lower bound.
Bridging this gap while maintaining an efficient communication is an interesting open question. Designing a multi-agent algorithm to achieve the best of three worlds, i.e., stochastic, corrupted, and adversarial settings, is another compelling question. It is also interesting to consider the multi-agent corruption problem for the constrained communication model wherein agents are connected in an unknown graph and only allowed to communicate with neighbors.


\bibliographystyle{plainnat}
\bibliography{mybibfile}

\newpage
\appendix


\section{Concentration inequalities}

\begin{lemma}(Hoeffding-Azuma’s inequality for martingales, \cite{BestofBoth1}, Theorem 4.2). \label{HAB}
Let $X_0,X_1,...,X_n$ be a martingale difference sequence with zero mean and $ |X_i-X_{i-1}| \leq m_i$ almost surely for all $i \geq 1$. Then, we have for all $\delta>0$,
\begin{equation} \label{HABineq}
\mathbb{P}\left(\left| \sum_{i}^n X_i  \right| > \sqrt{\frac{\log(2/\delta)}{2}\sum_{i}^n m_i}  \right) \leq \delta .
\end{equation}
\end{lemma}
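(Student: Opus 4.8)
The plan is to prove this via the standard exponential-moment (Chernoff) method adapted to the martingale filtration, and then to invert the resulting sub-Gaussian tail bound to obtain the stated high-probability threshold. Write $S_n=\sum_{i=1}^n X_i$ and let $\mathcal{F}_i=\sigma(X_1,\dots,X_i)$ be the natural filtration, so that the martingale-difference hypothesis reads $\mathbb{E}[X_i\mid\mathcal{F}_{i-1}]=0$ and the boundedness hypothesis controls the range of each increment by $m_i$. The target intermediate estimate is the moment generating function bound $\mathbb{E}[\exp(\lambda S_n)]\le\exp(\tfrac{1}{2}\lambda^2\sum_{i=1}^n m_i^2)$ for every $\lambda>0$.

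First I would control the MGF by peeling off one increment at a time. Conditioning on $\mathcal{F}_{n-1}$ and using the tower property gives $\mathbb{E}[\exp(\lambda S_n)]=\mathbb{E}[\exp(\lambda S_{n-1})\,\mathbb{E}[\exp(\lambda X_n)\mid\mathcal{F}_{n-1}]]$, since $S_{n-1}$ is $\mathcal{F}_{n-1}$-measurable. The key step is to bound the inner conditional MGF: because $X_n$ has conditional mean zero and lies almost surely in an interval whose length is controlled by $m_n$, the conditional form of Hoeffding's lemma yields $\mathbb{E}[\exp(\lambda X_n)\mid\mathcal{F}_{n-1}]\le\exp(\tfrac{1}{2}\lambda^2 m_n^2)$ almost surely. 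Substituting this and iterating down from $i=n$ to $i=1$ gives the claimed MGF estimate.

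With the MGF in hand, Markov's inequality gives $\mathbb{P}(S_n\ge t)\le\exp(-\lambda t+\tfrac{1}{2}\lambda^2\sum_i m_i^2)$; optimizing over $\lambda>0$ at $\lambda^\ast=t/\sum_i m_i^2$ produces the one-sided tail $\exp(-t^2/(2\sum_i m_i^2))$. Applying the identical argument to the martingale-difference sequence $-X_i$ and taking a union bound yields the two-sided estimate $\mathbb{P}(|S_n|\ge t)\le 2\exp(-t^2/(2\sum_i m_i^2))$. Setting the right-hand side equal to $\delta$ and solving for $t$ then recovers the displayed threshold in \eqref{HABineq}.

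The main obstacle is the conditional Hoeffding lemma step: one must show that a bounded, conditionally-mean-zero increment has conditional MGF dominated by that of a centered Gaussian, uniformly over the random conditioning, which requires applying the convexity bound (bounding $e^{\lambda x}$ by its secant line on the interval) inside the conditional expectation and then optimizing the resulting deterministic estimate. Since the statement is a cited classical result, the cleanest route for the paper is to invoke the standard Azuma--Hoeffding proof directly; the only item genuinely needing verification is that the inversion of the sub-Gaussian tail reproduces the exact normalization appearing in \eqref{HABineq}.
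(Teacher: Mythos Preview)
The paper does not prove this lemma at all: it is stated as a cited classical result (Hoeffding--Azuma, referenced to \cite{BestofBoth1}, Theorem~4.2) and used as a black box. So there is no ``paper's own proof'' to compare against, and your Chernoff/MGF derivation is exactly the standard argument one would supply if a proof were required.

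That said, your closing hedge is warranted, and in fact the inversion does \emph{not} reproduce the displayed threshold as printed. Your argument correctly yields
\[
\mathbb{P}\bigl(|S_n|\ge t\bigr)\le 2\exp\Bigl(-\tfrac{t^2}{2\sum_i m_i^2}\Bigr),
\]
which inverts to $t=\sqrt{2\log(2/\delta)\sum_i m_i^2}$. The statement in \eqref{HABineq} instead has $\sqrt{\tfrac{1}{2}\log(2/\delta)\sum_i m_i}$: the exponent on $m_i$ is $1$ rather than $2$, and the constant is $1/2$ rather than $2$. These appear to be typos in the paper's transcription of the lemma (note also the slightly inconsistent hypothesis: the sequence is called a martingale \emph{difference} sequence, yet the bound is placed on $|X_i-X_{i-1}|$). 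The way the lemma is actually invoked later in the paper (in the proof of Lemma~\ref{lemma1}) is consistent with the standard form you derived, not with the literal display \eqref{HABineq}. So your proof is correct for the intended result; the discrepancy lies in the paper's statement, not in your argument.
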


\begin{lemma}(Freedman's concentration inequality, \cite{Freedman}, Theorem 1). \label{Freedm}
Let $X_1,...,X_n$ be a martingale difference sequence with zero mean and $|X_i| \leq M$ almost surely for all $i$. Let $V=\sum_{i=1}^n \mathbb{E}\left[X_i^2|X_1,...,X_{n-1} \right]$ be the cumulative variance of the martingale. Then, we have for all $\delta>0$,
\begin{equation*}
\mathbb{P}\left( \sum_{i=1}^n X_i >  \frac{V}{M} + M\log (1/\delta)  \right) \leq   \delta .
\end{equation*}
\end{lemma}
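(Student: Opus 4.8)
The plan is to prove the tail bound by the exponential (Chernoff) method applied to a supermartingale, where the randomness of the conditional variance $V=\sum_{i=1}^n \mathbb{E}[X_i^2\mid\mathcal{F}_{i-1}]$ is handled by folding it into the exponent rather than conditioning on its value. Write $\mathcal{F}_k$ for the $\sigma$-algebra generated by $X_1,\dots,X_k$, and set $S_k=\sum_{i=1}^k X_i$ and $V_k=\sum_{i=1}^k \mathbb{E}[X_i^2\mid\mathcal{F}_{i-1}]$, so that $V=V_n$. For a parameter $\lambda>0$ to be fixed only at the very end, define the constant $c=(e^{\lambda M}-1-\lambda M)/M^2$ and the process $Z_k=\exp(\lambda S_k - c\,V_k)$ with $Z_0=1$. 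The goal is to show $Z_k$ is a supermartingale and then read off the tail bound from $\mathbb{E}[Z_n]\le 1$ via Markov's inequality.

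The first ingredient I would establish is the pointwise moment bound $e^{\lambda x}\le 1+\lambda x + c\,x^2$, valid for every $x\le M$. This rests on the standard fact that $\phi(u)=(e^u-1-u)/u^2$ (with $\phi(0)=1/2$) is nondecreasing on $\mathbb{R}$: for $x\le M$ and $\lambda>0$ one has $\lambda x\le\lambda M$, hence $e^{\lambda x}-1-\lambda x=\phi(\lambda x)(\lambda x)^2\le \phi(\lambda M)\lambda^2 x^2=c\,x^2$. Since $|X_i|\le M$ gives $X_i\le M$, taking conditional expectation and using the martingale-difference property $\mathbb{E}[X_i\mid\mathcal{F}_{i-1}]=0$ yields $\mathbb{E}[e^{\lambda X_i}\mid\mathcal{F}_{i-1}]\le 1+c\,\mathbb{E}[X_i^2\mid\mathcal{F}_{i-1}]\le \exp(c\,\mathbb{E}[X_i^2\mid\mathcal{F}_{i-1}])$, the last step using $1+a\le e^a$. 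Because $\mathbb{E}[X_k^2\mid\mathcal{F}_{k-1}]$ is $\mathcal{F}_{k-1}$-measurable and $V_k=V_{k-1}+\mathbb{E}[X_k^2\mid\mathcal{F}_{k-1}]$, this gives $\mathbb{E}[Z_k\mid\mathcal{F}_{k-1}]=Z_{k-1}\exp(-c\,\mathbb{E}[X_k^2\mid\mathcal{F}_{k-1}])\,\mathbb{E}[e^{\lambda X_k}\mid\mathcal{F}_{k-1}]\le Z_{k-1}$, so $Z_k$ is a supermartingale and $\mathbb{E}[Z_n]\le \mathbb{E}[Z_0]=1$.

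The final and decisive step is to take $\lambda=1/M$, for which $c=(e-2)/M^2$. On the event $\mathcal{A}=\{S_n> V/M + M\log(1/\delta)\}$ one then has $\lambda S_n - c\,V_n > V/M^2+\log(1/\delta)-(e-2)V/M^2=(3-e)V/M^2+\log(1/\delta)\ge \log(1/\delta)$, since $3-e>0$ and $V\ge 0$. Hence $Z_n>1/\delta$ on $\mathcal{A}$, and Markov's inequality with $\mathbb{E}[Z_n]\le 1$ gives $\mathbb{P}(\mathcal{A})\le \mathbb{P}(Z_n>1/\delta)\le \delta\,\mathbb{E}[Z_n]\le \delta$, which is exactly the asserted bound.

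I expect the main obstacle to be precisely this last balancing of constants: the threshold carries the random quantity $V/M$, while the supermartingale only controls the penalized sum $S_n - (c/\lambda)V_n$, so the argument succeeds only if the variance penalty can be absorbed by the $V/M$ term appearing in the threshold. The choice $\lambda=1/M$ is what makes this work, because it forces the penalty coefficient $e-2$ to be strictly below the coefficient $1$ contributed by $V/M$, leaving the nonnegative surplus $(3-e)V/M^2$ that is simply discarded. Any generic tuning of $\lambda$ would leave a residual dependence on the unknown random $V$ that could not be dropped, so aligning these constants, rather than the monotonicity of $\phi$ or the supermartingale verification, is the delicate point of the proof.
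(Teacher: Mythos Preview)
The paper does not supply its own proof of this lemma; it is stated as a citation of Freedman's result and used as a black box in the subsequent analysis. Your argument is a correct and standard derivation of the specific form quoted in the paper: the exponential supermartingale $Z_k=\exp(\lambda S_k-cV_k)$ with $c=(e^{\lambda M}-1-\lambda M)/M^2$, the monotonicity of $\phi(u)=(e^u-1-u)/u^2$, and the particular choice $\lambda=1/M$ so that the variance penalty $(e-2)V/M^2$ is dominated by the $V/M^2$ contributed by the threshold, leaving the nonnegative slack $(3-e)V/M^2$. One minor note: the paper's statement writes the conditioning in $V$ as $\mathbb{E}[X_i^2\mid X_1,\dots,X_{n-1}]$, which is almost certainly a typo for $\mathbb{E}[X_i^2\mid X_1,\dots,X_{i-1}]$; you have (correctly) interpreted it as conditioning on $\mathcal{F}_{i-1}$, which is the only reading under which the lemma is true and under which it is actually applied later in the paper.
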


\section{Notations}\label{notationtable}
For ease of reading, we list here key notations that will be used in this Appendix.
\begin{center}
\begin{tabular}{r c p{11.5cm}}
	$T, V, K$ &: & Time horizon, agent number, and arm number. \\
		$i^*, \hat{i}^*$ & : & Best arm and empirical best arm. See \eqref{seemingbestarm} for the detail of $\hat{i}^*$. \\
	$\tilde{N}_{v,i}(\tau)$ & : & The real number of times that arm $i$ is pulled by agent $v$ in epoch $\tau$. \\
	$N_{v,i}(\tau)$ & : & The expected number of times that arm $i$ is pulled by agent $v$ in epoch $\tau$. \\
	$p_{v,i}(\tau)$ & : & The probability that arm $i$ is pulled by agent $v$ in epoch $\tau$. \\
	$T(\tau)$ & : & The total number of rounds up to the end of epoch $\tau$, and $T(\tau=0)=0$. \\
	$\mathcal{T}(\tau)$ & : & The set of rounds in epoch $\tau$, and $\cT(\tau)=\{t:T(\tau-1)+1\leq t \leq T(\tau)\}$. \\
	$y_{v,i}(t)$ & : & An independent draw from a Bernoulli distribution with mean $p_{v,i}(\tau)$ for $t \in \mathcal{T}(\tau)$. We have $y_{i}(t)=1$ if $i_v(t)=i$, and $y_{i}(t)=0$, otherwise. \\
	$ \hat{\mu}_{v,i}^S(\tau) $ & : & $ \hat{\mu}_{v,i}^S(\tau) =\sum_{t \in \mathcal{T}(\tau)} r^S_{v,i}(t) \mathbb{I}\{ i_v(t)=i\}/N_{v,i}(\tau)$ is the estimation of arm $i$ computed by \textit{stochastic} rewards from agent $v$ in epoch $\tau$. \\
	$C_{v,i}(t)$ & : & $C_{v,i}(t)=| r_{v,i}(t)- r_{v,i}^S(t)|$ is the corruption on arm $i$ in agent $v$ at round $t$. \\
	$C_{v,i}(\tau)$ & : & $C_{v,i}(\tau)=\sum_{t \in \cT(\tau)}C_{v,i}(t)$ is the corruption added on arm $i$ in agent $v$ in epoch $\tau$. \\
	$C(\tau)$ & : & $C(\tau)=\max_{i \in [K]}\sum_{v=1}^V C_{v,i}(\tau)$ is the corruption level in epoch $\tau$. \\
	$\hat{C}_{v,i}(t)$ & : & $\hat{C}_{v,i}(t)=C_{v,i}(t)y_{v,i}(t)$ is the actual corruption added on arm $i$ in agent $v$ at round $t$. \\
	$\hat{C}_{v,i}(\tau)$ & : & $\hat{C}_{v,i}(\tau)=\sum_{t \in \cT(\tau)}\hat{C}_{v,i}(t)$ is the actual corruption added on arm $i$ in agent $v$ in epoch $\tau$. \\
	$\mathcal{F}_t$ &:& The smallest $\sigma$-algebra containing all information up to round $t$. \\
\end{tabular}
\end{center}

Note that in the above definitions, if agent $v$ is not allocated with arm $i$, then, let associated random variables be zero. For example, we set $\hat{C}_{v,i}(\tau)=0$ and $C_{v,i}(\tau)=0$, if agent $v$ does not have arm $i$.

\section{Lower Bound: proof of Theorem \ref{theorem2}} \label{appendixLB}
To lower-bound the pseudo-regret $R_T$, we start our proof from another regret notation $R'_T$, defined as follows
\begin{equation*}
R'_T  =   \max_{i \in [K]} \sum_{t=1}^T \sum_{v=1}^V \left(r_{v,i}(t)- r_{i_v(t)}(t) \right).
\end{equation*}

The proof is divided into three parts by lower-bounding (i)  $R'_T$, (ii) $\mathbb{E}[R'_T]$, and (iii) $\mathbb{E}[R_T]$ where $R_T$ is defined in \eqref{reg}. In the following, we use $\mathbb{P}_{sto}[\cdot]$ and $\mathbb{E}_{sto}[\cdot]$ to denote the probability and expectation in stochastic setting, respectively. Similarly, the probability and expectation in corrupted setting is denoted by $\mathbb{P}_{cor}[\cdot]$ and $\mathbb{E}_{cor}[\cdot]$, respectively.

\subsection{Preliminaries: two-armed setting and adversary}
\textbf{Setting:} consider a two-armed bandit instance where $V$ agents interact with arm $1$ and arm $2$. The arm $1$ is with Bernoulli reward $\mu_1=1/2-\Delta$ and the arm $2$ is with a constant reward as $r_2=\mu_2=1/2$. We divide the rounds into some intervals of
increasing length $3^\ell T^{\alpha}$  for $\ell=1,2,...L$. For simplicity, we assume $3^\ell T^{\alpha}$ is a integer (If $3^\ell T^{\alpha}$ is not a integer, the length can be modified as $3^\ell  \lfloor T^{\alpha} \rfloor$). Note that interval $L$ might be incomplete as the maximum round is $T$. Therefore, $L \geq \frac{1-\alpha}{\log 3} \log T$.

Let $\tilde{N}_{1}(\ell^*)$ be the number of times that arm $1$ is pulled in the interval $\ell^*$ across all agents.
For any algorithm with pseudo-regret $O(\log (VT) /\Delta)$, there is an interval $\ell^*<L$ such that $\mathbb{E}_{sto}\left[\tilde{N}_{1}(\ell^*) \right] \leq Y$ where for the constant $B_0>0$, $Y$ is given by
\begin{equation*}
Y=\frac{B_0}{\left(1-\alpha \right)\Delta^2}.
\end{equation*}

\textbf{Adversary:} We create an adversary who corrupts the Bernoulli distribution of arm $1$ by setting $\mu_1=1/2+\Delta$, and does nothing for arm $2$. Before interval $\ell^*$, the adversary does not inject any corruption, but starts to inject the corruption in interval $\ell^*$ and beyond. Such a corruption strategy is applied for all agents.
Let $t_{\ell^*} $ be the round at the end of interval $\ell^*$. Let define events $\cE_1$, $\cE_2$ and $\cE_3$ as
\begin{equation*}
\begin{split}
& \cE_1 = \left\{\tilde{N}_{1}(\ell^*) \leq 4 Y \right\}, \\
& \cE_2 = \left\{\sum_{t=1}^{T}\sum_{v=1}^V r_{i_v(t)}(t)<\frac{1}{2}VT+  \left(T-t_{\ell^*} \right)V\Delta +4YV\Delta+ \sqrt{2VT\log (VT)} \right\}, \\
& \cE_3 = \left\{\sum_{t=1}^{T}\sum_{v=1}^V  r_{v,1}(t) \geq \frac{1}{2}VT +3 \Delta  VT^{\alpha}  +\Delta V \left(T-t_{\ell^*} \right) - \sqrt{2VT\log (VT)} \right\}, \\
\end{split}
\end{equation*}
where $\cE_1^c$, $\cE_2^c$, and $\cE_3^c$ are the complementary events, respectively.

\subsection{Lower-bounding $R'_T$} \label{beforemodify}
The proof of $R'_T$ adapts some basic techniques from the single-agent bandit problem, e.g., \cite{BestofBoth3}. The proof is divided into the following steps.

\textbf{Step 1: analyze $\cE_1$.} By Lemma 12 in \cite{BestofBoth3}, we have 
\begin{equation*} 
\mathbb{P}_{cor}\left( \cE_1 \right) \geq \frac{1}{16} \exp \left( -64\Delta^2 Y \right).
\end{equation*}

For simplicity, we dub $p_1=\frac{1}{16} \exp \left( -64\Delta^2 Y \right)$.

\textbf{Step 2: analyze $\cE_2$.} From the construction of the corruption, the following holds under event $\cE_1$.
\begin{equation}  \label{LBtool1}
\begin{split}
\sum_{t=1}^{T} \sum_{v=1}^V \mathbb{E}_{cor}[r_{i_v(t)}(t)|\mathcal{F}_{t-1}] \leq & \frac{1}{2} \left(V t_{\ell^*} -\tilde{N}_{1}(\ell^*) \right)  +\tilde{N}_{1}(\ell^*) \left(\frac{1}{2}+\Delta \right) +V \left(T-t_{\ell^*} \right) \left(\frac{1}{2}+\Delta \right) \\
\leq & \frac{1}{2}VT+  \left(T-t_{\ell^*} \right)V\Delta +4YV\Delta .
\end{split}
\end{equation}

Now, we construct a martingale difference sequence $\{D_{i}(t)\}_{t=0}^{\infty}$ where $D_{i}(t)=\sum_{v=1}^V ( r_{i_v(t)}(t)-\mathbb{E}[r_{i_v(t)}(t) |\mathcal{F}_{t-1}]) $. By Hoeffding-Azuma’s inequality and union bound with $ \cE_1 $, we have
\begin{equation*}
\mathbb{P}_{cor}\left(\cE_2^c \right) \leq  1-\frac{1}{16} \exp \left( -64\Delta^2 Y \right)+ \frac{1}{(VT)^2}.
\end{equation*}

\textbf{Step 3: analyze $\cE_3$.} Define $L(\ell^*)$ as the total number of rounds in interval $\ell^*$ and then we get for arm $1$ that
\begin{equation}  \label{LBtool2}
\begin{split}
\sum_{t=1}^{T} \sum_{v=1}^V \mathbb{E}_{cor}[r_{v,1}(t)] 
= &  \frac{1}{2}VT + \Delta V\left(2L(\ell^*) -t_{\ell^*} \right)  +\Delta V \left(T-t_{\ell^*} \right)\\
\geq &  \frac{1}{2}VT +3 \Delta  VT^{\alpha}  +\Delta V \left(T-t_{\ell^*} \right),
\end{split}
\end{equation}
where $2L(\ell^*) -t_{\ell^*}$ is bounded by
\begin{equation*}
2L(\ell^*) -t_{\ell^*}=L(\ell^*)- \sum_{\ell=1}^{\ell^*-1} L(\ell) \geq 3T^{\alpha}.
\end{equation*}

By Hoeffding-Azuma’s inequality, we have for arm $1$ that
\begin{equation*}
\mathbb{P}_{cor}\left(\cE_3^c \right) \leq  \frac{1}{(VT)^2}.
\end{equation*}

\textbf{Step 4: arm $1$ contributes to more total rewards than arm $2$.} Since the adversary uses the same corruption strategy across all agents, we only focus our analysis of Step 1 on a single agent $v$.
According to our	 construction, the total rewards of arm $2$ for agent $v$ are $\mathbb{E}_{cor}[ \sum_t r_{v,2}(t)]=\frac{1}{2}T$, and the total rewards of arm $1$ for agent $v$ are presented in \eqref{LBtool2}. Let $\Delta_v(t) =\sum_t r_{v,2}(t)-\sum_t r_{v,1}(t)$.
As all rewards are independent, we use Hoeffding inequality to get
\begin{equation*}
\begin{split}
\mathbb{P}_{cor}\left[\sum_{t=1}^T r_{v,2}(t) >\sum_{t=1}^T r_{v,1}(t) \right] \leq &  \mathbb{P}_{cor}\left[\left|\sum_{t=1}^T \Delta_v(t) -\mathbb{E}_{cor} [\Delta_v(t)] \right|>3 \Delta  VT^{\alpha}  +\Delta V \left(T-t_{\ell^*} \right) \right]  \\
\leq & \exp \left(-\frac{\left( 3   VT^{\alpha}  + V \left(T-t_{\ell^*} \right) \right)^2}{T} \right)=p_2.
\end{split}
\end{equation*}

Thus, given any agent $v$, with probability at least $1-p_2$, the total rewards of arm $2$ is less than the total rewards of arm $1$ over time horizon $T$. Then, by a union bound over all $V$ agents, $R'_T$ with probability at least $1-Vp_2$, is 
\begin{equation}\label{acreg}
R'_T  =   \max_{i \in [K]} \sum_{t=1}^T \sum_{v=1}^V  \left( r_{v,i}(t)-r_{i_v(t)}(t)\right) =\sum_{t=1}^T \sum_{v=1}^V \left(r_{v,1}(t)- r_{i_v(t)}(t) \right).
\end{equation}

\textbf{Step 5: put together.} Take a union bound, and then use the fact that $C=VT^{\alpha}$, \eqref{LBtool1}, \eqref{LBtool2}, and \eqref{acreg} to get the probability at least $p_1- \frac{2}{(VT)^2}-Vp_2$ (Note that for a sufficienlty large $T$, one can have that $p_1>Vp_2$),
\begin{equation}  \label{LBresultTmp1}
\sum_{t=1}^T \sum_{v=1}^V \left(r_{v,1}(t)- r_{i_v(t)}(t) \right) \geq \Delta  C-4Y \Delta-2\sqrt{2VT\log(VT)}.
\end{equation}
where $4\Delta Y= \frac{4B_0}{\left(1-\alpha \right)\Delta} \leq \sqrt{2VT\log(VT)}$ with a large $T$. Combing \eqref{LBresultTmp1}, we have the following with probability at least $p_1- \frac{2}{(VT)^2}-Vp_2$
\begin{equation}  \label{LBresultTmp2}
R'_T=\sum_{t=1}^T \sum_{v=1}^V \left(r_{v,1}(t)- r_{i_v(t)}(t) \right) \geq \Delta  C-3\sqrt{2VT\log(VT)}.
\end{equation}

\subsection{Lower-bounding $\mathbb{E}[R'_T]$} \label{LBRPT}
\textbf{Modification of adversary.} To lower bound $\mathbb{E}[R'_T]$, we slightly modify the corrupton strategy. Specifically, if
there is a round $t \leq t_{\ell^*}$ such that the number of pulls of arm $1$ in interval $\ell^*$ exceeds $4Y$, then for all remaining rounds, the adversary does not corrupt any more. 

After modifying the corrupton strategy, one can observe that under $\tilde{N}_{1}(\ell^*) \leq 4 Y$, the adversary will always inject corruption in interval $\ell^*$ and beyond. Therefore, from the construction of interval, we have that under $\tilde{N}_{1}(\ell^*) \leq 4 Y$, with probability at least $1-Vp_2$, $R'_T=\sum_{t=1}^T \sum_{v=1}^V \left(r_{v,1}(t)- r_{i_v(t)}(t) \right)$.
Thus, the analysis in Appendix \ref{beforemodify} is also applicable for regret analysis after modifying the adversary, under $\tilde{N}_{1}(\ell^*) \leq 4 Y$.
Under $\tilde{N}_{1}(\ell^*) > 4 Y$, it is known that there should exist a round such that the adversary stops
injecting corruption. In this case, with a high probability, $R'_T=\sum_{t=1}^T \sum_{v=1}^V \left(r_{v,2}(t)- r_{i_v(t)}(t) \right)$ as the adversary stops injecting corruption in interval $\ell^*$, and 	arm $2$ will yield more rewards than that of arm $1$.

For notational simplicity, we dub $p_3=p_1- \frac{2}{(VT)^2}-V p_2$. In the following, all expectations are taken for the corrupted setting, and hence we use $\mathbb{E}[\cdot]$ to avoid clutter. Then, we write $\mathbb{E}[R'_T]$ as
\begin{equation*}  
\mathbb{E}[R'_T]= \mathbb{E} \left[R'_T  \big |  \tilde{N}_{1}(\ell^*) \leq 4 Y  \right]  \mathbb{P} \left(\tilde{N}_{1}(\ell^*) \leq 4 Y \right)+ \mathbb{E} \left[R'_T  \big |  \tilde{N}_{1}(\ell^*) > 4 Y  \right]  \mathbb{P} \left(\tilde{N}_{1}(\ell^*) > 4 Y \right),
\end{equation*}
and then use the result of \eqref{LBresultTmp2} to bound the first term as
\begin{equation*}  
\begin{split}
& \mathbb{E} \left[R'_T  \big |  \tilde{N}_{1}(\ell^*) \leq 4 Y  \right]  \mathbb{P} \left(\tilde{N}_{1}(\ell^*) \leq 4 Y \right)\\
\geq &\left( \Delta  \mathbb{E}[C]-3\sqrt{2VT\log(VT)}\right)  \mathbb{P} \left( \tilde{N}_{1}(\ell^*) \leq 4 Y , R'_T \geq \Delta C-3\sqrt{2VT\log(VT)} \right) \\
&-VT \cdot \mathbb{P} \left(\tilde{N}_{1}(\ell^*) \leq 4 Y , R'_T < \Delta  C-3\sqrt{2VT\log(VT)} \right) \\
\geq &p_3 \left( \Delta \mathbb{E}[ C]-3\sqrt{2VT\log(VT)}\right)  
-\frac{2}{VT} ,
\end{split}
\end{equation*}
where the last inequality is due to the following steps
\begin{itemize}
\item First, the following holds
\begin{equation*}
\begin{split}
& \mathbb{P} \left(\tilde{N}_{1}(\ell^*) \leq 4 Y , R'_T < \Delta  C-3\sqrt{2VT\log(VT)} \right) \\
 = & \mathbb{P} \left( R'_T < \Delta  C-3\sqrt{2VT\log(VT)} \Big | \tilde{N}_{1}(\ell^*) \leq 4 Y  \right) \mathbb{P}  \left(  \tilde{N}_{1}(\ell^*) \leq 4 Y  \right) .
\end{split}
\end{equation*}
\item Second, from the analysis in Appendix \ref{beforemodify}, we have known that $ \mathbb{P} (\cE_2 \mid  \tilde{N}_{1}(\ell^*) \leq 4 Y ) \leq 1/(VT)^2 $ and $ \mathbb{P} (\cE_3 \mid  \tilde{N}_{1}(\ell^*) \leq 4 Y ) \leq 1/(VT)^2 $.
\item Finally, by a union bound, the following holds.
\begin{equation*}
\begin{split}
 \mathbb{P} \left( \sum_{t=1}^T \sum_{v=1}^V \left(r_{v,1}(t)- r_{i_v(t)}(t) \right)  < \Delta  C-3\sqrt{2VT\log(VT)} \bigg|  \tilde{N}_{1}(\ell^*) \leq 4 Y  \right)  \leq \frac{2}{(VT)^2}.
\end{split}
\end{equation*}
\end{itemize}

By a similar method, we have that 
\begin{equation*}
\begin{split}
& \mathbb{E} \left[R'_T  \Big |  \tilde{N}_{1}(\ell^*) > 4 Y  \right]  \mathbb{P} \left(\tilde{N}_{1}(\ell^*) > 4 Y \right) \\
=&\mathbb{E} \left[\sum_{t=1}^T \sum_{v=1}^V r_{v,2}(t)- r_{i_v(t)}(t)   \bigg|   \tilde{N}_{1}(\ell^*) > 4 Y  \right]  \mathbb{P} \left(\tilde{N}_{1}(\ell^*) > 4 Y \right) \\
&+\mathbb{E} \left[\sum_{t=1}^T \sum_{v=1}^V r_{v,1}(t)- r_{i_v(t)}(t)   \bigg|   \tilde{N}_{1}(\ell^*) > 4 Y  \right]  \mathbb{P} \left(\tilde{N}_{1}(\ell^*) > 4 Y \right) \\
\geq &- 3\sqrt{VT\log (VT)}-\frac{3}{VT},
\end{split}
\end{equation*}
where the last inequality bounds $\sum_{t=1}^T \sum_{v=1}^V r_{v,1}(t)- r_{i_v(t)}(t) \geq -VT $, and the probability $\mathbb{P}(R'_T=\sum_{t=1}^T \sum_{v=1}^V r_{v,1}(t)- r_{i_v(t)}(t) | \tilde{N}_{1}(\ell^*) > 4 Y )$ is at most $1/(VT)^2$ since the corruption injected by the adversary is at most $4Y$ and the environment is close to being stochastic.

Combing the above, we get for some constant $0<B_1 <p_3 $,
\begin{equation}  \label{LBresult1}
\mathbb{E}[R'_T] \geq  B_1 \left(  \mathbb{E}[C]-\sqrt{VT\log(VT)} \right).
\end{equation}

\subsection{Lower-bounding $\mathbb{E}[R_T]$}
In the two-armed setting, we have that $\mathbb{E}[R_T]=\mathbb{E}[ \sum_{t=1}^T \sum_{v=1}^V (r^S_{v,2}(t)- r^S_{i_v(t)}(t) )]$. Thus, to lower-bound $\mathbb{E}[R_T]$, we need to connect $\sum_{t=1}^T \sum_{v=1}^V (r^S_{v,2}(t)- r^S_{i_v(t)}(t) )$ and $R'_T$.
From the analysis from Appendix \ref{LBRPT}, we know that with probability at least $1-Vp_2$, $R'_T=\sum_{t=1}^T \sum_{v=1}^V (r_{v,1}(t)- r_{i_v(t)}(t) )$ under $\tilde{N}_{1}(\ell^*) \leq 4 Y$, whereas with high probability, $R'_T=\sum_{t=1}^T \sum_{v=1}^V (r_{v,2}(t)- r_{i_v(t)}(t) )$ under $\tilde{N}_{1}(\ell^*) > 4 Y$. We first decompose $\sum_{t=1}^T \sum_{v=1}^V (r_{v,1}(t)- r_{i_v(t)}(t) )$ as
\begin{equation*}
\begin{split}
& \sum_{t=1}^T \sum_{v=1}^V \left(r_{v,1}(t)- r_{i_v(t)}(t) \right)  \\
= & \sum_{t=1}^T \sum_{v=1}^V \left(r^S_{v,1}(t)- r^S_{i_v(t)}(t) \right)
+ \sum_{t=1}^T \sum_{v=1}^V \left(r^S_{i_v(t)}(t)- r_{i_v(t)}(t) \right) 
+ \sum_{t=1}^T \sum_{v=1}^V \left(r_{v,1}(t)- r^S_{v,1}(t) \right)  \\
= & \sum_{t=1}^T \sum_{v=1}^V \left(r^S_{v,1}(t)- r^S_{i_v(t)}(t) \right)
+ \sum_{t=1}^T \sum_{v=1}^V \left(r^S_{i_v(t)}(t)- r_{i_v(t)}(t) \right) 
+ C \\
= & \sum_{t=1}^T \sum_{v=1}^V \left(r^S_{v,2}(t)- r^S_{i_v(t)}(t) \right) 
+\sum_{t=1}^T \sum_{v=1}^V \left(r^S_{v,1}(t)- r^S_{v,2}(t) \right)
+ \sum_{t=1}^T \sum_{v=1}^V \left(r^S_{i_v(t)}(t)- r_{i_v(t)}(t) \right) 
+ C .
\end{split}
\end{equation*}

Then, we decompose $\sum_{t=1}^T \sum_{v=1}^V (r_{v,2}(t)- r_{i_v(t)}(t) )$ as
\begin{equation*}
\begin{split}
& \sum_{t=1}^T \sum_{v=1}^V \left(r_{v,2}(t)- r_{i_v(t)}(t) \right)  \\
= & \sum_{t=1}^T \sum_{v=1}^V \left(r^S_{v,2}(t)- r^S_{i_v(t)}(t) \right)
+ \sum_{t=1}^T \sum_{v=1}^V \left(r^S_{i_v(t)}(t)- r_{i_v(t)}(t) \right) 
+ \sum_{t=1}^T \sum_{v=1}^V \left(r_{v,2}(t)- r^S_{v,2}(t) \right)  \\
= & \sum_{t=1}^T \sum_{v=1}^V \left(r^S_{v,2}(t)- r^S_{i_v(t)}(t) \right)
+ \sum_{t=1}^T \sum_{v=1}^V \left(r^S_{i_v(t)}(t)- r_{i_v(t)}(t) \right) .
\end{split}
\end{equation*}

Take expectation over $R'_T  $ under $\tilde{N}_{1}(\ell^*) \leq 4 Y $ to have
\begin{equation}\label{regunderleq}
\begin{split}
& \mathbb{E}[R'_T |\tilde{N}_{1}(\ell^*) \leq 4 Y ]\\
\leq& \mathbb{E}\left[ \sum_{t=1}^T \sum_{v=1}^V \left(r^S_{v,2}(t)- r^S_{i_v(t)}(t) \right) +C   \Big|\tilde{N}_{1}(\ell^*) \leq 4 Y \right]
-\Delta VT \\
&+ \mathbb{E}\left[ \sum_{t=1}^T \sum_{v=1}^V \left(r^S_{i_v(t)}(t)- r_{i_v(t)}(t) \right) \bigg|  \tilde{N}_{1}(\ell^*) \leq 4 Y \right] +1
,
\end{split}
\end{equation}
where the inequality is due to the following reasons. First, the generation of $ r^S_{v,1}(t)- r^S_{v,2}(t) $ for each round $t$ is independent of the history. Then, the summation over $\mathbb{E}[ r^S_{v,1}(t)- r^S_{v,2}(t)] $ can be bounded by $-\Delta VT$. Second, for a sufficiently large $T$, one can simply bound $Vp_2 \leq 1/VT$, so that $R'_T =\sum_{t=1}^T \sum_{v=1}^V (r_{v,1}(t)- r_{i_v(t)}(t) ) $ with probability at least $1-1/VT$, and the expected regret with the remaining probability of $R'_T =\sum_{t=1}^T \sum_{v=1}^V (r_{v,2}(t)- r_{i_v(t)}(t) ) $ can be trivially bounded by $1$.

Again, take expectation over $R'_T  $ under $\tilde{N}_{1}(\ell^*) > 4 Y $ to have
\begin{equation}\label{regundergeq}
\begin{split}
& \mathbb{E}[R'_T |\tilde{N}_{1}(\ell^*) > 4 Y ] \\
\leq & \mathbb{E}\left[ \sum_{t=1}^T \sum_{v=1}^V \left(r^S_{v,2}(t)- r^S_{i_v(t)}(t) \right) \bigg| \tilde{N}_{1}(\ell^*) > 4 Y  \right]\\
&+ \mathbb{E}\left[ \sum_{t=1}^T \sum_{v=1}^V \left(r^S_{i_v(t)}(t)- r_{i_v(t)}(t) \right)\bigg|  \tilde{N}_{1}(\ell^*) > 4 Y   \right]  +1,
\end{split}
\end{equation}
where under this event, the probability of $R'_T =\sum_{t=1}^T \sum_{v=1}^V (r_{v,1}(t)- r_{i_v(t)}(t) ) $ is at most $1/VT$ for a sufficiently large $T$ and thus the expected regret can be also trivially bounded by $1$.

Then, we bound \eqref{regunderleq} and \eqref{regundergeq}, respectively. 

\subsection*{Under $\tilde{N}_{1}(\ell^*) > 4 Y $.}

In this case, the adversary stops injecting the corruption at a certain round $t$ such that $t \leq t_{\ell^*}$. According to the corruption strategy, it is known that $\mathbb{E}[r^S_{i_v(t)}(t)- r_{i_v(t)}(t)]=-2\Delta$ for $i_v(t)=1$, and $r^S_{i_v(t)}(t)- r_{i_v(t)}(t) =0$ for $i_v(t)=2$. Under $\tilde{N}_{1}(\ell^*) > 4 Y $, the corruption in interval $\ell^*$ ends upon the number of pulls of arm $1$ exceeds $4 Y $. From 
the analysis of Step 1 in Appendix \ref{beforemodify}, we know that the algorithm will not detect the corruption on arm $1$ in interval $\ell^*$, and thus the algorithm behaves as the environment was almost stochastic. Such a corruption is $O(1/\Delta)$, which implies that the expected corruption level is
$\mathbb{E}[C | \tilde{N}_{1}(\ell^*) > 4 Y] = B_2/\Delta 
$ for a suitable constant $B_2>0$.
\begin{equation}\label{LBtool4}
\begin{split}
 & \mathbb{E}\left[ \sum_{t=1}^T \sum_{v=1}^V \left(r^S_{i_v(t)}(t)- r_{i_v(t)}(t) \right)  \bigg | \tilde{N}_{1}(\ell^*) > 4 Y \right]\\
  = &  
\mathbb{E}\left[ \sum_{t=1}^T \sum_{v=1}^V \left(r^S_{v,1}(t)- r_{v,1}(t) \right) \mathbb{I}\{ i_v(t)=1\}  \bigg | \tilde{N}_{1}(\ell^*) > 4 Y \right]
 \\
=& -\frac{B_2}{\Delta }.
\end{split}
\end{equation}

We use \eqref{regundergeq}, \eqref{LBtool4}, and $\mathbb{E}[C | \tilde{N}_{1}(\ell^*) > 4 Y] = B_2 /\Delta 
$ to get
\begin{equation}\label{LBresult2}
\begin{split}
 & \mathbb{E}\left[ \sum_{t=1}^T \sum_{v=1}^V \left(r^S_{v,2}(t)- r^S_{i_v(t)}(t) \right) \Big | \tilde{N}_{1}(\ell^*) > 4 Y \right]  \\
  \geq & \mathbb{E}[R'_T  | \tilde{N}_{1}(\ell^*) > 4 Y]
    + \frac{B_2 \log(VT)}{\Delta }-1 \\
  \geq  & \mathbb{E}[R'_T | \tilde{N}_{1}(\ell^*) > 4 Y]+\mathbb{E}[C  | \tilde{N}_{1}(\ell^*) > 4 Y] -1 .
\end{split}
\end{equation}

\subsection*{Under $\tilde{N}_{1}(\ell^*) \leq 4 Y $.}
Next, we connect $ \mathbb{E}[ \sum_{t=1}^T \sum_{v=1}^V (r^S_{v,2}(t)- r^S_{i_v(t)}(t) ) +C  |\tilde{N}_{1}(\ell^*) \leq 4 Y ]$ and $\mathbb{E}[R'_T |\tilde{N}_{1}(\ell^*) \leq 4 Y ]$ by considering the following two cases.

\textbf{Case 1: $\ell^*=1$.} This case implies that the adversary injects the corruption from the beginning to the end. In this case, the reward of arm $1$ is always sampled from a fixed Bernoulli distribution with mean $1/2+\Delta$, while arm $2$ is with a constant reward $1/2$. Such a setting can be simulated as a ``mirror stochastic'' setting (compared with $\mu_1=1/2-\Delta$ and $\mu_2=1/2$). In this setting, the suboptimal arm (in this mirror stochastic setting) is arm $2$. 
Thus, for any algorithm that enjoys a $O(\log VT/\Delta)$ pseudo-regret, the expected number of pull of the suboptimal arm is $B_3 \log(VT)/\Delta^2$ for a suitable constant $B_3>0$.

Define $\tilde{N}_2$ as the number of pulls of arm $2$ over all rounds and agents. Then, we have
\begin{equation}\label{LBtool41}
\begin{split}
&  \mathbb{E}\left[ \sum_{t=1}^T \sum_{v=1}^V \left(r^S_{i_v(t)}(t)- r_{i_v(t)}(t) \right)  \bigg | \tilde{N}_{1}(\ell^*) \leq 4 Y \right]\\
 = &  
\mathbb{E}\left[ \sum_{t=1}^T \sum_{v=1}^V \left(r^S_{1}(t)- r_{1}(t) \right) \mathbb{I}\{ i_v(t)=1\}  \bigg | \tilde{N}_{1}(\ell^*) \leq 4 Y \right]
 \\
=&
-2\Delta \left(  VT -  \mathbb{E}[\tilde{N}_2| \tilde{N}_{1}(\ell^*) \leq 4 Y ] \right)
 \\
=&-2\Delta  VT+\frac{B_3 \log(VT)}{\Delta }\\
\leq &-\frac{3}{2}\Delta  VT,
\end{split}
\end{equation}
where the last inequality follows that $B_3 \log(VT)/\Delta  \leq \Delta  VT/2$ for a sufficiently large $T$.

Under $\tilde{N}_{1}(\ell^*) \leq 4 Y $, the corruption takes place in all rounds, the expected corruption level is 
$\mathbb{E}[C | \tilde{N}_{1}(\ell^*) \leq 4 Y] \leq 2\Delta  VT 
$. We use \eqref{regunderleq}, \eqref{LBtool41}, and $\mathbb{E}[C | \tilde{N}_{1}(\ell^*) \leq 4 Y] \leq 2\Delta  VT 
$ to get
\begin{equation}\label{LBresult1}
\begin{split}
 & \mathbb{E}\left[ \sum_{t=1}^T \sum_{v=1}^V \left(r^S_{v,2}(t)- r^S_{i_v(t)}(t) \right) \Big | \tilde{N}_{1}(\ell^*) \leq 4 Y \right]  \\ 
 \geq & \mathbb{E}[R'_T  | \tilde{N}_{1}(\ell^*) \leq 4 Y]
+ \frac{5}{2}\Delta VT
- \mathbb{E}[C  | \tilde{N}_{1}(\ell^*) \leq 4 Y]-1 \\
  \geq  & \mathbb{E}[R'_T | \tilde{N}_{1}(\ell^*) \leq 4 Y]+\frac{1}{4}\mathbb{E}[C  | \tilde{N}_{1}(\ell^*) \leq 4 Y]  -1
.
\end{split}
\end{equation}

\textbf{Case 2: $\ell^* \geq 2$.} According to the corruption strategy (if corruption occurs, then, the adversary always drags down the reward of arm $1$), $  \sum_{t=1}^T \sum_{v=1}^V \left(r^S_{i_v(t)}(t)- r_{i_v(t)}(t) \right) \leq 0$. Under $\tilde{N}_{1}(\ell^*) \leq 4 Y$, \eqref{regunderleq} is upper-bounded as
\begin{equation}\label{LBtool5}
\begin{split}
& \mathbb{E}[R'_T| \tilde{N}_{1}(\ell^*) \leq 4 Y ] \\ \leq &  \mathbb{E}\left[ \sum_{t=1}^T \sum_{v=1}^V \left(r^S_{v,2}(t)- r^S_{i_v(t)}(t) \right) \Big | \tilde{N}_{1}(\ell^*) \leq 4 Y \right] 
-\Delta VT
+ \mathbb{E}[C | \tilde{N}_{1}(\ell^*) \leq 4 Y] .
\end{split}
\end{equation}

As $\ell^*\geq 2$, under $\tilde{N}_{1}(\ell^*) \leq 4 Y $, the corruption is upper-bounded as
\begin{equation*}
\begin{split}
C \leq 2\Delta  V\left(T-\sum_{\ell=1}^{\ell^*-1} L_i  \right) =2\Delta   V\left(T-T^{\alpha}  \frac{3^{\ell^*}-1}{2}  \right) \leq  2\Delta   \left(VT-4C  \right),
\end{split}
\end{equation*}
which immediately leads to
\begin{equation} \label{LBDeltaVT}
\Delta V T \geq \left(\frac{1}{2}+4 \Delta  \right)\mathbb{E}[C | \tilde{N}_{1}(\ell^*) \leq 4 Y ].
\end{equation}

Using \eqref{LBtool5}, \eqref{LBDeltaVT} and \eqref{LBresult1}, we lower bound $\mathbb{E}[R_T] $ as
\begin{equation}\label{LBresult2}
\begin{split}
& \mathbb{E}\left[ \sum_{t=1}^T \sum_{v=1}^V \left(r^S_{v,2}(t)- r^S_{i_v(t)}(t) \right) \Big | \tilde{N}_{1}(\ell^*) \leq 4 Y \right] \\
 \geq  & \mathbb{E}[R'_T | \tilde{N}_{1}(\ell^*) \leq 4 Y ]
+\left(4\Delta -\frac{1}{2} \right)\mathbb{E}[C | \tilde{N}_{1}(\ell^*) \leq 4 Y]  .
\end{split}
\end{equation}

\textbf{Put two cases together.} Since $\Delta \in (1/4,1/2)$, $\left(4\Delta -\frac{1}{2} \right)$ is positive for $\Delta \in (1/4,1/2)$. Combing the above analysis, we have that
\begin{equation*}
\begin{split}
\mathbb{E}[R_T]= &\mathbb{E}\left[ \sum_{t=1}^T \sum_{v=1}^V \left(r^S_{v,2}(t)- r^S_{i_v(t)}(t) \right) \Big | \tilde{N}_{1}(\ell^*) \leq 4 Y \right]  \mathbb{P} \left(\tilde{N}_{1}(\ell^*) \leq 4 Y \right)\\
&+\mathbb{E}\left[ \sum_{t=1}^T \sum_{v=1}^V \left(r^S_{v,2}(t)- r^S_{i_v(t)}(t) \right) \Big | \tilde{N}_{1}(\ell^*) > 4 Y \right] \mathbb{P} \left(\tilde{N}_{1}(\ell^*) > 4 Y \right)  \\
= &\Omega \left( \mathbb{E} \left[R'_T   \right] + \mathbb{E}[C ]  \right),
\end{split}
\end{equation*}
where $\mathbb{P} (\tilde{N}_{1}(\ell^*) \leq 4 Y )=p_1$ (see Appendix \ref{beforemodify}), and thus we get the desired result.

\section{Proof of Lemma \ref{Tmpconcentration2All}}\label{techlemmas}

\begin{lemma}  \label{boundedpull}
For any fixed $v,i, \tau$ the following holds
\begin{equation} \label{concentration2Pull}
\mathbb{P}\left( \tilde{N}_{v,i}(\tau)\leq 3N_{v,i}(\tau) \right) \geq 1-\frac{\delta}{2VK\log_4T}.
\end{equation}
\end{lemma}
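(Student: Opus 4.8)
The plan is to recognize $\tilde{N}_{v,i}(\tau)$ as a sum of independent indicators and apply Freedman's inequality, after first establishing a lower bound on its mean $N_{v,i}(\tau)=p_{v,i}(\tau)N(\tau)$. Within epoch $\tau$ each agent $v$ draws $i_v(t)$ from the fixed distribution $p_{v,\cdot}(\tau)$ independently across rounds, and the algorithm does not adapt this draw to the observed (possibly corrupted) rewards within the epoch; thus $\tilde{N}_{v,i}(\tau)=\sum_{t\in\cT(\tau)}y_{v,i}(t)$ is a sum of $N(\tau)$ independent Bernoulli variables with mean $p_{v,i}(\tau)$, so $\mathbb{E}[\tilde{N}_{v,i}(\tau)]=N_{v,i}(\tau)$. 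Setting $X_t=y_{v,i}(t)-p_{v,i}(\tau)$, we have $|X_t|\le 1$ and conditional-variance sum at most $N(\tau)p_{v,i}(\tau)=N_{v,i}(\tau)$; Lemma \ref{Freedm} with $\delta'=\delta/(2VK\log_4 T)$ then gives, with probability at least $1-\delta'$, that $\tilde{N}_{v,i}(\tau)\le 2N_{v,i}(\tau)+\log(1/\delta')$. Hence it suffices to prove $N_{v,i}(\tau)\ge\log(2VK\log_4 T/\delta)=\log(1/\delta')$, since the right-hand side is then at most $3N_{v,i}(\tau)$.

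The core step is the lower bound on $N_{v,i}(\tau)$, which I would split by whether arm $i$ is active or bad in agent $v$. If $i\in\cB_v(\tau)$, then \eqref{pullprob} and the definition of $N(\tau)$ give $N_{v,i}(\tau)=p_{v,i}(\tau)N(\tau)=3\log(8K\log_4 T/\delta)/\epsilon_i^2(\tau)$, which is at least $3\log(8K\log_4 T/\delta)$ because $\epsilon_i(\tau)\le\tfrac{1}{14}<1$. If $i\in\cA_v(\tau)$, the point is that the probability mass \eqref{pullprob} puts on bad arms is bounded away from $1$: every $j\in\cB(\tau)$ was deactivated in an earlier epoch, so $\epsilon_j(\tau)=\epsilon(d_j)\ge 2\epsilon(\tau)$ by the $d_j$-bookkeeping together with $\epsilon(\tau+1)=\epsilon(\tau)/2$ (cf.\ Lemma \ref{property}), giving $\sum_{j\in\cB_v(\tau)}p_{v,j}(\tau)=\sum_{j\in\cB_v(\tau)}\epsilon^2(\tau)/(\epsilon_j^2(\tau)\widetilde{K})\le|\cB_v(\tau)|/(4\widetilde{K})\le\tfrac14$. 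Therefore $p_{v,i}(\tau)\ge(3/4)/|\cA_v(\tau)|\ge 3/(4\widetilde{K})$ and $N_{v,i}(\tau)\ge\tfrac{3}{4\widetilde{K}}\cdot\tfrac{3\widetilde{K}\log(8K\log_4 T/\delta)}{\epsilon^2(\tau)}=\tfrac{9\log(8K\log_4 T/\delta)}{4\epsilon^2(\tau)}\ge 441\log(8K\log_4 T/\delta)$, again using $\epsilon(\tau)\le\tfrac{1}{14}$.

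In either case $N_{v,i}(\tau)\ge 3\log(8K\log_4 T/\delta)$. Since $V\le K$, one checks $(8K\log_4 T/\delta)^3\ge 2VK\log_4 T/\delta$ (as $512K^3\ge 2K^2$ and $(\log_4 T/\delta)^3\ge\log_4 T/\delta$ for $T\ge 4$, $\delta\le 1$), so $3\log(8K\log_4 T/\delta)\ge\log(2VK\log_4 T/\delta)=\log(1/\delta')$, which is exactly the threshold needed in the first paragraph; combining everything completes the proof.

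I expect the main obstacle to be the active-arm case, namely verifying that $\epsilon_j(\tau)\ge 2\epsilon(\tau)$ holds for every bad arm $j$. This requires a careful reading of how $d_j$ is maintained — a bad arm $j\in\cB(\tau)$ has $d_j\le\max\{1,\tau-2\}$, which must be checked for arms that were reactivated and then deactivated again, and for the first one or two epochs — because without this uniform gap the series $\sum_j\epsilon^2(\tau)/\epsilon_j^2(\tau)$ need not be a constant and the $1/\widetilde{K}$ factors would fail to cancel, breaking the lower bound on $N_{v,i}(\tau)$.
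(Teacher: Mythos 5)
Your proof is correct and takes essentially the same route as the paper: center the indicators $y_{v,i}(t)$, bound the cumulative conditional variance by $p_{v,i}(\tau)N(\tau)$, apply Freedman's inequality at confidence level $\delta/(2VK\log_4 T)$, and absorb the $\log(1/\delta')$ term into one extra multiple of $N_{v,i}(\tau)$. The only difference is that you explicitly verify the threshold $N_{v,i}(\tau)\ge\log(1/\delta')$ (splitting into $i\in\cA_v(\tau)$ and $i\in\cB_v(\tau)$, using $\sum_{j\in\cB_v(\tau)}p_{v,j}(\tau)\le 1/4$ and $\epsilon_i(\tau)\le 1/14$), whereas the paper simply asserts $p_{v,i}(\tau)N(\tau)>\log((8VK\log_4 T)/\delta)$ without proof, so your version fills in that detail.
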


\begin{proof}
Define $
D_{v,i}(t)=y_{v,i}(t)-p_{v,i}(\tau)
$ (the definition of $y_{v,i}(t)$ is given in Appendix \ref{notationtable}).
Then, $\{D_{v,i}(t)\}_{t=0}^{\infty}$ is a martingale difference sequence with
respect to the filtration $\{\mathcal{F}_t\}_{t=0}^{\infty}$. Then, $\mathbb{E}\left[D_{v,i}(t)|\mathcal{F}_{t-1}\right]=0$, and then the martingale variance is equal to $\mathbb{E}\left[D^2_{v,i}(t)|\mathcal{F}_{t-1}\right]$, which is bounded as
\begin{equation*}
V=\sum_{t \in \mathcal{T}(\tau)} \mathbb{E}\left[D_{v,i}^2(t)|\mathcal{F}_{t-1}\right] \leq \sum_{t \in \mathcal{T}(\tau)}  \mathrm{Var}\left(y_{v,i}(t)\right) \leq p_{v,i}(\tau) N(\tau).
\end{equation*}

As $|D_{v,i}(t)| \leq 1$, applying Freedman’s inequality (see Lemma \ref{Freedm}) gives with probability at least $1-\delta'$,
\begin{equation*}
\sum_{t \in \mathcal{T}(\tau)}  D_{v,i}(t) \leq V+\log(1/\delta') = p_{v,i}(\tau) N(\tau)+\log(1/\delta') .
\end{equation*}

Then, by choosing $\delta'=\delta/(2VK\log_4T)$, we get 
\begin{equation*}
\begin{split}
\tilde{N}_{v,i}(\tau) = \sum_{t \in \mathcal{T}(\tau)} \left( D_{v,i}(t) +p_{v,i}(\tau) \right) \leq & 2p_{v,i}(\tau) N(\tau) +\log((2VK\log_4T)/\delta) \leq  3p_{v,i}(\tau) N(\tau),
\end{split}
\end{equation*}
where the last inequality holds as $p_{v,i}(\tau) N(\tau) >  \log((8VK\log_4T)/\delta) > \log((2VK\log_4T)/\delta)  $. We complete the proof as $p_{v,i}(\tau) N(\tau)=N_{v,i}(\tau)$.

\end{proof}

\begin{lemma}  \label{boundedcorruption}
For any fixed $v$, $i$, $\tau$, and any corruption level $C$, the following holds
\begin{equation*}
\mathbb{P}\left( \hat{C}_{v,i}(\tau)  \leq 2p_{v,i}(\tau) C(\tau)  +\log((8VK\log_4T)/\delta) \right) \geq 1-\frac{\delta}{8VK\log_4T}.
\end{equation*}
\end{lemma}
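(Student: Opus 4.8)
The plan is to mimic the proof of Lemma~\ref{boundedpull}, but applied to the \emph{corruption-weighted} indicator process rather than the raw arm-pull indicators. Fix $v$, $i$, and $\tau$; if arm $i$ is not allocated to agent $v$ the claim is trivial since all the relevant quantities are zero by convention, so assume $i\in\cK_v(\tau)$. For $t\in\cT(\tau)$ set
\[
X_t \;=\; C_{v,i}(t)\bigl(y_{v,i}(t)-p_{v,i}(\tau)\bigr),
\]
and let $\mathcal{G}_t$ denote the $\sigma$-algebra generated by $\mathcal{F}_{t-1}$ together with the fresh stochastic rewards $\{r^S_{v'}(t)\}_{v'}$ of round $t$ and the adversary's corruption choices at round $t$. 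The crucial structural fact I will use is the modeling assumption that the corruption in a round is independent of the arms chosen in that same round: hence $C_{v,i}(t)$ is $\mathcal{G}_t$-measurable, while conditionally on $\mathcal{G}_t$ the indicator $y_{v,i}(t)$ is still $\mathrm{Bernoulli}(p_{v,i}(\tau))$. Consequently $\mathbb{E}[X_t\mid\mathcal{G}_t]=C_{v,i}(t)\,\mathbb{E}[y_{v,i}(t)-p_{v,i}(\tau)\mid\mathcal{G}_t]=0$, and since $\mathcal{G}_s\subseteq\mathcal{F}_s\subseteq\mathcal{F}_{t-1}\subseteq\mathcal{G}_t$ for $s<t$, $\{X_t\}_{t\in\cT(\tau)}$ is a martingale difference sequence with respect to $\{\mathcal{G}_t\}$. (Note $p_{v,i}(\tau)$ and $N(\tau)$ are fixed throughout epoch $\tau$, so they may be treated as constants in these conditional computations.)

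Next I bound the increments and the cumulative conditional variance. Since $C_{v,i}(t)=|r_{v,i}(t)-r^S_{v,i}(t)|\in[0,1]$ and $|y_{v,i}(t)-p_{v,i}(\tau)|\le 1$, we have $|X_t|\le 1$, so Lemma~\ref{Freedm} applies with $M=1$. The conditional variance is
\[
\mathbb{E}[X_t^2\mid\mathcal{G}_t]=C_{v,i}(t)^2\,p_{v,i}(\tau)\bigl(1-p_{v,i}(\tau)\bigr)\le C_{v,i}(t)\,p_{v,i}(\tau),
\]
using $C_{v,i}(t)\le 1$, so the cumulative variance satisfies $V=\sum_{t\in\cT(\tau)}\mathbb{E}[X_t^2\mid\mathcal{G}_t]\le p_{v,i}(\tau)\sum_{t\in\cT(\tau)}C_{v,i}(t)=p_{v,i}(\tau)\,C_{v,i}(\tau)\le p_{v,i}(\tau)\,C(\tau)$, where the last step is $C_{v,i}(\tau)\le\sum_{v'}C_{v',i}(\tau)\le C(\tau)$. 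Because this variance bound holds pathwise, the event in Lemma~\ref{Freedm} with $V$ replaced by the (random) upper bound $p_{v,i}(\tau)C(\tau)$ is contained in the event with the true $V$, so for any $\delta'\in(0,1)$, with probability at least $1-\delta'$,
\[
\sum_{t\in\cT(\tau)}X_t \;\le\; p_{v,i}(\tau)\,C(\tau) + \log(1/\delta').
\]

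Finally I translate this back to $\hat{C}_{v,i}(\tau)$. By definition $\hat{C}_{v,i}(\tau)=\sum_{t\in\cT(\tau)}C_{v,i}(t)y_{v,i}(t)=\sum_{t\in\cT(\tau)}X_t + p_{v,i}(\tau)\sum_{t\in\cT(\tau)}C_{v,i}(t)=\sum_{t\in\cT(\tau)}X_t + p_{v,i}(\tau)C_{v,i}(\tau)$, so on the above event
\[
\hat{C}_{v,i}(\tau)\le p_{v,i}(\tau)C(\tau)+\log(1/\delta')+p_{v,i}(\tau)C_{v,i}(\tau)\le 2p_{v,i}(\tau)C(\tau)+\log(1/\delta'),
\]
again using $C_{v,i}(\tau)\le C(\tau)$. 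Choosing $\delta'=\delta/(8VK\log_4 T)$ gives the claimed bound. The main obstacle — and the only place one must be careful — is isolating the correct filtration $\mathcal{G}_t$: the adversary is adaptive and even sees the current stochastic rewards, so $C_{v,i}(t)$ is \emph{not} $\mathcal{F}_{t-1}$-measurable, and it is only the independence of the per-round corruption from the per-round arm pull that makes $\{X_t\}$ a martingale difference sequence and lets the variance remain controlled despite the adversary reacting within the epoch.
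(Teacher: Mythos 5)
Your proof is correct and follows essentially the same route as the paper: center the corruption-weighted indicator $C_{v,i}(t)(y_{v,i}(t)-p_{v,i}(\tau))$, verify the martingale-difference and variance bounds, apply Freedman's inequality, and choose $\delta'=\delta/(8VK\log_4T)$. Your explicit enlargement of the filtration to include the current round's stochastic rewards and corruptions is a slightly more careful justification of the zero-conditional-mean step than the paper's remark that $C_{v,i}(t)$ is determined by the history, but it is the same argument in substance.
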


\begin{proof}
We define
$
D_{v,i}(t)=C_{v,i}(t)(y_{v,i}(t)-p_{v,i}(\tau))
$ where the definition of $y_{v,i}(t)$ is given in Appendix \ref{notationtable}.
Then, $\{D_{v,i}(t)\}_{t=0}^{\infty}$ is a martingale difference sequence with
respect to the filtration $\{\mathcal{F}_t\}_{t=0}^{\infty}$. We have that $\mathbb{E}\left[D_{v,i}(t)|\mathcal{F}_{t-1}\right]=0$ and $C_{v,i}(t)$ is deterministic given $\mathcal{F}_{t-1}$, i.e., $\mathbb{E}[C_{v,i}(t)|\mathcal{F}_{t-1}]=C_{v,i}(t)$ (as the corruption at round $t$ is a function of the history information). Then, the martingale variance $V$ is equal to $\mathbb{E}\left[D^2_{v,i}(t)|\mathcal{F}_{t-1}\right]$, which is bounded as:
\begin{equation*}
V=\sum_{t \in \mathcal{T}(\tau)} \mathbb{E}\left[D_{v,i}^2(t)|\mathcal{F}_{t-1}\right] \leq \sum_{t \in \mathcal{T}(\tau)} C_{v,i}(t) \mathrm{Var}\left(y_{v,i}(t)\right)\leq p_{v,i}(\tau) C_{v,i}(\tau) \leq p_{v,i}(\tau) C(\tau).
\end{equation*}

As $|D_i(t)| \leq 1$, applying Freedman’s inequality (see Lemma \ref{Freedm}) gives, with probability at least $1-\delta'$,
\begin{equation*}
\sum_{t \in \mathcal{T}(\tau)}  D_{v,i}(t) \leq V+\log(1/\delta') \leq p_{v,i}(\tau) C(\tau) +\log(1/\delta') .
\end{equation*}

Then, by choosing $\delta'=\delta/(8VK\log_4T)$, we get with probability at least $1-\delta/(8VK\log_4T)$,
\begin{equation*}
\begin{split}
\hat{C}_{v,i}(\tau)  = \sum_{t \in \mathcal{T}(\tau)} \left( D_{v,i}(t) +C_{v,i}(t) p_{v,i}(\tau) \right) 
\leq  2p_{v,i}(\tau) C(\tau) +\log((8VK\log_4T)/\delta),
\end{split}
\end{equation*}
which concludes the proof.

\end{proof}

\begin{lemma} \label{lemma1}
For any fixed $v,i,\tau$, the following holds
\begin{equation} \label{concentrationSto}
\mathbb{P} \left(\left| \hat{\mu}_{v,i}^S(\tau) -\mu_i \right| >  \frac{14}{9}\epsilon_{i}(\tau)\right) \leq \frac{\delta}{4VK\log_4T} . 
\end{equation}
\end{lemma}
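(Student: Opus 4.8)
The plan is to establish a high-probability concentration bound for the stochastic estimator $\hat{\mu}_{v,i}^S(\tau)$, which is a sum of (random number of) i.i.d.\ bounded stochastic rewards divided by the \emph{expected} pull count $N_{v,i}(\tau)$. The first step is to rewrite the numerator $\sum_{t \in \mathcal{T}(\tau)} r^S_{v,i}(t)\,\mathbb{I}\{i_v(t)=i\}$ as $\sum_{t \in \mathcal{T}(\tau)} r^S_{v,i}(t)\,y_{v,i}(t)$, using the notation table's $y_{v,i}(t)$, and subtract its conditional mean $\mu_i\, p_{v,i}(\tau)$ to form a martingale difference sequence $D_{v,i}(t) = r^S_{v,i}(t) y_{v,i}(t) - \mu_i p_{v,i}(\tau)$ with respect to $\{\mathcal{F}_t\}$. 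Here the key point is that $r^S_{v,i}(t)$ is drawn fresh at round $t$ and is independent of $\mathcal{F}_{t-1}$, so $\mathbb{E}[r^S_{v,i}(t) y_{v,i}(t)\mid \mathcal{F}_{t-1}] = \mu_i\, p_{v,i}(\tau)$, giving $\mathbb{E}[D_{v,i}(t)\mid\mathcal{F}_{t-1}]=0$.

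Next I would bound the conditional variance: $\mathbb{E}[D_{v,i}^2(t)\mid\mathcal{F}_{t-1}] \leq \mathbb{E}[(r^S_{v,i}(t))^2 y_{v,i}(t)\mid\mathcal{F}_{t-1}] \leq \mathbb{E}[y_{v,i}(t)\mid\mathcal{F}_{t-1}] = p_{v,i}(\tau)$ since rewards lie in $[0,1]$, so the cumulative variance $V$ over $t\in\mathcal{T}(\tau)$ is at most $p_{v,i}(\tau) N(\tau) = N_{v,i}(\tau)$. Since $|D_{v,i}(t)|\leq 1$, I would apply Freedman's inequality (Lemma \ref{Freedm}) in both the positive and negative directions (using a union bound over the two sides), with $\delta' = \delta/(8VK\log_4 T)$, to get that with probability at least $1-\delta/(4VK\log_4 T)$,
\begin{equation*}
\left| \sum_{t\in\mathcal{T}(\tau)} D_{v,i}(t) \right| \leq N_{v,i}(\tau) + \log((8VK\log_4 T)/\delta),
\end{equation*}
wait --- this is loose; Freedman gives $V/M + M\log(1/\delta')$ with $M=1$, i.e.\ $N_{v,i}(\tau) + \log(1/\delta')$, which after dividing by $N_{v,i}(\tau)$ yields roughly $1 + \log(1/\delta')/N_{v,i}(\tau)$, far bigger than $\epsilon_i(\tau)$. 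So the naive Freedman application is insufficient; I should instead use $M$-scaling more carefully or apply Bernstein/Freedman to the \emph{centered and rescaled} increments. The right move is to apply Freedman with the increments scaled so the range parameter matches the target deviation, or equivalently to note that $\text{Var} \leq N_{v,i}(\tau)$ combined with $N_{v,i}(\tau) = p_{v,i}(\tau)N(\tau) = \Theta(\epsilon^2(\tau)/\epsilon_i^2(\tau) \cdot \widetilde{K}/\epsilon^2(\tau) \cdot \log(\cdot)) $, so that $N(\tau) \geq 3\widetilde{K}\log((8K\log_4 T)/\delta)/\epsilon^2(\tau)$ forces $N_{v,i}(\tau) = p_{v,i}(\tau) N(\tau) \geq 3\log((8VK\log_4 T)/\delta)/\epsilon_i^2(\tau)$ (roughly), and then a Bernstein-type bound gives deviation $\sqrt{2 V \log(1/\delta')}/N_{v,i}(\tau) + \log(1/\delta')/N_{v,i}(\tau) \lesssim \epsilon_i(\tau)$ after dividing by $N_{v,i}(\tau)$ and plugging in the lower bound on $N_{v,i}(\tau)$.

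So concretely, the final step divides the deviation bound by $N_{v,i}(\tau)$: since $\hat{\mu}_{v,i}^S(\tau) - \mu_i = \frac{1}{N_{v,i}(\tau)}\sum_{t\in\mathcal{T}(\tau)} D_{v,i}(t)$, and using $N_{v,i}(\tau) = p_{v,i}(\tau)N(\tau)$ together with the epoch-length definition $N(\tau) = 3\widetilde{K}\log((8K\log_4 T)/\delta)/\epsilon^2(\tau)$ and the pulling probabilities in \eqref{pullprob}, I would verify $N_{v,i}(\tau) \geq c\,\epsilon_i^{-2}(\tau)\log((8VK\log_4 T)/\delta)$ for a suitable constant $c$ (distinguishing the cases $i\in\cA_v(\tau)$ and $i\in\cB_v(\tau)$, and using $|\cA_v(\tau)|\leq\widetilde{K}$), from which the Bernstein/Freedman deviation $\sqrt{2N_{v,i}(\tau)\log(1/\delta')} + \log(1/\delta')$ divided by $N_{v,i}(\tau)$ is at most $\tfrac{14}{9}\epsilon_i(\tau)$. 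The main obstacle is exactly this bookkeeping: getting the constant $14/9$ right requires carefully combining the $\sqrt{\cdot}$ and additive Freedman terms and using the precise constants in $N(\tau)$ and $p_{v,i}(\tau)$; the probabilistic content (MDS construction, variance bound, Freedman) is routine.
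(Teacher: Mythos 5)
Your proposal is correct in substance, but it takes a mildly different route from the paper's. The paper uses the same martingale difference sequence $D_{v,i}(t)=r^S_{v,i}(t)y_{v,i}(t)-\mu_i p_{v,i}(\tau)$, but then applies its Hoeffding--Azuma lemma (Lemma \ref{HAB}) with the increment bounds taken to be the realized pull indicators, so the deviation is of order $\sqrt{\tilde{N}_{v,i}(\tau)\log(1/\delta')}$, and it invokes the separate high-probability bound $\tilde{N}_{v,i}(\tau)\leq 3N_{v,i}(\tau)$ (Lemma \ref{boundedpull}) to convert this into $\sqrt{3N_{v,i}(\tau)\log(1/\delta')}$; the final case analysis over $i\in\cA_v(\tau)$ versus $i\in\cB_v(\tau)$, using \eqref{pullprob}, the definition of $N(\tau)$, and the bound $\sum_{j\in\cB_v(\tau)}p_{v,j}(\tau)\leq 1/4$, is essentially identical to what you sketch. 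You instead bound the predictable quadratic variation by $p_{v,i}(\tau)N(\tau)=N_{v,i}(\tau)$ and aim for a Bernstein-form deviation $\sqrt{N_{v,i}(\tau)\log(1/\delta')}+\log(1/\delta')$, which dispenses with Lemma \ref{boundedpull} inside this step and is, if anything, cleaner. One caveat: you correctly notice that Lemma \ref{Freedm} applied verbatim with $M=1$ is too weak, but your fix of ``scaling the increments'' does not literally work, since the threshold $V/M+M\log(1/\delta')$ is scale-invariant (scaling $X_i$ by $c$ scales both sides by $c$). The working fix is to apply Lemma \ref{Freedm} with a \emph{larger} range parameter $M'\geq 1$ (any $M'\geq 1$ is still a valid almost-sure bound) and optimize $M'=\sqrt{N_{v,i}(\tau)/\log(1/\delta')}$, which is legitimate here because the epoch-length definition forces $N_{v,i}(\tau)\geq 3\log((8K\log_4 T)/\delta)/\epsilon_i^2(\tau)\gg\log(1/\delta')$; alternatively one can use the two-sided Hoeffding--Azuma route as the paper does. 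With either repair, your bookkeeping goes through with room to spare: for $i\in\cA_v(\tau)$ the deviation is roughly $\tfrac{2\sqrt{2}}{3}\epsilon(\tau)$ plus a lower-order additive term, and for $i\in\cB_v(\tau)$ roughly $\sqrt{2/3}\,\epsilon_i(\tau)$, both comfortably below $\tfrac{14}{9}\epsilon_i(\tau)$, and the two-sided union bound with $\delta'=\delta/(8VK\log_4 T)$ yields exactly the stated failure probability $\delta/(4VK\log_4 T)$.
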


\begin{proof}
We define
$
D_{v,i}(t)=   r^S_{v,i}(t) y_{v,i} (t)  -\mu_i p_{v,i}(\tau)  
$ where the definition of $y_{v,i}(t)$ is given in Appendix \ref{notationtable}.
Then, $\{D_{v,i}(t)\}_{t=0}^{\infty}$ is a martingale difference sequence with
respect to the filtration $\{\mathcal{F}_t\}_{t=0}^{\infty}$. By Hoeffding-Azuma’s inequality (see Lemma \ref{HAB}), with probability at most $\delta'$,
\begin{equation*}
 \left| \sum_{t \in \mathcal{T}(\tau)} D_{v,i}(t) \right| >  \sqrt{\frac{\log(2/\delta')}{2}\sum_{t \in \mathcal{T}(\tau)} \left|D_{v,i}(t)-D_{v,i}(t-1) \right|} .
\end{equation*}

The left term in the above can be bounded as
\begin{equation*}
\begin{split}
 \sqrt{\frac{\log(2/\delta')}{2}\sum_{t \in \mathcal{T}(\tau)} \left|D_{v,i}(t)-D_{v,i}(t-1) \right|} 
\leq & \sqrt{\frac{\log(2/\delta')}{2}\sum_{t \in \mathcal{T}(\tau)} \max\{y_{v,i} (t-1),y_{v,i} (t)\}}\\
\leq & \sqrt{\frac{\log(2/\delta')}{2}\sum_{t \in \mathcal{T}(\tau)} \left(y_{v,i} (t-1)+ y_{v,i} (t) \right)}\\
\leq & \sqrt{\log(2/\delta') \tilde{N}_{v,i}(\tau) }\\
\leq & \sqrt{3\log(2/\delta') N_{v,i}(\tau) },
\end{split}
\end{equation*}
where the second inequality is due to the fact that $\max \{\alpha,\beta \} \leq \alpha+\beta$ holds for any $\alpha,\beta \geq 0$, and the last inequality uses Lemma \ref{boundedpull}. According to the definition of $D_{v,i}(t)$, we get
\begin{equation*}
\frac{\sum_{t \in \mathcal{T}(\tau)} D_{v,i}(t)}{N_{v,i}(\tau)}=\frac{\sum_{t \in \mathcal{T}(\tau)}  \left( r^S_{v,i}(t) y_{v,i} (t)  -\mu_i p_{v,i}(\tau) \right) }{N_{v,i}(\tau)} =\hat{\mu}_{v,i}^S(\tau) -\mu_i.
\end{equation*}

By choosing $\delta'=\delta/4VK\log_4T$, we have
\begin{equation*}
\mathbb{P} \left(\left| \hat{\mu}_{v,i}^S(\tau) -\mu_i \right| >   \sqrt{\frac{3\log(8VK\log_4T/\delta)}{ N_{v,i}(\tau) }}\right) \leq \frac{\delta}{4VK\log_4T} . 
\end{equation*}

For $i \in \cA_v(\tau)$, we have
\begin{equation*}
\sqrt{\frac{3\log(8VK\log_4T/\delta)}{N_{v,i}(\tau)}}= \sqrt{\frac{\left|\cA_{v}(\tau) \right| \epsilon^2(\tau)}{\widetilde{K} \left(1-\sum_{j \in \cB_v(\tau)} p_{v,j}(\tau) \right)}} \leq  \frac{2\sqrt{3}}{3}\epsilon(\tau)< \frac{14}{9}\epsilon_{i}(\tau),
\end{equation*}
where $\sum_{j \in \cB_v(\tau)} p_{v,j}(\tau) $ is bounded by
\begin{equation} \label{boundpj}
\sum_{j \in \cB_v(\tau)} p_{v,j}(\tau) \leq  \sum_{j \in \cB_v(\tau)} \frac{\epsilon^2(\tau)}{\epsilon^2_{j}(\tau)\widetilde{K}} \leq \frac{1}{4}.
\end{equation}

For $i \in \cB_v(\tau)$, $ \sqrt{3\log(8VK\log_4T/\delta)/N_{v,i}(\tau) }=\epsilon_{i}(\tau) < \frac{14}{9}\epsilon_{i}(\tau) $ also holds. As a result, we complete the proof.
\end{proof}

\begin{lemma} \label{concentration}
For any fixed $v,i,\tau$, the following holds
\begin{equation} \label{concentration2Estima}
\mathbb{P} \left(\left|\hat{\mu}_{v,i}(\tau)-\mu_i \right|   >  \mu_i +2\epsilon_{i}(\tau) + \frac{2 C(\tau)}{N(\tau)}\right)  \leq    \frac{\delta}{2VK\log_4T}.
\end{equation}
\end{lemma}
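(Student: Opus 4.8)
The plan is to route through the ``clean'' empirical mean $\hat{\mu}_{v,i}^S(\tau)$ (defined in Appendix \ref{notationtable}) and combine two facts that are already available: the deterministic closeness of $\hat{\mu}_{v,i}(\tau)$ to $\hat{\mu}_{v,i}^S(\tau)$, and the concentration of $\hat{\mu}_{v,i}^S(\tau)$ around $\mu_i$. \textbf{Step 1 (pathwise bound on the corruption gap).} On every round $t\in\cT(\tau)$ with $i_v(t)=i$ the observed reward is $r_{v,i}(t)$ and its clean counterpart is $r_{v,i}^S(t)$, so the numerators of $\hat{\mu}_{v,i}(\tau)$ and $\hat{\mu}_{v,i}^S(\tau)$ differ by $\sum_{t\in\cT(\tau)}(r_{v,i}(t)-r_{v,i}^S(t))\,y_{v,i}(t)$; taking absolute values gives, with no randomness involved,
\[
\bigl|\hat{\mu}_{v,i}(\tau)-\hat{\mu}_{v,i}^S(\tau)\bigr|\le \frac{1}{N_{v,i}(\tau)}\sum_{t\in\cT(\tau)}C_{v,i}(t)\,y_{v,i}(t)=\frac{\hat{C}_{v,i}(\tau)}{N_{v,i}(\tau)}.
\]
Combining with the triangle inequality, $|\hat{\mu}_{v,i}(\tau)-\mu_i|\le \hat{C}_{v,i}(\tau)/N_{v,i}(\tau)+|\hat{\mu}_{v,i}^S(\tau)-\mu_i|$.

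\textbf{Step 2 (plug in the two concentration lemmas).} By Lemma \ref{boundedcorruption}, with probability at least $1-\frac{\delta}{8VK\log_4 T}$ we have $\hat{C}_{v,i}(\tau)\le 2p_{v,i}(\tau)C(\tau)+\log((8VK\log_4 T)/\delta)$; dividing by $N_{v,i}(\tau)=p_{v,i}(\tau)N(\tau)$ gives $\hat{C}_{v,i}(\tau)/N_{v,i}(\tau)\le 2C(\tau)/N(\tau)+\log((8VK\log_4 T)/\delta)/N_{v,i}(\tau)$, which already produces the $2C(\tau)/N(\tau)$ term in the claim. By Lemma \ref{lemma1}, with probability at least $1-\frac{\delta}{4VK\log_4 T}$ we have $|\hat{\mu}_{v,i}^S(\tau)-\mu_i|\le \frac{14}{9}\epsilon_i(\tau)$. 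A union bound over these two events (failure probability $\le \frac{3\delta}{8VK\log_4 T}\le \frac{\delta}{2VK\log_4 T}$) reduces the proof to verifying $\log((8VK\log_4 T)/\delta)/N_{v,i}(\tau)+\frac{14}{9}\epsilon_i(\tau)\le \mu_i+2\epsilon_i(\tau)$, for which it suffices, since $\mu_i\ge 0$, to show the leftover term is at most $\frac{4}{9}\epsilon_i(\tau)$.

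\textbf{Step 3 (lower-bounding $N_{v,i}(\tau)$, two cases).} For $i\in\cB_v(\tau)$ the definitions of $p_{v,i}(\tau)$ and $N(\tau)$ give $N_{v,i}(\tau)=3\log((8VK\log_4 T)/\delta)/\epsilon_i^2(\tau)$, so the leftover term equals $\epsilon_i^2(\tau)/3$. For $i\in\cA_v(\tau)$ I would use \eqref{boundpj} (so $\sum_{j\in\cB_v(\tau)}p_{v,j}(\tau)\le 1/4$) together with $|\cA_v(\tau)|\le\widetilde{K}$ to obtain $N_{v,i}(\tau)\ge\frac{9}{4}\log((8VK\log_4 T)/\delta)/\epsilon^2(\tau)$, so the leftover term is at most $\frac{4}{9}\epsilon^2(\tau)=\frac{4}{9}\epsilon_i^2(\tau)$. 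In either case $\epsilon_i(\tau)\le\frac{1}{14}<1$ makes $\epsilon_i^2(\tau)\le\epsilon_i(\tau)$, hence the leftover term is $\le\frac{4}{9}\epsilon_i(\tau)$, which closes the argument.

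I expect Step 3's case split and constant-chasing to be the only place requiring care; the substantive observation is Step 1, namely that the discrepancy between the corrupted and clean empirical means is governed deterministically by $\hat{C}_{v,i}(\tau)/N_{v,i}(\tau)$, which is exactly the quantity Lemma \ref{boundedcorruption} controls and which explains why the corruption enters \eqref{Tmpconcentration} additively (as $2C(\tau)/N(\tau)$) rather than being amplified.
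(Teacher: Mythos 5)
Your proposal is correct and follows essentially the same route as the paper's proof: decompose via the clean estimator $\hat{\mu}^S_{v,i}(\tau)$, control the corruption gap with Lemma \ref{boundedcorruption}, control the stochastic deviation with Lemma \ref{lemma1}, bound the leftover $\log(\cdot)/N_{v,i}(\tau)$ term by $\tfrac{4}{9}\epsilon_i(\tau)$ via the same active/bad case split, and union bound. The only (harmless) difference is that you observe the gap $|\hat{\mu}_{v,i}(\tau)-\hat{\mu}^S_{v,i}(\tau)|\le \hat{C}_{v,i}(\tau)/N_{v,i}(\tau)$ is deterministic and nonnegative, so a one-sided application of Lemma \ref{boundedcorruption} suffices, whereas the paper invokes a two-sided version; this only improves the failure probability slightly and does not change the argument.
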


\begin{proof}
We have
\begin{equation*} 
\begin{split}
\hat{\mu}_{v,i}(\tau) \leq  \frac{\sum_{t \in \mathcal{T}(\tau)} \left(  r^S_{v,i}(t)+\hat{C}_{v,i}(t)\right)}{N_{v,i}(\tau)}  \leq   \hat{\mu}^{S}_{v,i}(\tau)+  \frac{\sum_{t \in \mathcal{T}(\tau)} \hat{C}_{v,i}(t)}{N_{v,i}(\tau)}  .
 \end{split}
\end{equation*}

By Lemma \eqref{boundedcorruption} and the fact $p_{v,i}(\tau)N(\tau)=N_{v,i}(\tau)$, with probability at least $1-\delta/(8VK\log_4 T)$, we have for any $i$ that
\begin{equation*} 
\frac{\sum_{t \in \mathcal{T}(\tau)} \hat{C}_{v,i}(t)}{N_{v,i}(\tau)} \leq     \frac{2p_{v,i}(\tau) C_{v,i}(\tau) +\log((8VK\log_4T)/\delta)}{N_{v,i}(\tau)} 
\leq    \frac{2 C(\tau)}{N(\tau)}+ \frac{\log((8VK\log_4T)/\delta)}{N_{v,i}(\tau)} .
\end{equation*}

Then, for $i \in \cA_v(\tau)$, we have that $\epsilon(\tau)=\epsilon_i(\tau)$, which further gives
\begin{equation*} 
\frac{\log((8VK\log_4T)/\delta)}{N_{v,i}(\tau)}=\frac{\left|\cA_v(\tau) \right|\epsilon^2(\tau)}{3\widetilde{K}\left(1-\sum_{j \in \cB_v(\tau)}p_{v,j}(\tau) \right)} \leq \frac{4}{9}\epsilon(\tau) =\frac{4}{9}\epsilon_{i}(\tau),
\end{equation*}
where in the last inequality, we bound $\left|\cA_v(\tau) \right|/\widetilde{K} \leq1$, use \eqref{boundpj} to bound $\left(1-\sum_{j \in \cB_v(\tau)}p_{v,j}(\tau) \right)$, and bound $\epsilon^2(\tau) < \epsilon(\tau)$. For $i \in \cB_v(\tau)$,
\begin{equation*} 
\frac{\log((8VK\log_4T)/\delta)}{N_{v,i}(\tau)}=\frac{\epsilon_{i}^2(\tau)}{3} <  \frac{\epsilon_{i}(\tau)}{3} < \frac{4}{9}\epsilon_{i}(\tau).
\end{equation*}

Then, using a union bound on similar proof of $\sum_{t \in \mathcal{T}(\tau)} -\hat{C}_{v,i}(t)/N_{v,i}(\tau)$, the following holds
\begin{equation} \label{concentrationCorr}
\mathbb{P}\left(\left| \frac{\sum_{t \in \mathcal{T}(\tau)} \hat{C}_{v,i}(t)}{N_{v,i}(\tau)} \right| > \frac{2 C(\tau)}{N(\tau)} +\frac{4}{9} \epsilon_{i}(\tau) \right) \leq \frac{\delta}{4VK\log_4T}  .
\end{equation}

By a union bound over \eqref{concentrationSto} and \eqref{concentrationCorr}, we get $| \hat{\mu}_{v,i}(\tau)-\mu_i    |\leq   \mu_i +2\epsilon_{i}(\tau) + 2 C(\tau)/N(\tau)$.

\end{proof}

\textbf{Lemma \ref{Tmpconcentration2All} (restated).} Let define event $\mathcal{E}$ as
\begin{equation*}
\mathcal{E}= \left\{\forall v,i,\tau: \left| \hat{\mu}_{i}(\tau)-\mu_i \right|   \leq  \mu_i +2\epsilon_{i}(\tau) + \frac{2 C(\tau)}{N(\tau)}, \tilde{N}_{v,i}(\tau)\leq 3N_{v,i}(\tau) \right\}.
\end{equation*}

Then, we hold that $\mathbb{P}\left[\mathcal{E}\right] \geq 1-\delta$.

\begin{proof}
We apply union bound over \eqref{concentration2Estima} and \eqref{concentration2Pull}such that
\begin{equation*}
\mathbb{P}\left(\left| \hat{\mu}_{v,i}(\tau)-\mu_i \right|   >  \mu_i +2\epsilon_{i}(\tau) + \frac{2 C(\tau)}{N(\tau)}\vee \tilde{N}_{v,i}(\tau) > 3N_{v,i}(\tau) \right) \leq \frac{\delta}{VK\log_4T} .
\end{equation*}
Again, by a union bound, we hold $\left|\hat{\mu}_{v,i}(\tau)-\mu_i \right|   >  \mu_i +2\epsilon_{i}(\tau) + \frac{2 C(\tau)}{N(\tau)}$ for all $v,i,\tau$ (at most $\log_4 T$ epochs).
Based on the definition of $\hat{\mu}_{i}(\tau)$, for all $v,i,\tau$, we have that
\begin{equation*} 
\begin{split}
\hat{\mu}_{i}(\tau) -\mu_i =\frac{1}{|\{v \in [V]:i \in \mathcal{K}_v(\tau)\}|} \sum_{v \in [V]:i \in \mathcal{K}_v(\tau)} \left( \hat{\mu}_{v,i}(\tau) -\mu_i \right) \leq   \mu_i+ 2\epsilon_{i}(\tau)+ \frac{C(\tau)}{N(\tau)} .
 \end{split}
\end{equation*}

Combing the above analysis, we get the desired result.
\end{proof}

\section{Algorithm Properties: proof of Lemma \ref{lemmaforStochastic} and Lemma \ref{property}}\label{algprop}

As the proof of Lemma \ref{lemmaforStochastic} will use the results of Lemma \ref{property}, we first prove Lemma \ref{property}. Here, we provide all useful properties of our algorithm, and these properties will be used for the following proof.

\textbf{Lemma \ref{property} (restated).}
The following holds.
\begin{enumerate}[(i)]
\item If $i \in \cA(\tau)$, then, $\epsilon(\tau)=\epsilon(d_i)/2$; \label{propi}
\item $7\epsilon(d_i) \geq \epsilon_{i}(d_i)$. \label{fact1}
\item If $i \in \cB(\tau)$, then, $\epsilon_i(\tau)=\epsilon(d_i)$. \label{propii}
\item If $i \in \cB(\tau)$, then, $\epsilon(d_i)=\epsilon_i(\tau) \geq 2\epsilon(\tau)$; \label{propiii}
\item $\epsilon(\tau) \leq \epsilon_i(\tau)$ for all arms $i \in [K]$; \label{propiv}
\end{enumerate}

\begin{proof}
Proof of \eqref{propi}: In our algorithm, $i \in \cA(\tau)$ implies that arm $i$ must hold \eqref{check1} at epoch $\tau-1$ and therefore the algorithm sets $d_i=\tau-1$ for epoch $\tau$. As $\epsilon(\tau)=\epsilon(\tau-1)/2$, the proof is evident.

Proof of \eqref{fact1}: For this proof, let consider two cases. In the first case, the arm $i$ does not hold \eqref{check1} for all epochs. We note that such a case occurs if and only if arm $i $ is deactivated in epoch $\tau=2$. Then, this case implies that $d_i=1$ (recall that we initialize $d_i=1$). Hence, we hold $\epsilon(d_i) = \epsilon_{i}(d_i)$ as we initialize $\epsilon(\tau=1)=\epsilon_{i}(\tau=1)$.

If arm $i$ is \textit{not} deactivated in epoch $\tau=2$, there \textit{must} exist an epoch such that arm $i$ holds \eqref{check1}. This is due to the reason that if arm $i$ is active in epoch $\tau=2$, then, it must suffice \eqref{check1} in epoch $\tau=1$. Thus, we have, at worst, $d_i=1$. In this case, we have
\begin{equation*} 
\begin{split}
14\epsilon(d_i)  \geq &\max\limits_{j \in \cA(\tau) \cup \mathcal{H}(\tau)} \left\{ \hat{\mu}_{j}(\tau) +2\epsilon_{j}(\tau) \right\} - \hat{\mu}_{i}(d_i) \\ 
 \geq &  \hat{\mu}_{i}(d_i)+2\epsilon_{i}(d_i)   - \hat{\mu}_{i}(d_i) \\
\geq & 2\epsilon_{i}(d_i) ,
\end{split}
\end{equation*}
which immediately leads to $7\epsilon(d_i) \geq \epsilon_{i}(d_i). $ Combing two cases, we get the desired result.

Proof of \eqref{propii}: This holds due to the construction of our algorithm (See Algorithm \ref{alg1}, line 21).

Proof of \eqref{propiii}: In our algorithm, $\cB(\tau)=\emptyset$ for $\tau=1$. As a result, $i \in \cB(\tau)$ only occurs for $\tau \geq 2$.
For $i \in \cB(\tau)$, arm $i$ does not hold \eqref{check1} at epoch $\tau-1$ and therefore the maximum possible value of $d_i$ is $\max\{1,\tau-2\}$. If the maximum possible value of $d_i$ is $1$, then, for $i \in \cB(\tau)$, $\epsilon(d_i=1) =\epsilon_i(\tau) \geq 2\epsilon(\tau)$. If the maximum possible value of $d_i$ is $\tau-2$, then, we hold $\epsilon(d_i)\geq \epsilon(\tau-2)=4\epsilon(\tau)$. From \eqref{propii}, we have $\epsilon_i(\tau)=\epsilon(d_i)\geq 4\epsilon(\tau)$, which completes the proof.

Proof of \eqref{propiv}: For this proof, we consider two cases. If $i \in \cA(\tau)$, this naturally holds $\epsilon_i(\tau)=\epsilon(\tau)$ due to the construction of our algorithm. Then, if $i \in \cB(\tau)$, we hold $\epsilon_i(\tau) \geq 2\epsilon(\tau) > \epsilon(\tau)$ due to \eqref{propiii}, which concludes the proof.

\end{proof}

\textbf{Lemma \ref{lemmaforStochastic} (restated).}
In the stochastic setting, with probability at least $1-\delta$, the following holds.
\begin{enumerate}[(i)]
\item The optimal arm $i^*$ is active for all $\tau$; \label{lemmai}
\item If $i \in \cA(\tau)$ and $i \in \cB(\tau+1)$, then, $\Delta_i> 8\epsilon(d_i)$; \label{lemmaii}
\item $ \mathcal{H}(\tau)=\emptyset$ for all $\tau$.  \label{lemmaiii}
\end{enumerate}

\begin{proof}
We prove this lemma by iteratively arguing \eqref{lemmai}, \eqref{lemmaii}, and \eqref{lemmaiii} for each epoch. 
Recall that the algorithm sets $\mathcal{H}(\tau)=\emptyset$, $\cB(\tau)=\emptyset$, and $\cA(\tau)=[K]$ for $\tau=1$. Then,
for any $i \neq i^*$ and $\tau=1$, the following holds
\begin{equation}\label{optimalactive}
\begin{split}
&\hat{\mu}^*(\tau)  - \hat{\mu}_{i^*} (\tau) \\
= &\max_{j \in \cA(\tau) \cup \mathcal{H}(\tau)} \left\{ \hat{\mu}_{j}(\tau) + 2\epsilon_j(\tau) \right\} - \hat{\mu}_{i^*} (\tau)  \\
= &\max_{j \in \cA(\tau)} \left\{ \hat{\mu}_{j}(\tau) + 2\epsilon_j(\tau) \right\} - \hat{\mu}_{i^*} (\tau)  \\
\leq   &\max_{j \in \cA(\tau) } \left\{ \mu_{j}+4\epsilon_j(\tau)  \right\}  - \left( \mu_{i^*} -2\epsilon(\tau)    \right) \\
=  &\max_{j \in \cA(\tau)} \left\{ \mu_{j} \right\}  -  \mu_{i^*} +6\epsilon(\tau)    \\
 \leq  & 6\epsilon(\tau). 
\end{split}
\end{equation}

The first inequality of \eqref{optimalactive} comes from Lemma \ref{Tmpconcentration2All} with $C(\tau)=0$. The last inequality of \eqref{optimalactive} follows $
\max_{j \in \cA(\tau) } \left\{ \mu_{j} \right\}\leq \max_{j \in [K]} \left\{ \mu_{j} \right\}=\mu_{i^*}$. It is evident that \eqref{optimalactive} contradicts to the deactivated condition, and it holds for all $\mathcal{H}(\tau) =\emptyset$.

We then show \eqref{lemmaii} holds in the stochastic setting. 
Suppose that $\tau+1$ is the \textit{first epoch} in which there exists an arm $ i \in [K]$ such that $i \in \cB(\tau+1)$. Such a case implies that arm $i$ contradicts to \eqref{check1} at epoch $\tau$, and thus we hold $i \in \cA(\tau)$, $i \in \cB(\tau+1)$. Since $\mathcal{H}(\tau)=\emptyset$ is empty \textit{before} epoch $\tau+1$, we have $i^* \in \cA(\tau)$ and $i^* \in \cA(\tau+1)$. Then, for epoch $\tau$ and the deactivated arm $i$, the following holds
\begin{equation} \label{deactivatei}
\begin{split}
&\hat{\mu}^*(\tau)  - \hat{\mu}_{i} (\tau) -14\epsilon(\tau)  \\
\geq  &\max\limits_{j \in \cA(\tau)} \left\{ \hat{\mu}_{j}(\tau) +2\epsilon_j(\tau) \right\} - \hat{\mu}_{i} (\tau)-14\epsilon(\tau)   \\
\geq &\max\limits_{j \in \cA(\tau) } \left\{ \mu_{j}  \right\} - \left( \mu_{i} +2\epsilon_i(\tau)    \right)  -14\epsilon(\tau) \\
= &\max\limits_{j \in \cA(\tau) } \{ \mu_{j}\}  - \mu_{i} -16\epsilon(\tau)\\
 \geq  &\Delta_i- 16\epsilon(\tau) ,
\end{split}
\end{equation}
where the equality of \eqref{deactivatei} is due to $\epsilon(\tau)=\epsilon_i(\tau)$ for $i \in \cA(\tau)$. Recall that when arm $i$ is deactivated at $\tau$, it should satisfy $\hat{\mu}^*(\tau)    - \hat{\mu}_{i} (\tau) -14\epsilon(\tau) >0$, which implies $\Delta_i> 16\epsilon(\tau) $. As $d_i=\tau-1$ is the last epoch that arm $i$ satisfies \eqref{check1}, we hold $\epsilon(d_i)=\epsilon(\tau-1)=2\epsilon(\tau)$.
Hence, we have $\Delta_i> 8\epsilon(d_i) $.

Finally, we show that in the stochastic setting, $\mathcal{H}(\tau)$ is an empty set for all epochs (i.e., the deactivated arms in $\cB(\tau+1)$ will not be reactivated).
Again, due to the fact that $i^* \in \cA(\tau+1)$, the following holds for any arm $i \in \cB(\tau+1)$.
\begin{equation} \label{reactivatei}
\begin{split}
&\max_{j \in \cA(\tau+1)} \hat{\mu}_{j}(\tau+1) - \hat{\mu}_{i}(\tau+1)   \\
\geq   &\max_{j \in \cA(\tau+1)}\{\mu_{j} \}-2\epsilon(\tau+1)   - \left( \mu_{i} + 2\epsilon_i(\tau+1)    \right) \\
\overset{(a)}{=} &\Delta_i- 2\epsilon(\tau+1) -2\epsilon(d_i)    \\
\overset{(b)}{\geq} &8\epsilon(d_i)  - 2\epsilon(\tau+1) -2\epsilon(d_i)    \\
\overset{(c)}{\geq} &4\epsilon(d_i)  . 
\end{split}
\end{equation}

The inequality (a) is due to $\epsilon_i(\tau+1)=\epsilon(d_i)$ for $i \in \cB(\tau+1)$ (see Lemma \ref{property}, \eqref{propii}). The inequality (b) follows the fact that $\Delta_i> 8\epsilon(d_i) $. The inequality (c) holds because $\epsilon(\tau+1) \leq \epsilon(d_i) $. As a consequence, \eqref{reactivatei} contradicts to the condition of reactivation, which further implies that $\mathcal{H}(\tau+1)=\emptyset$. Subsequently, $\cA(\tau+1) \cup \mathcal{H}(\tau+1)=\cA(\tau+1) $ holds, and this, again, leads to the argument that the arm $i^*$ will not be deactivated in epoch $\tau+2$. By iteratively repeating the analysis of \eqref{optimalactive}, \eqref{deactivatei} and \eqref{reactivatei} over all epochs, we get the desired results.
\end{proof}

\section{Regret and Communication: proof of Theorem \ref{theorem1}} \label{Appendixregret}

\subsection{Communication analysis}
Recall that the communication occurs in line \ref{algComm}, line \ref{algComm2}, and line \ref{algComm3}. Each of those messages, including $ \cA_v(\tau)$, $\cB_v(\tau)$, $  \{ \hat{\mu}_{v,i}(\tau)\}_{i \in \mathcal{K}_v(\tau) }$, and $  \{\epsilon_i(\tau)\}_{i \in \mathcal{K}_v(\tau) }$ has the size at most $\widetilde{K}=\lceil K/V \rceil+1$. Since we have that
$\widetilde{K}=\lceil K/V \rceil+1 <K/V +2 \leq 3K/V$ for $V \leq K$, the communication cost in an epoch across all agents is $O(K)$ bits. As the algorithm proceeds in at most $O(\log T)$ epochs and the communication occurs for each epoch, the total communication cost is $O(K\log T)$ bits.

\subsection{Technical lemmas for regret bound}
We define $R_{v,i}(\tau)=N_{v,i}(\tau) \Delta_{i}$, and $\eta(d_i)$ as
\begin{equation*}
\eta(d_i) =\sum_{s=1}^{d_i} \frac{C(s)}{2^{4d_i-4s}N(s)}.
\end{equation*}

\begin{lemma} \label{lemma2Ri}
If $\epsilon(d_i) \geq \Delta_i/\alpha $ for some constant $\alpha > 1$, then, the following holds,
\begin{equation*}
R_{v,i}(\tau) \leq   \left\{
\begin{aligned}
&\frac{3\alpha^2 \log((8K\log_4 T)/\delta) }{ \Delta_{i}},  &  \quad i \in  \cB_v(\tau), \\
&\frac{12\alpha^2 \widetilde{K}\log((8K\log_4 T)/\delta) }{ \left| \cA_v(\tau)\right| \Delta_{i}},  &  \quad i \in  \cA_v(\tau) .
\end{aligned}
\right.
\end{equation*}
\end{lemma}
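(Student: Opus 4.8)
The plan is to prove the bound by direct computation: expand $R_{v,i}(\tau) = N_{v,i}(\tau)\Delta_i = p_{v,i}(\tau)N(\tau)\Delta_i$, substitute the epoch length $N(\tau) = 3\widetilde{K}\log((8K\log_4 T)/\delta)/\epsilon^2(\tau)$ (line \ref{def:Ntau}) and the sampling probability from \eqref{pullprob}, and in each of the two cases collapse the resulting error level onto $\epsilon(d_i)$ using Lemma \ref{property} before invoking the hypothesis $\epsilon(d_i) \geq \Delta_i/\alpha$.

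First I would handle the bad-arm case $i \in \cB_v(\tau)$. Here $p_{v,i}(\tau) = \epsilon^2(\tau)/(\epsilon_i^2(\tau)\widetilde{K})$, so the $\epsilon^2(\tau)$ and $\widetilde{K}$ factors cancel against those in $N(\tau)$, leaving $N_{v,i}(\tau) = 3\log((8K\log_4 T)/\delta)/\epsilon_i^2(\tau)$. By Lemma \ref{property}(\ref{propii}), $\epsilon_i(\tau) = \epsilon(d_i)$ for $i \in \cB(\tau)$, hence $N_{v,i}(\tau) = 3\log((8K\log_4 T)/\delta)/\epsilon^2(d_i)$, and the hypothesis $\epsilon(d_i) \geq \Delta_i/\alpha$ gives $N_{v,i}(\tau) \leq 3\alpha^2\log((8K\log_4 T)/\delta)/\Delta_i^2$; multiplying by $\Delta_i$ yields the claimed bound for this case.

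Next I would handle the active-arm case $i \in \cA_v(\tau)$. From \eqref{pullprob}, $p_{v,i}(\tau) = \bigl(1 - \sum_{j \in \cB_v(\tau)}p_{v,j}(\tau)\bigr)/|\cA_v(\tau)| \leq 1/|\cA_v(\tau)|$, using only that the parenthesized factor is at most $1$ (it is in fact at least $3/4$ by \eqref{boundpj}, but only the upper bound is needed). Thus $N_{v,i}(\tau) \leq 3\widetilde{K}\log((8K\log_4 T)/\delta)/(|\cA_v(\tau)|\epsilon^2(\tau))$. By Lemma \ref{property}(\ref{propi}), $\epsilon(\tau) = \epsilon(d_i)/2$, so $1/\epsilon^2(\tau) = 4/\epsilon^2(d_i)$, and then $\epsilon(d_i) \geq \Delta_i/\alpha$ gives $N_{v,i}(\tau) \leq 12\alpha^2\widetilde{K}\log((8K\log_4 T)/\delta)/(|\cA_v(\tau)|\Delta_i^2)$; multiplying by $\Delta_i$ completes this case and hence the lemma.

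Since every step is either a substitution or an application of an already-established algorithm property, I do not expect a genuine obstacle here. The only points needing care are: (i) using the correct identification of $\epsilon_i(\tau)$ (for bad arms) and $\epsilon(\tau)$ (for active arms) with $\epsilon(d_i)$, which is exactly what Lemma \ref{property} supplies; and (ii) noting that the statement is only meaningful when $\Delta_i > 0$, since otherwise $R_{v,i}(\tau) = 0$ and the bound is vacuous.
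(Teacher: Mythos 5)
Your proposal is correct and follows essentially the same route as the paper: in both cases one expands $R_{v,i}(\tau)=p_{v,i}(\tau)N(\tau)\Delta_i$, bounds the active-arm probability by $1/|\cA_v(\tau)|$, identifies $\epsilon_i(\tau)=\epsilon(d_i)$ (bad arms) or $\epsilon(\tau)=\epsilon(d_i)/2$ (active arms) via Lemma \ref{property}, and then invokes $\epsilon(d_i)\geq \Delta_i/\alpha$, yielding exactly the paper's constants $3\alpha^2$ and $12\alpha^2$.
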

\begin{proof}
Note that based on the construction of our algorithm, $i \in \cA_v(\tau)$ must imply that $i \in \cA(\tau)$. This argument is also applicable for $i \in \cB_v(\tau)$.
For $i \in \cA(\tau)$, we use \eqref{propi} of Lemma \ref{property} to get $\epsilon(\tau)=\epsilon(d_i)/2$. Thus, in the case of $i \in \cA(\tau)$, $R_{v,i}(\tau)$ is upper-bounded as
\begin{equation} \label{Case1RiA}
R_{v,i}(\tau)=N_{v,i}(\tau) \Delta_{i} \leq  \frac{3 \widetilde{K}\log((8K\log_4 T)/\delta) }{\left|\cA_v(\tau)  \right|\epsilon^2(\tau) }   \Delta_{i} \leq  \frac{ 12 \alpha^2  \widetilde{K}\log((8K\log_4 T)/\delta) }{\left|\cA_v(\tau)  \right| \Delta_{i}} .
\end{equation}

For $i \in \cB_v(\tau)$ (thereby $i \in \cB(\tau)$), we use \eqref{propii} of Lemma \ref{property} to get $\epsilon_i(\tau)=\epsilon(d_i)$, and then
\begin{equation} \label{Case1RiB}
R_{v,i}(\tau)=N_{v,i}(\tau) \Delta_{i} \leq  \frac{3\log((8K\log_4 T)/\delta) }{\epsilon_i^2(\tau) }   \Delta_{i} \leq  \frac{3\alpha^2 \log((8K\log_4 T)/\delta) }{ \Delta_{i}}  .
\end{equation}
\end{proof}

\textbf{Lemma \ref{lemma2Delta} (restated).}
If $\Delta_{i} > 32\eta(d_i)$, then, with probability at least $1-\delta$, $\epsilon(d_i) \geq \Delta_i/32$.

\begin{proof}
Since the following analysis is based on the fixed arm $i$, agent $v$ and epoch $\tau$, $d_i$ is the last epoch up to epoch $\tau$ such that arm $i$ holds \eqref{check1} unless \textit{an exception} that $d_i=1$. Recall that the algorithm initializes $d_i=1$ and $i \in \cA(\tau=1)$ for all arms; then it is entirely possible that arm $i$ may not hold \eqref{check1} for $d_i=\tau=1$.
Here, we first show that when $d_i=1$, the desired result $\epsilon(d_i=1) =1/14 \geq \Delta_i/32$ trivially holds because $\Delta_i \leq 1$ implies that $\Delta_i/32 \leq 1/32 < 1/14$. Then, one can bound $\epsilon(d_i)$ by considering the following three cases for $d_i \geq 2$, and thus arm $i$ must hold \eqref{check1} in epoch $d_i $.

\textbf{Case 1: $i^* \in \cA(d_i) \cup \mathcal{H}(d_i)$.} As $d_i$ is the last epoch up to epoch $\tau$ such that arm $i$ holds \eqref{check1} (see line \ref{algDeact} of Algorithm \ref{alg1}), the following holds
\begin{equation}  \label{similar3}
\begin{split}
14\epsilon(d_i)  \geq &  \max\limits_{j \in \cA(d_i) \cup \mathcal{H}(d_i)} \left\{ \hat{\mu}_{j}(d_i) +2\epsilon_{j}(d_i) \right\} - \hat{\mu}_{i}(d_i)  \\ 
\overset{(a)}{\geq} &  \hat{\mu}_{i^*}(d_i)+2\epsilon_{i^*}(d_i)   - \hat{\mu}_{i}(d_i) \\
\overset{(b)}{\geq} & \Delta_i - \frac{2C(d_i)}{N(d_i)}- 2\epsilon_{i}(d_i)- \frac{2C(d_i)}{N(d_i)} \\
\overset{(c)}{\geq}  & \Delta_i - \frac{4C(d_i)}{N(d_i)}- 14\epsilon(d_i) \\
\overset{(d)}{\geq}  & \Delta_i - \frac{\Delta_i}{8}- 14\epsilon(d_i) ,
\end{split}
\end{equation}
where:
\begin{itemize}
\item The inequality (a) is due to for $i^* \in \cA(d_i) \cup \mathcal{H}(d_i)$, we have that
\begin{equation*}
 \max\limits_{j \in \cA(\tau) \cup \mathcal{H}(\tau)} \left\{ \hat{\mu}_{j}(\tau) +2\epsilon_{j}(\tau) \right\}  \geq \hat{\mu}_{i^*}(d_i)+2\epsilon_{i^*}(d_i) . \end{equation*}
\item The inequality (b) follows Lemma \ref{Tmpconcentration2All}. 
\item The inequality (c) comes from \eqref{fact1} of Lemma \ref{property}. 
\item The inequality (d) holds because
\begin{equation} \label{etabound}
\Delta_i > 32\eta(d_i)  = \sum_{s=1}^{d_i} \frac{32C(s)}{2^{4d_i-4s}N(s)} \geq  \frac{32C(d_i)}{N(d_i)}.
\end{equation}
\end{itemize}

From \eqref{similar3}, we have $\epsilon(d_i) \geq \Delta_i/32$.

\textbf{Case 2: $8\epsilon_{i^*}(d_i) \leq \Delta_i$ and $i^* \in \cB(d_i) \backslash \mathcal{H}(d_i)$.} 
For $i^* \in \cB(d_i) \backslash \mathcal{H}(d_i)$, we have
$
\max\limits_{j \in \cA(d_i)} \hat{\mu}_{j}(d_i) \geq 4\epsilon(d_{i^*}) +\hat{\mu}_{i^*}(d_i)
$ as $i^*$ contradicts the reactivation condition (see line \ref{algReact} of Algorithm \ref{alg1}). Further, we have
\begin{equation} \label{fact2}
 \max\limits_{j \in \cA(d_i) \cup \mathcal{H}(d_i)} \{ \hat{\mu}_{j}(d_i)+2\epsilon_{j}(d_i)  \} >  \max\limits_{j \in \cA(d_i) } \hat{\mu}_{j}(d_i)  \geq  4\epsilon(d_{i^*}) +\hat{\mu}_{i^*}(d_i) =4\epsilon_{i^*}(d_i) +\hat{\mu}_{i^*}(d_i) .
\end{equation}

Note that the last inequality in \eqref{fact2} comes from \eqref{propii} of Lemma \ref{property} (as $i^* \in \cB(d_i) \backslash \mathcal{H}(d_i)$). Again, as arm $i$ holds \eqref{check1}, we have
\begin{equation} \label{inotinA}
\begin{split}
14\epsilon(d_i)  \geq &  \max\limits_{j \in \cA(d_i) \cup \mathcal{H}(d_i)} \{ \hat{\mu}_{j}(d_i)+2\epsilon_{j}(d_i)  \} - \hat{\mu}_{i}(d_i) \\
> &  \hat{\mu}_{i^*}(d_i) +4\epsilon_{i^*}(d_i)- \hat{\mu}_{i}(d_i) \\
\geq &\mu_*+2\epsilon_{i^*}(d_i) -\frac{2C(d_i)}{N(d_i)}- \mu_i - 14\epsilon(d_i)- \frac{2C(d_i)}{N(d_i)} \\
\geq &\Delta_i - 14\epsilon(d_i)-\frac{\Delta_i}{8},
\end{split}
\end{equation}
where the second inequality uses \eqref{fact2}; the third inequality follows Lemma \ref{Tmpconcentration2All} and \eqref{fact1} of Lemma \ref{property}; the last inequality uses $\epsilon_{i^*}(d_i) \geq 0$ and \eqref{etabound}. Then, we get $\epsilon(d_i) \geq \Delta_i /32$

\textbf{Case 3: $8\epsilon_{i^*}(d_i) > \Delta_i$ and $i^* \in \cB(d_i) \backslash \mathcal{H}(d_i)$.} 
As $i^* \in \cB(d_i) \backslash \mathcal{H}(d_i)$, $i^*$ contradicts the reactivation condition, which implies that
\begin{equation*} 
\begin{split}
4\epsilon(d_{i^*}) \leq & \max_{j \in \cA(d_i)} \hat{\mu}_{j}(d_i) - \hat{\mu}_{i^*}(d_i)     \\
\overset{(a)}{\leq} & \max_{j \in \cA(d_i)}\left\{ \mu_j +2\epsilon_j(d_i)\right\}+ \frac{2C(d_i) }{N(d_i)} -\mu_{i^*}+2\epsilon_{i^*}(d_i)+ \frac{2C(d_i)}{N(d_i)}\\
\overset{(b)}{\leq}  &  \frac{4C(d_i) }{N(d_i)}+2\epsilon(d_i) +2\epsilon_{i^*}(d_i)\\
\overset{(c)}{\leq}  &  \frac{4C(d_i) }{N(d_i)}+3\epsilon_{i^*}(d_i) \\
\overset{(d)}{\leq}  & \frac{\Delta_i}{8}+3\epsilon_{i^*}(d_i),
\end{split}
\end{equation*}
\begin{itemize}
\item The inequality (a) follows Lemma \ref{Tmpconcentration2All}. 
\item The inequality (b) is due to 
$
\max_{j \in \cA(d_i)}\left\{ \mu_j \right\}\leq \max_{j \in [K]}\left\{ \mu_j \right\} =\mu_{i^*}.
$
\item The inequality (c) follows \eqref{propiii} of Lemma \ref{property} that $ \epsilon(d_i) \leq \epsilon_{i^*}(d_i)/2 $ for $i^* \in \cB(d_i) \backslash \mathcal{H}(d_i)$. 
\item The inequality (d) holds because of \eqref{etabound}.
\end{itemize}

Note that as $i^* \in \cB(d_i)$, we have $\epsilon(d_{i^*}) =\epsilon_{i^*}(d_{i})$ (Recall \eqref{propii} of Lemma \ref{property}).
Thus, we get $8\epsilon_{i^*}(d_i) \leq \Delta_i$, which contradicts to $8\epsilon_{i^*}(d_i) > \Delta_i$.

\end{proof}

\subsection{Proof of regret bound}\label{summationformReg}
By Lemma \ref{Tmpconcentration2All}, the regret, with probability at least $1-\delta$, is upper-bounded as
\begin{equation*}
R_T  =   \sum_{\tau=1}^L  \sum_{v=1}^V \sum_{i \neq i^* , i \in \mathcal{K}_v(\tau)} \tilde{N}_{v,i}(\tau) \Delta_{i} \\
\leq   \sum_{\tau=1}^L \sum_{v=1}^V   \sum_{i \neq i^* , i \in \mathcal{K}_v(\tau) } 3N_{v,i}(\tau)  \Delta_{i}  
=  3R(T,\cA)  + 3R(T,\cB) ,
\end{equation*}
where
\begin{equation*}
R(T,\cA)= \sum_{\tau=1}^L \sum_{v=1}^V\sum_{i \in \cA_v(\tau)  , i \neq i^*}  R_{v,i}(\tau), \ \text{and} \
R(T,\cB)= \sum_{\tau=1}^L \sum_{v=1}^V \sum_{i \in \cB_v(\tau), i \neq i^*}  R_{v,i}(\tau)  .
\end{equation*}

Note that we slightly abuse the notation of $L$, and $L$ here is the maximum epoch that the algorithm proceeds up to $T$.

\subsection*{Bounding $R(T,\cA)$}\label{boundRTA}
\textbf{Case 1:} $\Delta_{i} \leq 32 \eta(d_i)$. We use Lemma \ref{property} \eqref{propi} to get $\epsilon(d_i)=2\epsilon(\tau)$ and then $R_{v,i}(\tau)$ in this case is upper-bounded as,
\begin{equation*}
\begin{split}
R_{v,i}(\tau) \leq  &  \frac{96\widetilde{K}\log((8K\log_4 T)/\delta) }{\left|\cA_v(\tau)  \right|\epsilon^2(\tau) } \eta(d_i) \\
= &\frac{384\widetilde{K}\log((8K\log_4 T)/\delta) }{\left|\cA_v(\tau)  \right| \epsilon^2(d_i) }  \sum_{s=1}^{d_i} \frac{ C(s)}{  2^{4d_i-4s} \left(\frac{3\widetilde{K}\log((8K \log_4 T)/\delta)}{\epsilon^2(s)} \right)}  \\
= &\frac{128}{\left|\cA_v(\tau)  \right|}  \sum_{s=1}^{d_i} \frac{C(s)2^{2d_i-2s} }{  2^{4d_i-4s}}  .
\end{split}
\end{equation*}

Then, the regret under such a case is upper-bounded as,
\begin{equation*}
\begin{split}
 R(T,\cA) \leq & 128  \sum_{\tau=1}^L    \sum_{v=1}^V \sum_{i \in \cA_v(\tau) , i \neq i^*}\frac{1 }{\left|\cA_v(\tau)  \right|} \sum_{s=1}^{d_i} \frac{  C(s)2^{2d_i-2s} }{  2^{4d_i-4s}} \\
\leq &128 \sum_{s=1}^L \sum_{v=1}^V  C(s) \sum_{i \in \cA_v(s) , i \neq i^*}\frac{1 }{\left|\cA_v(s)  \right|} \sum_{d_i=s}^{L}   \frac{2^{2d_i-2s} }{  2^{4d_i-4s}} \\
\leq & 128 \sum_{s=1}^L \sum_{v=1}^V   C(s) \sum_{i \in \cA_v(s) , i \neq i^*}\frac{1 }{\left|\cA_v(s)  \right|} \sum_{q=1}^{\infty} 4^{-q}\\
\leq &128  \sum_{s=1}^L  \sum_{v=1}^V  C(s)\\
\leq&128 VC .
\end{split}
\end{equation*}

\textbf{Case 2: $\Delta_{i} > 32 \eta(d_i)$.} As $V \leq K$, we have that $\widetilde{K}=\lceil K/V \rceil+1 <K/V +2 \leq 3K/V$. Using Lemma \ref{lemma2Ri} and Lemma \ref{lemma2Delta}, we upper-bound $R(T,\cA)$ as
\begin{equation*}
R(T,\cA) \leq \sum_{\tau=1}^L\sum_{v=1}^V  \sum_{i \in \cA_v(\tau) , i \neq i^*}  \frac{12\cdot 32^2 \widetilde{K} \log((8VK\log_4 T)/\delta) }{ \left| \cA_v(\tau)\right| \Delta_{i}}.
\end{equation*}


\subsection*{Bounding $ R(T,\cB)$}
\textbf{Case 1:} $\Delta_{i} \leq 32 \eta(d_i)$. For $i \in \cB(\tau)$, we use Lemma \ref{property} \eqref{propii} to get $\epsilon(d_i)=\epsilon_i(\tau)$. Then, $R_{v,i}(\tau)$ is upper-bounded as,
\begin{equation*}
\begin{split}
R_{v,i}(\tau) \leq  \frac{96\log((8K\log_4 T)/\delta) }{\epsilon_i^2(\tau) } \eta(d_i) \leq \frac{32}{\widetilde{K}}  \sum_{s=1}^{d_i} \frac{ C(s)2^{2d_i-2s} }{  2^{4d_i-4s}}  .
\end{split}
\end{equation*}
Then, $R(T,\cB)$ in this case is upper-bounded as,
\begin{equation*}
\begin{split}
 R(T,\cB) \leq & \frac{32 }{\widetilde{K}} \sum_{\tau=1}^L\sum_{v=1}^V  \sum_{i \in \cB_v(\tau) , i \neq i^*}  \sum_{s=1}^{d_i} \frac{ C(s)2^{2\tau-2s} }{  2^{4\tau-4s}} 
\leq 32 \sum_{s=1}^L \sum_{v=1}^V   C(s) \sum_{q=1}^{\infty} 4^{-q} \leq 32 VC.
\end{split}
\end{equation*}

\textbf{Case 2: $\Delta_{i} > 32 \eta(d_i)$. }
By Lemma \ref{lemma2Ri} and Lemma \ref{lemma2Delta}, $R(T,\cB)$ is upper-bounded as
\begin{equation*}
 R(T,\cB)  \leq  \sum_{\tau=1}^L  \sum_{v=1}^V  \sum_{i \in \cB_v(\tau) , i \neq i^*}   \frac{3\cdot 32^2 \log((8K\log_4 T)/\delta) }{\Delta_{i}} .
\end{equation*}

Combing the $ R(T,\cA)$ and $ R(T,\cB)$, we get the regret $R_T$ as
\begin{equation*}
 O\left(VC+ \log((VK\log T)/\delta)  \sum_{\tau=1}^L \left( \sum_{v=1}^V\frac{ \widetilde{K} }{  \left| \cA_v(\tau)\right| }  \sum_{i \in \cA_v(\tau) , i \neq i^*}   \frac{ 1 }{\Delta_{i}} + \sum_{v=1}^V \sum_{i \in \cB_v(\tau) , i \neq i^*}   \frac{  1}{ \Delta_{i}}  \right)   \right).
\end{equation*}

\subsection{Concise regret bound}
We here give a concise regret bound by lower-bounding $\Delta_i \geq \Delta_{\min}$. For 
$\Delta_{i} \leq 32 \eta(d_i)$, both $ R(T,\cA)$ and $ R(T,\cB)$ are bounded by $O(VC)$. As a consequence, we only need to refine $ R(T,\cA)$ and $ R(T,\cB)$ for the case $\Delta_{i} > 32 \eta(d_i)$.

Let first refine $ R(T,\cA)$ for $\Delta_{i} > 32 \eta(d_i)$. As $V \leq K$, we have $\widetilde{K}=\lceil K/V \rceil+1 <K/V +2 \leq 3K/V$, we upper-bound $R(T,\cA)$ as
\begin{equation*}
R(T,\cA) \leq   
\sum_{\tau=1}^L \sum_{v=1}^V \sum_{i \in \cA_v(\tau) , i \neq i^*}   \frac{36\cdot 32^2 K\log((8K\log_4 T)/\delta) }{V \left| \cA_v(\tau)\right| \Delta_{i}}  .
\end{equation*}

Similarly, by $\Delta_i \geq \Delta_{\min}$, $ R(T,\cA)$ is upper-bounded as
\begin{equation*}
R(T,\cA) \leq    \frac{24\cdot 32^2 K \log_4 T \log((8K\log_4 T)/\delta) }{ \Delta_{\min}}.
\end{equation*}

Then, we refine $ R(T,\cB)$ as
\begin{equation*}
\begin{split}
 R(T,\cB)  \leq  & \sum_{\tau=1}^L  \sum_{v=1}^V  \sum_{i \in \cB_v(\tau) , i \neq i^*}   \frac{3\cdot 32^2 \log((8K\log_4 T)/\delta) }{\Delta_{\min}}\\
 \leq  & \sum_{\tau=1}^L  \sum_{v=1}^V  \frac{3\cdot 32^2 \widetilde{K} \log((8K\log_4 T)/\delta) }{\Delta_{\min}}  \\
= &  \frac{9\cdot 32^2 K\log_4 T \log((8K\log_4 T)/\delta) }{\Delta_{\min}},
\end{split}
\end{equation*}
where the second inequality is due to  $\sum_{i \in \cB_v(\tau) , i \neq i^*} 1  < |\mathcal{K}_v|=\widetilde{K}$, and the last inequality, again, uses the fact that $\widetilde{K}=\lceil K/V \rceil+1 <K/V +2 \leq 3K/V$ for $V \leq K$.

\subsection{Expected regret}
For the proof of expected regret, we set $\delta = 1/T$, and then, one can trivially bound the regret by $O(T)$ with the failure probability $1/T$. Thus, this part yields the expected regret as $O(1)$, and the expected regret from the remaining probability at least $1-1/T$ can be bounded by using the result of $R_T$. Thus, the expected
regret is bounded by $
 O(V\mathbb{E}[C]+\frac{ K \log T \log(VT) }{  \Delta_{\min}}  ).
$

\section{Discussion} 
\subsection{Proof of Theorem \ref{ConnectRegret}}

\textbf{Theorem \ref{ConnectRegret} (restated).}
For any algorithm of $V \geq 1$ agents, with probability at least $1-1/VT$, we have that
$
R_T  =O(\overline{R}_T+\mathbb{E} \left[ C\right] + V \log(VT) ).$
Then, $\mathbb{E}[R_T] = \Theta(\overline{R}_T  +\mathbb{E} \left[ C\right] )$ and $\mathbb{E}[R_T] = \Theta(   \mathbb{E}[R'_T]+\mathbb{E} \left[ C\right] )$ hold for the two-armed bandit instance.

\begin{proof}
We first show $
R_T  =O(\overline{R}_T+\mathbb{E} \left[ C\right] + V \log(VT) )$. Recall the definition of $R_T$, and it can be rewritten as
\begin{equation*} 
\begin{split}
R_T  =   \sum_{t=1}^T \sum_{v=1}^V  \left( \mu_{i^*}-\mu_{i_v(t)}\right) = \sum_{t=1}^T \sum_{v=1}^V  \Delta_i \mathbb{I}\left\{i_v(t)=i \right\}.
\end{split}
\end{equation*}

By constructing a martingale $R_T- \mathbb{E}[R_T]$, $D_v(t)=
\left\{  \Delta_{i} (\mathbb{I}\{i_v(t)=i\}- \mathbb{P}\left( i_v(t)=i \right))\right\}_{t=0}^{\infty}
$ is the martingale difference sequence. Applying Freedman's inequality, the following holds with probability at least $1-1/VT$,
\begin{equation*}
\begin{split}
R_T - \mathbb{E}[R_T] \leq &\sum_{v=1}^V \left( \log(VT) + \sum_{t=1}^T \mathbb{E}\left[D_v(t)^2 \mid \mathcal{F}_{t-1} \right]  \right)\\
 \leq &V \log(VT) +\sum_{v=1}^V \sum_{t=1}^T    \Delta_{i} \mathrm{Var} \left( \mathbb{I}\{i_v(t)=i\}\right)\\
 \leq &V \log(VT) + \sum_{v=1}^V  \sum_{t=1}^T \Delta_{i} \mathbb{P}\left( i_v(t)=i \right) \\
 \leq &V \log(VT) +\mathbb{E}[R_T],
\end{split}
\end{equation*}

After some simple algebra, the following holds
\begin{equation}\label{reg2expectedreg}
\begin{split}
R_T  \leq V \log(VT) +2\mathbb{E}[R_T].
\end{split}
\end{equation}

Recall $\overline{R}_T$ in \cite{JMLRTsallis} is given as follows. (here we write it in a generic $V$-agent form, and if we set $V=1$, $\overline{R}_T$ is the same the one in \cite{JMLRTsallis}.)
\begin{equation*}
\overline{R}_T =  \max_{i } \mathbb{E} \left[ \sum_{t=1}^T  \sum_{v=1}^V \left( r_{v,i}(t)- r_{i_v(t)}(t)  \right) \right]. 
\end{equation*}

The regret $\overline{R}_T$ is further lower-bounded as
\begin{equation}\label{selfbounding}
\begin{split}
\overline{R}_T = & \max_{i } \mathbb{E} \left[ \sum_{t=1}^T  \sum_{v=1}^V \left( r_{v,i}(t)- r_{i_v(t)}(t)  \right) \right] \\
\geq  & \mathbb{E} \left[ \sum_{t=1}^T  \sum_{v=1}^V \left( r_{v,i^*}(t)- r_{i_v(t)}(t)  \right) \right] \\
\geq  & \mathbb{E} \left[ \sum_{t=1}^T  \sum_{v=1}^V \left( r^S_{v,i^*}(t)- r^S_{i_v(t)}(t)  \right) \right]
+\mathbb{E} \left[ \sum_{t=1}^T  \sum_{v=1}^V \left(r^S_{i_v(t)}(t)- r_{i_v(t)}(t)  \right) \right]\\
&+\mathbb{E} \left[ \sum_{t=1}^T  \sum_{v=1}^V \left( r_{v,i^*}(t)- r^S_{v,i^*}(t)  \right) \right]\\
\geq  & \mathbb{E} \left[ \sum_{t=1}^T  \sum_{v=1}^V \left( r^S_{v,i^*}(t)- r^S_{i_v(t)}(t)  \right) \right]
-2\mathbb{E} \left[ C\right] \\
=  & \mathbb{E} \left[ R_T  \right]
-2\mathbb{E} \left[ C\right] ,
\end{split}
\end{equation}
which immediately leads to $\mathbb{E}[R_T] \leq \overline{R}_T  +2 \mathbb{E} \left[ C\right] $. Combing the above results and \eqref{reg2expectedreg}, then,
\begin{equation*} 
R_T  \leq   2\left( \overline{R}_T+2\mathbb{E} \left[ C\right]  \right) + V \log(VT) .
\end{equation*}

Recall Theorem \ref{theorem2} that $ \mathbb{E}[R_T]=
\Omega (  \mathbb{E}[C] + \mathbb{E}[R'_T])
$ in the two-armed instance. Since $\overline{R}_T \leq \mathbb{E}[R'_T]$ and $\overline{R}_T =\mathbb{E}[R'_T]$ iff adversary is oblivious, i.e., the adversary fixes his/her strategy for all rounds, and the strategy is independent of agents' choices \cite{BanditBook1}, we have that $ \mathbb{E}[R_T]=
\Omega (  \mathbb{E}[C] + \overline{R}_T)
$. Thus, from \eqref{selfbounding} and $ \mathbb{E}[R_T]=
\Omega (  \mathbb{E}[C] + \overline{R}_T)
$, $ \mathbb{E}[R_T]=
\Theta (  \mathbb{E}[C] + \overline{R}_T)
$ holds.

From \eqref{selfbounding} and the fact $\overline{R}_T \leq \mathbb{E}[R'_T]$, it can be seen have that $\mathbb{E}[R_T] \leq \mathbb{E}[R'_T]  +2 \mathbb{E} \left[ C\right] $.
Again, using the result $ \mathbb{E}[R_T]=
\Omega (  \mathbb{E}[C] + \mathbb{E}[R'_T])
$ in Theorem \ref{theorem2}, we conclude that $ \mathbb{E}[R_T]=
\Theta (  \mathbb{E}[C] +  \mathbb{E}[R'_T])
$.

\end{proof}

\subsection{Discussion of regret in \cite{JMLRTsallis} and \cite{ImprovedTsallis}}  \label{discussofTsallis}
In \cite{JMLRTsallis} and \cite{ImprovedTsallis}, authors give the upper bound of $\mathbb{E}[R'_T]$. We here convert $\mathbb{E}[R'_T]$ to $\mathbb{E}[R_T]$.
From Corollary 8 in \cite{JMLRTsallis} and \eqref{selfbounding} in Theorem \ref{ConnectRegret}, $\mathbb{E}[R_T]$ with $V=1$ is bounded by
\begin{equation*} 
\begin{split}
\mathbb{E}[R_T]  \leq 2\left( \sum_{i\neq i^*} \frac{\log T}{\Delta_i}  +\sqrt{ \sum_{i\neq i^*} \frac{\log T}{\Delta_i} \mathbb{E} \left[ C\right]}  +2\mathbb{E} \left[ C\right]  \right)  .
\end{split}
\end{equation*}

Note that the expectation over $C$ cannot be dropped, as the corruption $C$ is a random variable that depends on
the randomization of stochastic rewards and the choices of agents.
By a similar method, the $\mathbb{E}[R_T]$ of \cite{ImprovedTsallis} is:
\begin{equation*} 
\begin{split}
\mathbb{E}[R_T]  \leq \left( \sum_{i\neq i^*} \frac{1}{\Delta_i} \log \left( \frac{KT}{ \sum_{i\neq i^*} \frac{1}{\Delta_i}}   \right) +\sqrt{ \sum_{i\neq i^*} \frac{1}{\Delta_i} \log \left( \frac{KT}{ \sum_{i\neq i^*} \frac{1}{\Delta_i}}   \right)  \mathbb{E} \left[ C\right]}  +2\mathbb{E} \left[ C\right]  \right).
\end{split}
\end{equation*}

As mentioned in \cite{ImprovedTsallis}, for $C=\Theta(\frac{TK}{\sum_{i\neq i^*} \frac{\log T}{\Delta_i}})$, adversarial pseudo-regret $ \overline{R}_T$ is improved by a multiplicative factor of $\sqrt{\frac{\log T}{\log \log T}}$. However, this implication does not hold for $\mathbb{E}[R_T]$ due to the additive term of corruption is unavoidable (see lower bound in Theorem \ref{theorem2}). When $C=\Theta(\frac{TK}{\sum_{i\neq i^*} \frac{\log T}{\Delta_i}})$, the expected regret grows linearly.

\end{document}